\newtheorem{lemma}{Lemma}
\newtheorem{corollary}{Corollary}
\newtheorem{definition}{Definition}
\newtheorem{assumption}{Assumption}
\DeclareMathOperator*{\argmin}{arg\,min}
\newtheoremstyle{eventtheoremstyle}%
  {5pt}
  {3pt}
  {\itshape}
  {}
  {\bfseries}
  {:}
  {.8em}
  {\thmname{#1} #3}
\theoremstyle{eventtheoremstyle}
\newcommand*\circled[1]{\tikz[baseline=(char.base)]{
                \node[shape=circle,draw,inner sep=1pt] (char) {\scriptsize #1};}}
\author{
        Francesco Emanuele Stradi\\
        Politecnico di Milano\\
	\texttt{francescoemanuele.stradi@polimi.it}
        \And
        Matteo Castiglioni\\
	Politecnico di Milano\\
	\texttt{matteo.castiglioni@polimi.it}
	\And
	Alberto Marchesi\\
	Politecnico di Milano\\
	\texttt{alberto.marchesi@polimi.it}
	\And
	Nicola Gatti\\
	Politecnico di Milano\\
	\texttt{nicola.gatti@polimi.it}
}
\title{Learning Adversarial MDPs with Stochastic Hard Constraints}
\begin{document}

\maketitle

\begin{abstract}
We study online learning in \emph{constrained Markov decision processes} (CMDPs) with \emph{adversarial losses} and \emph{stochastic hard constraints}, under \emph{bandit} feedback.
We consider three scenarios.
In the first one, we address general CMDPs, where we design an algorithm attaining sublinear regret and cumulative \emph{positive constraints violation}.
In the second scenario, under the mild assumption that a policy strictly satisfying the constraints exists and is known to the learner, we design an algorithm that achieves sublinear regret while ensuring that constraints are satisfied \emph{at every episode} with high probability.
In the last scenario, we only assume the \emph{existence} of a strictly feasible policy, which is \emph{not} known to the learner, and we design an algorithm attaining sublinear regret and \emph{constant} cumulative positive constraints violation.
Finally, we show that in the last two scenarios, a dependence on the Slater's parameter is unavoidable. 
To the best of our knowledge, our work is the first to study CMDPs involving both adversarial losses and hard constraints.
%
%
Thus, our algorithms can deal with general non-stationary environments subject to requirements much stricter than those manageable with existing ones, enabling their adoption in a much wider range of applications.
%
\end{abstract}




\section{Introduction}\label{introduction}

Reinforcement learning~\citep{sutton2018reinforcement} studies problems where a learner sequentially takes actions in an environment modeled as a \emph{Markov decision process} (MDP)~\citep{puterman2014markov}.
Most of the algorithms for such problems focus on learning policies that prescribe the learner how to take actions so as to minimize losses (equivalently, maximize rewards).
%
%
However, in many real-world applications, the learner must fulfill additional requirements.
For instance, autonomous vehicles must avoid crashing~\citep{wen2020safe,isele2018safe}, bidding agents in ad auctions must not deplete their budget~\citep{wu2018budget,he2021unified}, and recommendation systems must not present offending items to their users~\citep{singh2020building}.
A commonly-used model that allows to capture such additional requirements is the \emph{constrained} MDP (CMDP)~\citep{Altman1999ConstrainedMD}, where the goal is to learn a loss-minimizing policy while at the same time satisfying some constraints.

We study online learning problems in \emph{episodic} CMDPs with \emph{adversarial losses} and \emph{stochastic hard constraints}, under \emph{bandit} feedback.
In such settings, the goal of the learner is to minimize their \emph{regret}---the difference between their cumulative loss and what they would have obtained by always selecting a best-in-hindsight policy---, while at the same time guaranteeing that the constraints are satisfied during the learning process. 
We consider three scenarios that differ in the way in which constraints are satisfied and are all usually referred to as \emph{hard} constraints settings in the literature~\citep{bounded, oco_hard}.
In the first scenario, the learner attains \emph{sublinear} cumulative \emph{positive} constraints violation.
In the second one, the learner satisfies constraints at every episode, while, in the third one, they achieve \emph{constant} cumulative \emph{positive} constraints violation. 

To the best of our knowledge, our work is the first to study CMDPs that involve both adversarial losses and hard constraints.
Indeed, all the works on adversarial CMDPs (see, \emph{e.g.},~\citep{Online_Learning_in_Weakly_Coupled,Upper_Confidence_Primal_Dual}) consider settings with \emph{soft} constraints.
These are much weaker than hard constraints, as they are only concerned with the minimization of the cumulative (both positive and negative) constraints violation.
As a result, they allow negative violations to cancel out positive ones across different episodes.
Such cancellations are unreasonable in real-world applications.
%
For instance, in autonomous driving, avoiding a collision clearly does \emph{not} ``repair'' a crash occurred previously.
Furthermore, the only few works addressing stochastic hard constraints in CMDPs~\citep{bounded,nearoptimal_hard, mullertruly,stradi2024optimal} are restricted to \emph{stochastic losses}.
Thus, our CMDP settings capture many more applications than theirs, since being able to deal with adversarial losses allows to tackle general non-stationary environments, which are ubiquitous in the real world.
We refer to Appendix~\ref{app:related} for a complete discussion on related works.


\subsection{Original contributions}

We start by addressing the first scenario, where we design an algorithm---called Sublinear Violation Optimistic Policy Search (\texttt{SV-OPS})---that guarantees both sublinear regret and sublinear cumulative positive constraints violation.
\texttt{SV-OPS} builds on top of state-of-the-art learning algorithms in adversarial, unconstrained MDPs, by introducing the tools necessary to deal with constraints violation.
%
Specifically, \texttt{SV-OPS} works by selecting policies that \emph{optimistically} satisfy the constraints.
\texttt{SV-OPS} updates the set of such policies in an online fashion, guaranteeing that it is always non-empty with high probability and that it collapses to the (true) set of constraints-satisfying policies as the number of episodes increases.
This allows \texttt{SV-OPS} to attain sublinear violation.
Crucially, even though such an ``optimistic'' set of policies changes during the execution of the algorithm, it always contains the (true) set of constraints-satisfying policies.
This allows \texttt{SV-OPS} to attain sublinear regret.
%
%
\texttt{SV-OPS} also addresses a problem left open by~\citet{Upper_Confidence_Primal_Dual}, \emph{i.e.}, learning with \emph{bandit} feedback in CMDPs with adversarial losses and stochastic constraints.
Indeed, \texttt{SV-OPS} goes even further, as~\citet{Upper_Confidence_Primal_Dual} were only concerned with soft constraints, while \texttt{SV-OPS} is capable of managing \emph{positive} constraints violation. 

%

Next, we switch the attention to the second scenario, where we design a \emph{safe} algorithm, \emph{i.e.}, one that satisfies the constraints at every episode.
To achieve this, we need to assume that the learner has knowledge about a policy strictly satisfying the constraints.
Indeed, this is necessary even in simple stochastic multi-armed bandit settings, as shown in~\citep{safetree}.
This scenario begets considerable additional challenges compared to the first one, since assuring safety extremely limits exploration capabilities, rendering techniques for adversarial, unconstrained MDPs inapplicable.
Nevertheless, we design an algorithm---called Safe Optimistic Policy Search (\texttt{S-OPS})---that attains sublinear regret while being safe with high probability.
\texttt{S-OPS} works by selecting, at each episode, a suitable randomization between the policy that \texttt{SV-OPS} would choose and the (known) policy strictly satisfying the constraints.
%
%
Crucially, the probability defining the randomization employed by the algorithm is carefully chosen in order to \emph{pessimistically} account for constraints satisfaction.
This guarantees that a sufficient amount of exploration is performed.

Then, in the third scenario, we design an algorithm that attains \emph{constant} cumulative positive constraints violation and sublinear regret, by simply assuming that a policy strictly satisfying the constraints exists, but it is \emph{not} known to learner.
Our algorithm---called Constant Violation Optimistic Policy Search (\texttt{CV-OPS})--- estimates such a policy and its associated constraints violation in a constant number of episodes.
%
This is done by employing two no-regret algorithms.
The first one with the objective of minimizing violation, and the second one with the goal of selecting the most violated constraint.
A stopping condition that depends on the guarantees of both the no-regret algorithms enforces that the number of episodes used to estimate the desired policy is sufficient, while still being constant.
After that, \texttt{CV-OPS} runs \texttt{S-OPS} with the estimated policy, attaining the desired results.

Finally, we provide a lower bound showing that any algorithm attaining $o(\sqrt{T})$ violation cannot avoid a dependence on the Slater's parameter in the regret bound.
We believe that this result may be of independent interest, since it is not only applicable to our second and third settings, but also to other settings where a larger violation is allowed.

\section{Preliminaries}
\label{Preliminaries}


\subsection{Constrained Markov decision processes}

We study \emph{episodic constrained} MDPs (CMDPs)~\citep{Altman1999ConstrainedMD} with \emph{adversarial losses} and \emph{stochastic constraints}.
These are tuples $M := \big(X, A, P, \left\{\ell_{t}\right\}_{t=1}^{T}, \left\{G_{t}\right\}_{t=1}^{T}, \alpha \big)$.
	$T$ is the number of episodes.\footnote{We denote an episode by $t\in[T]$, where $[a \dots b]$ is the set of all integers from $a$ to $b$ and $[b] := [1 \dots b]$.}
	$X$ and $A$ are finite state and action spaces.
	$P : X \times A \times X  \to [0, 1]$ is a transition function, where $P (x^{\prime} |x, a)$ denotes the probability of moving from state $x \in X$ to $x^{\prime} \in X$ by taking $a \in A$.\footnote{For ease of notation, we focus w.l.o.g.~on \emph{loop-free} CMDPs. This means that $X$ is partitioned into $L+1$ layers $X_{0}, \dots, X_{L}$ with $X_{0} = \{x_{0}\}$ and $X_{L} = \{x_{L}\}$. Moreover, the loop-free property requires that $P(x^{\prime} |x, a) > 0$ only if $x^\prime \in X_{k+1}$ and $x \in X_k$ for some $k \in [0 \dots L-1]$. Any (episodic) CMDP with horizon $H$ that is \emph{not} loop-free can be cast into a loop-free one by duplicating the state space $H$ times. In loop-free CMDPs, we let $k(x) \in [0 \dots L]$ be the layer index in which state $x \in X$ is.}
	%
	$\left\{\ell_{t}\right\}_{t=1}^{T}$ is the sequence of vectors of losses at each episode, namely $\ell_{t}\in[0,1]^{|X\times A|}$. We refer to the loss for a state-action pair $(x,a) \in X \times A$ as $\ell_t(x,a)$. Losses are adversarial, \emph{i.e.}, no statistical assumption on how they are selected is made.
	$\left\{G_{t}\right\}_{t=1}^{T}$ is the sequence of matrices that define the \emph{costs} characterizing the $m$ constraints at each episode, namely $G_{t}\in[0,1]^{|X\times A|\times m}$. For $i \in [m]$, the $i$-th constraint cost for $(x,a) \in X \times A$ is denoted by $g_{t,i}(x,a)$. Costs are {stochastic}, \emph{i.e.}, the matrices $G_t$ are i.i.d.~random variables distributed according to an (unknown) probability distribution $\mathcal{G}$.
	Finally, $ \alpha = [\alpha_1,\dots, \alpha_m]\in[0,L]^{m}$ is the vector of cost \emph{thresholds} characterizing the $m$ constraints, where $\alpha_i$ denotes the threshold for the $i$-th constraint.
	%
	%

		\begin{algorithm}[!htp]
			\small
			\caption{CMDP Interaction at episode $t \in [T]$}
			\label{alg: Learner-Environment Interaction}
			\begin{algorithmic}[1]
				\State $\ell_t$, $G_t$ chosen \textit{adversarially} and \textit{stochastically}, resp.
				\State Learner chooses a policy $\pi_{t}: X \times A  \to [0, 1]$
				\State Environment is initialized to state $x_{0}$
				\For{$k = 0, \ldots,  L-1$}
				\State Learner takes action $a_{k} \sim \pi_t(\cdot|x_{k})$
				\State Learner sees $\ell_t(x_k,a_k)$, $g_{t,i}(x_k,a_k)  \forall  i\in[m]$
				\State Environment evolves to $x_{k+1}\sim P(\cdot|x_{k},a_{k})$
				\State Learner observes the next state $x_{k+1}$
				\EndFor
			\end{algorithmic}
		\end{algorithm}

The learner uses a \emph{policy} $\pi: X \times A \to [0,1]$, which defines a probability distribution over actions at each state.
We denote by $\pi(\cdot|x)$ the distribution at $x \in X$, with $\pi(a|x)$ being the probability of action $a \in A$.
Algorithm~\ref{alg: Learner-Environment Interaction} details the learner-environment interaction at episode $t \in [T]$, where the learner has \emph{bandit feedback}. 
Specifically, the learner observes the trajectory of state-action pairs $(x_k,a_k)$, for ${k \in [0 \dots L-1]}$, visited during the episode, their losses $\ell_t(x_k,a_k)$, and costs $g_{t,i}(x_k,a_k) $ for $ i\in[m]$.
The learner knows $X$ and $A$, but they do \emph{not} know anything about the transition function $P$.


We introduce the notion of \emph{occupancy measure}~\citep{OnlineStochasticShortest}.
Given a transition function $P$ and a policy $\pi$, the vector $q^{P,\pi} \in [0, 1]^{|X\times A\times X|}$ is the occupancy measure induced by $P$ and $\pi$.
For every $x \in X_k$, $a \in A$, and $x^{\prime} \in X_{k+1}$ with $k \in [0 \ldots L-1]$, it holds
$
	q^{P,\pi}(x,a,x^{\prime}) \coloneqq \mathbb{P}[x_{k}=x, a_{k}=a,x_{k+1}=x^{\prime}|P,\pi]. 
$
Moreover, we also let $
q^{P,\pi}(x,a) \coloneqq \sum_{x^\prime\in X_{k+1}}q^{P,\pi}(x,a,x^{\prime})$ and $ q^{P,\pi}(x) \coloneqq \sum_{a\in A}q^{P,\pi}(x,a).$
%
Next, we define \emph{valid} occupancies.
%
\begin{lemma}[\citet{rosenberg19a}]
	A vector $  q \in [0, 1]^{|X\times A\times X|}$ is a {valid} occupancy measure of an episodic loop-free MDP if and only if it holds:
	\[
	\begin{cases}
		\displaystyle\sum_{x \in X_{k}}\sum_{a\in A}\sum_{x^{\prime} \in X_{k+1}} q(x,a,x^{\prime})=1 \quad  \forall k\in[0\dots L-1]
		\\ 
		\displaystyle\sum_{a\in A}\sum_{x^{\prime} \in X_{k+1}}q(x,a,x^{\prime})=
		\sum_{x^{\prime}\in X_{k-1}} \sum_{a\in A}q(x^{\prime},a,x) \\ \mkern200mu \forall k\in[1\dots L-1], \forall x \in X_{k}  
		\\
		P^{q} = P,
	\end{cases}
	\]
	where $P$ is the transition function of the MDP and $P^q$ is the one induced by $q$ (see below).
\end{lemma}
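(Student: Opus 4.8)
The plan is to prove the two implications separately: first that every occupancy measure $q^{P,\pi}$ satisfies the three displayed conditions, and then that any $q$ satisfying them is realized as $q^{P,\pi^q}$ for an explicitly constructed policy $\pi^q$. Recall that $P^q$ is defined by $P^q(x'|x,a) \coloneqq q(x,a,x')/q(x,a)$ whenever $q(x,a)>0$ (and arbitrarily otherwise).

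\textbf{Necessity.} Suppose $q=q^{P,\pi}$. Each condition is read off the probabilistic definition $q^{P,\pi}(x,a,x')=\mathbb{P}[x_k=x,a_k=a,x_{k+1}=x'\mid P,\pi]$. For the normalization condition, fix a layer $k$: by loop-freeness the process occupies exactly one state-action-state triple in that layer, so the events $\{x_k=x,a_k=a,x_{k+1}=x'\}$ partition the sample space and their probabilities sum to $1$. For the flow-conservation condition, the law of total probability applied to the marginal event $\{x_k=x\}$ expands it both as $\sum_{a,x'}q(x,a,x')$ (conditioning forward) and as $\sum_{x'\in X_{k-1},a}q(x',a,x)$ (conditioning backward on the predecessor), and equating the two gives the constraint. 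Finally, the chain rule yields $q^{P,\pi}(x,a,x')=q^{P,\pi}(x,a)\,P(x'|x,a)$, so $P^q=P$ on the support of $q$, which is all that the definition of $P^q$ requires.

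\textbf{Sufficiency.} Suppose $q$ satisfies the three conditions, and set $q(x,a)\coloneqq\sum_{x'}q(x,a,x')$, $q(x)\coloneqq\sum_a q(x,a)$. Define $\pi^q(a|x)\coloneqq q(x,a)/q(x)$ when $q(x)>0$ and arbitrarily otherwise; flow conservation together with normalization guarantees $q(x)>0$ is consistent with $\pi^q$ being a distribution. I claim $q^{P,\pi^q}=q$, proved by induction on the layer $k$. The base case is $q^{P,\pi^q}(x_0)=1=q(x_0)$, the latter from the normalization condition at $k=0$. For the inductive step, assume $q^{P,\pi^q}(x)=q(x)$ for all $x\in X_k$; then for $x'\in X_{k+1}$,
\[
q^{P,\pi^q}(x')=\sum_{x\in X_k}\sum_{a}q^{P,\pi^q}(x)\,\pi^q(a|x)\,P(x'|x,a)=\sum_{x\in X_k}\sum_a q(x,a)\,P^q(x'|x,a)=\sum_{x\in X_k}\sum_a q(x,a,x'),
\]
using the inductive hypothesis, the definition of $\pi^q$, and $P=P^q$; by the flow-conservation condition at $x'$ this equals $q(x')$. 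Since $q^{P,\pi^q}(x',a',x'')=q^{P,\pi^q}(x')\,\pi^q(a'|x')\,P(x''|x',a'')$ and the same identity holds for $q$ with $P^q$ in place of $P$, matching the state marginals layer by layer forces the full triple-indexed vectors to agree, closing the induction.

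\textbf{Main obstacle.} The only real subtlety is the degenerate states with $q(x)=0$ (equivalently $q(x,a)=0$), where $\pi^q$ and $P^q$ are fixed only by convention. One must check these contribute zero on both sides of every identity above: if $q(x)=0$ then flow conservation forces $q(x,a,x')=0$ for all $a,x'$, while the induction gives $q^{P,\pi^q}(x)=0$, so the arbitrary choices are multiplied by $0$ and drop out. Beyond this, the proof is bookkeeping: one must keep straight which marginal of $q$ appears at each step and invoke $P^q=P$ at exactly the layer where the transition is being expanded.
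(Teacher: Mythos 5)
The paper does not prove this lemma; it is imported verbatim from \citet{rosenberg19a}, so there is no in-paper argument to compare against. Your proof is correct and is the standard argument (the same one used in the cited reference): necessity by reading the three conditions off the probabilistic definition of $q^{P,\pi}$, sufficiency by constructing $\pi^q$ and matching marginals layer by layer via induction, with the degenerate case $q(x)=0$ handled exactly as you describe. The only blemish is a typo in the final display of the sufficiency step, where $P(x''|x',a'')$ should read $P(x''|x',a')$.
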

Notice that any valid occupancy measure $q$ induces a transition function $P^{q}$ and a policy $\pi^{q}$, with $P^{q}(x^{\prime}|x,a)= {q(x,a,x^{\prime})}/{q(x,a)}$ and $\pi^{q}(a|x)={q(x,a)}/{q(x)}$.


\subsection{Online learning with hard constraints}




Our \emph{baseline} for evaluating the performances of the learner is defined through a linear programming formulation of the (offline) learning problem in constrained MDPs.
%
%
Specifically, given a constrained MDP $M:= (X,A,P,\ell,G,\alpha)$ characterized by a loss vector $\ell \in [0,1]^{|X \times A|}$, a cost matrix $G \in [0,1]^{|X \times A| \times m}$, and a threshold vector $\alpha\in[0,L]^{m}$, such a problem consists in finding a policy minimizing the loss while ensuring that all the constraints are satisfied.
Thus, our baseline $\text{OPT}_{ \ell, G,{\alpha}}$ is defined as the optimal value of a parametric linear program, which reads as follows:
%
%
\begin{equation}
	\label{lp:offline_opt}\text{OPT}_{ \ell, G,{\alpha}}:=\begin{cases}
		\min_{  q \in \Delta(M)} &   \ell^{\top}  q \quad \text{s.t.}\\
		& G^{\top}  q \leq  {\alpha} ,
	\end{cases}
\end{equation}
where $ q\in[0,1]^{|X\times A|}$ is a vector encoding an occupancy measure, while $\Delta(M)$ is the set of valid occupancy measures.
Notice that, given the equivalence between policy and occupancy, the (offline) learning problem can be formulated as a linear program working in the space of the occupancy measures $q$, since expected losses and costs are linear in $q$.
%
%




%
As customary in settings with adversarial losses, we measure the performance of an algorithm by comparing it with the \emph{best-in-hindsight constraint-satisfying policy}.
%
%
The performance of the learner is evaluated in terms of the \emph{(cumulative) regret}
%
%
$
	R_{T} :=   \sum_{t=1}^{T}   \ell_{t}^{\top}   q^{P, \pi_{t}} - T \cdot\text{OPT}_{ \overline{\ell}, \overline{G},{\alpha}},
$
where $\overline{\ell}:= \frac{1}{T}\sum_{t=1}^{T}  \ell_{t}  $ is the average of the adversarial losses over the $T$ episodes and $\overline{G}:= \mathbb{E}_{G \sim \mathcal{G}}[G]$ is the expected value of the stochastic cost matrices.
%
%
We let $q^*$ be a best-in-hindsight constraint-satisfying occupancy measure, \emph{i.e.}, one achieving value $\text{OPT}_{\overline{\ell}, \overline{G},{\alpha}}$, while we let $\pi^*$ be its corresponding policy.
Thus, the regret reduces to $R_T:=\sum_{t=1}^T \ell_t^\top (q^{P,\pi_t}-q^*)$. For the ease of notation, we refer to $  q^{P,\pi_{t}}$ by simply using $q_t$, thus omitting the dependency on $P$ and $\pi_t$.
%
%
Our goal is to design learning algorithms with $R_T = o(T)$, while at the same time satisfying constraints. 
%
We consider three different settings, all falling under the umbrella of \emph{hard constraints} settings~\citep{oco_hard}, introduced in the following.
%
%
%


\subsubsection{Guaranteeing sublinear violation}\label{sec:bounded}

In this setting, we consider the \emph{(cumulative) positive constraints violation} $V_{T}:= \max_{i\in[m]} \sum_{t=1}^{T}\big[ \overline{G}^{\top} q_t-{\alpha}\big]^+_i$, where $[x]^+:=\max\{0,x\}$.
%
Our goal is to design algorithms with $V_T=o(T)$.
To achieve such a goal, we only need to assume that the problem is well posed, as follows:
%
%
\begin{assumption}\label{cond:exist_feasible}
	There is an occupancy measure $q^{\diamond}$, called \emph{feasible solution}, such that $\overline{G}^{\top}  q^{\diamond} \leq  {\alpha}$.
	%
	%
\end{assumption}

\subsubsection{Guaranteeing safety}
\label{sec:safety}


In this setting, our goal is to design algorithms ensuring that the following \emph{safety property} is met:
%
%
\begin{definition}[Safe algorithm]\label{def:safe}
	An algorithm is \emph{safe} if and only if $\overline{G}^{\top}  q_t \leq  {\alpha}$ for all $t\in[T]$.
\end{definition}

As shown by~\citet{safetree}, without further assumptions, it is \emph{not} possible to achieve $R_T = o(T)$ while at the same time guaranteeing that the safety property holds with high probability, even in simple stochastic multi-armed bandit instances. 
To design safe learning algorithms, we need the following two assumptions.
%
%
The first one is about the possibility of \emph{strictly} satisfying constraints. 
\begin{assumption}[Slater's condition]\label{cond:slater}
	There is an occupancy measure $  q^{\diamond}:\overline G^{\top}  q^{\diamond} <  {\alpha}$. We call $  q^{\diamond}$ \emph{strictly feasible solution}, while a $\pi^{\diamond}$ induced by $q^{\diamond}$ is a \emph{strictly feasible policy}.
	%
	%
\end{assumption}
%
%
The second assumption is related to learner's knowledge about a strictly feasible policy.
%
%
\begin{assumption}
	\label{cond:feas_knowledge}
	Both the policy $\pi^\diamond$  and its costs $\beta = [\beta_1, \ldots, \beta_m] := \overline{G}^{\top}  q^{\diamond}$ are known to the learner.
	%
	%
\end{assumption}
Intuitively, Assumption~\ref{cond:feas_knowledge} is needed to guarantee that safety holds during the first episodes, when the leaner's uncertainty about the costs is high.
Assumptions~\ref{cond:slater}~and~\ref{cond:feas_knowledge} are often employed in CMDPs (see, \emph{e.g.},~\citep{bounded}), as they are usually met in real-world applications of interest, where it is common to have access to a ``do-nothing'' policy resulting in \emph{no} constraint being violated.

\subsubsection{Guaranteeing constant violation}\label{sec:constant}

In this setting, we relax Assumption~\ref{cond:feas_knowledge}, and we only assume Slater's condition (Assumption~\ref{cond:slater}). We show that it is possible to achieve constant violation, namely $V_T$ is upper bounded by a constant independent of $T$, while attaining similar regret guarantees compared to the second setting.
Our results
%
depend on the Slater's parameter $\rho \in [0,L]$, which is defined as $\rho\coloneqq\max_{q\in\Delta(M)}\min_{i\in[m]}\left[\alpha_i-\bar g_i^\top q\right]$, with $q^{\diamond}\coloneqq\arg\max_{q\in\Delta(M)}\min_{i\in[m]}\left[\alpha_i-\bar g_i^\top q\right]$ being its associated occupancy.
Given $\beta_i\coloneqq \bar g_i^\top q^\diamond$, we have $\rho = \min_{i\in[m]}\left[\alpha_i-\beta_i\right]$. By Assumption~\ref{cond:slater}, it holds $\rho>0$.

%
\section{Concentration bounds}\label{sec:concentration_transitions}

%
In this section, we provide concentration bounds for the estimates of unknown stochastic parameter of the CMDP.
%
%
%
%
%
Let $N_t(x,a)$ be the number of episodes up to $t \in [T]$ in which the state-action pair $(x,a) \in X \times A$ is \emph{visited}.
Then, $\widehat{g}_{t,i}(x, a) \coloneqq \frac{\sum_{\tau\in[t]}g_{\tau,i}(x, a)\mathbbm{1}_{\tau}\{x,a\}}{\max\{1,N_t(x,a)\}}$, with $\mathbbm{1}_{\tau}\{ x,a \}=1$ if and only if $(x,a)$ is visited in episode $\tau$, is an unbiased estimator of the expected cost of constraint $i \in [m]$ for $(x,a)$, which we denote by $\overline g_{i}(x,a) := \mathbb{E}_{G \sim \mathcal{G}} [g_{t,i}(x,a)]$.
%
%
%
Thus, by applying Hoeffding's inequality, it holds, with probability at least $1-\delta$ that $
\left|\widehat g_{t,i}(x,a)-\overline g_{i}(x,a)\right|\leq \xi_{t}(x,a)$, where we let the confidence bound $\xi_t(x,a):=\min \{1,\sqrt{{4\ln(T|X||A|m/\delta)}/{\max\{1,N_t(x,a)\}}} \}$ (refer to Lemma~\ref{lem:gen_confidence} for the formal result). 
%
%
For ease of notation, we let $\widehat G_{t} \in [0,1]^{|X \times A| \times m}$ be the matrix of the estimated costs $\widehat g_{t,i}(x,a)$.
Moreover, we denote by $\xi_t \in [0,1]^{|X \times A|}$ the vector whose entries are the bounds $\xi_t(x,a)$, and we let $\Xi_t \in [0,1]^{|X \times A| \times m}$ be a matrix built by 
in such a way that the statement of Lemma~\ref{lem:gen_confidence} becomes: $|\widehat G_{t}-\overline G|\preceq \Xi_t$ holds with probability at least $1-\delta$, where $|\cdot|$ and $\preceq$ are applied component wise.
%
%
%
In the following, given any $\delta \in (0,1)$, we refer to the event defined in Lemma~\ref{lem:gen_confidence} as $\mathcal{E}^{G}(\delta)$.



Similarly, we define \emph{confidence sets} for the transition function of a CMDP, by exploiting suitable concentration bounds for estimated transition probabilities.
%
%
%
By letting $M_t (x, a , x')$ be the total number of episodes up to $t \in [T]$ in which $(x,a) \in X \times A$ is visited and the environment transitions to $x' \in X$, the estimated transition probability for $(x,a,x')$ is defined as
$
\widehat{P}_t\left(x^{\prime} \mid x, a\right) \coloneqq \frac{M_t ( x, a , x')}{\max \left\{1, N_t(x, a)\right\}}
$.
Then, at episode $t \in [T]$, the confidence set for the transitions is $\mathcal{P}_t \coloneqq \bigcap_{(x,a,x') \in X \times A \times X} \mathcal{P}_{t}^{x,a,x'}$, with
$
\mathcal{P}_{t}^{x,a,x'}  \hspace{-1mm}\coloneqq \hspace{-.5mm}\left\{\overline{P}: |\overline{P}(x^{\prime}| x, a)  - \widehat{P}_t(x^{\prime} | x, a)| \leq \epsilon_t (x, a, x'  ) \right\}\hspace{-.5mm},
$
where we let $\epsilon_t (x, a , x') \coloneqq 2\sqrt{\frac{\widehat{P}_t\left(x^{\prime} | x, a\right)\ln \left({T|X||A|}/{\delta}\right)}{\max \left\{1, N_t(x, a)-1\right\}}} + \frac{14 \ln \left({T|X||A|}/{\delta}\right)}{3\max \left\{1, N_t(x, a)-1\right\}}$
%
for some confidence $\delta \in(0,1)$.
It is well known that, with probability at least $1-4\delta$, it holds that the transition function $P$ belongs to $\mathcal{P}_t$ for all $t \in [T]$ (see~\citep{JinLearningAdversarial2019} and Lemma~\ref{lem:confidenceset} for the formal statement).
At each $t \in [T]$, given a confidence set $\mathcal{P}_t$, it is possible to efficiently build a set $\Delta(\mathcal{P}_t)$ that comprises all the occupancy measures that are valid with respect to every transition function $\overline{P} \in\mathcal{P}_t$.
%
%
For reasons of space, we defer the formal definition of $\Delta(\mathcal{P}_t)$ to Appendix~\ref{app:auxiliary}.
Lemma~\ref{lem:confidenceset} implies that, with high probability, the set $\Delta(M)$ of valid occupancy measures is included in all the ``estimated'' sets $\Delta(\mathcal{P}_t) $, for every $ t\in[T]$. 
In the following, given any confidence $\delta \in (0,1)$, we refer to the event that $\Delta(M) \subseteq \bigcap_{t \in [T]} \Delta(\mathcal{P}_t)$ as $\mathcal{E}^{\Delta}(\delta)$, which holds with probability at least $1-4\delta$ thanks to Lemma~\ref{lem:confidenceset}.

Finally, for ease of presentation, given $\delta \in (0,1)$ we define a \emph{clean event} $\mathcal{E}^{G,\Delta}(\delta)$ under which all the concentration bounds for costs and transitions correctly hold.
Formally, $\mathcal{E}^{G,\Delta}(\delta):=\mathcal{E}^{G}(\delta)\cap\mathcal{E}^{\Delta}(\delta)$, which holds with probability at least $1-5\delta$ by a union bound (and Lemmas~\ref{lem:gen_confidence}~and~\ref{lem:confidenceset}).

\section{Guaranteeing sublinear violation}\label{sec:bounded_main}

We start by designing the \texttt{SV-OPS} algorithm, guaranteeing that both the regret $R_T$ and the \emph{positive} constraints violation $V_T$ are sublinear in $T$.
To get this result, we only need the existence of a feasible solution (Assumption~\ref{cond:exist_feasible}).
\begin{algorithm}[!t]
	\caption{\texttt{SV-OPS}}
	\label{alg: Cumulatively Safe Policy Search with Unknown Transitions}
	\small
	\begin{algorithmic}[1]
		\Require{$X$, $A$, ${\alpha}$, $T$, $\delta$, $\eta$, $\gamma$}
		\For{$k\in[0\dots L-1]$, $\left(x, a, x^{\prime}\right) \in X_k \times A \times$ $X_{k+1}$}
		\State $N_0(x, a) \gets 0$; $M_0 ( x, a, x' ) \gets 0$\label{alg3:line1_new}
		\State $\widehat{q}_1\left(x, a, x^{\prime}\right) \gets \nicefrac{1}{|X_k| | A||X_{k+1}|}$\label{alg3:line1}
		\EndFor
		\State $\pi_1 \gets \pi^{\widehat q_1}$\label{alg3:line1_new_new}
		\For{$t\in[T]$}
		\State Choose $\pi_t$ in Algorithm~\ref{alg: Learner-Environment Interaction} and receive feedback \label{alg3:line3}
		\State Build \emph{upper occupancy bounds} for $k\in[0\dots L-1]$:\label{alg3:line4}
		\[
		u_t(x_k,a_k) \gets \max\displaystyle_{\overline{P} \in\mathcal{P}_{t-1}} \, q^{\overline{P},\pi_t}(x_k,a_k)
		\]
		\State Build \emph{optimistic loss estimator} for $(x,a) \in X \times A$:\label{alg3:line5}
		\[ 
		\widehat \ell_t(x,a) \gets \begin{cases} \frac{\ell_t(x,a)}{u_t(x,a)+\gamma} & \text{if } \mathbbm{1}_t\{x,a\} = 1\\
		0 & \text{otherwise}\end{cases}
		\]
		\For{$k\in[0\dots L-1]$} 
		\State $N_t (x_k, a_k ) \gets N_{t-1} (x_k, a_k )+1 $\label{alg3:line7}
		\State $M_t ( x_k, a_k , x_{k+1}) \gets M_{t-1} ( x_k, a_k , x_{k+1}) \hspace{-0.4mm}+ 1 $ \label{alg3:line7_new}
		\EndFor
		\State Build $\mathcal{P}_{t}$, $\widehat G_t$, and $\Xi_t$ as in Section~\ref{sec:concentration_transitions}\label{alg3:line8}
		\State Build \emph{unconstrained occupancy} for all $(x,a,x')$:\label{alg3:line10} $$\tilde q_{t+1}(x,a,x') \gets \widehat q_{t}(x,a,x')e^{-\eta\widehat\ell_{t}(x,a)}$$
		\If{$\text{\textsc{PROJ}}\big(\tilde q_{t+1},\widehat G_t, \Xi_t, \mathcal{P}_t\big)$ is \emph{feasible} \label{alg3:line11}}
		\State $\widehat q_{t+1} \gets \text{\textsc{PROJ}}\big(\tilde q_{t+1},\widehat G_t, \Xi_t, \mathcal{P}_t\big)$ \label{alg3:line12}
		\Else
		\State $\widehat q_{t+1}\leftarrow$ any $q\in\Delta(\mathcal{P}_t)$ \label{alg3:line14}
		\EndIf
		\State $\pi_{t+1} \gets \pi^{\widehat q_{t+1}}$ \label{alg3:line15}
		\EndFor
	\end{algorithmic}
\end{algorithm}
Dealing with adversarial losses while limiting positive constraints violation begets considerable challenges, which go beyond classical exploration-exploitation trade-offs faced in unconstrained settings.
%
%
On the one hand, using state-of-the-art algorithms for online learning in adversarial, unconstrained MDPs would lead to sublinear regret, but violation would grow linearly.
On the other hand, a na\"ive approach that randomly explores to compute a set of policies satisfying the constraints with high probability can lead to sublinear violation, at the cost of linear regret. 
Thus, a clever adaptation of the techniques for unconstrained settings is needed.

Our algorithm---called Sublinear Violation Optimistic Policy Search (\texttt{SV-OPS})---works by selecting policies derived from a set of occupancy measures that \emph{optimistically} satisfy cost constraints.
%
%
This ensures that the set is always non-empty with high probability and that it collapses to the (true) set of constraint-satisfying occupancy measures as the number of episodes increases, enabling \texttt{SV-OPS} to attain sublinear constraints violation.
%
The fundamental property preserved by \texttt{SV-OPS} is that, even though the ``optimistic'' set changes during the execution of the algorithm, it always subsumes the (true) set of constraint-satisfying occupancy measures.
This crucially allows \texttt{SV-OPS} to employ classical policy-selection methods for unconstrained MDPs. 
%

Algorithm~\ref{alg: Cumulatively Safe Policy Search with Unknown Transitions} provides the pseudocode of \texttt{SV-OPS}.
%
%
At each episode $t \in [T]$, \texttt{SV-OPS} plays policy $\pi_t$ and receives feedback as described in Algorithm~\ref{alg: Learner-Environment Interaction} (Line~\ref{alg3:line3}).
Then, \texttt{SV-OPS} computes an \emph{upper occupancy bound} $u_t(x_k,a_k)$ for every state-action pair $(x_k,a_k)$ visited during Algorithm~\ref{alg: Learner-Environment Interaction}, by using the confidence set for the transition function $\mathcal{P}_{t-1}$ computed in the previous episode, namely, it sets $u_t(x_k,a_k):=\max_{\overline{P}\in\mathcal{P}_{t-1}}q^{\overline{P},\pi_t}(x,a)$ for every $k \in [0 \ldots L-1]$ (Line~\ref{alg3:line4}).
Intuitively, $u_t(x_k,a_k)$ represents the maximum probability with which $(x_k,a_k)$ is visited when using policy $\pi_t$, given the confidence set for the transition function built so far.
%
%
%
The upper occupancy bounds are combined with the exploration factor $\gamma$ to compute an \emph{optimistic loss estimator} $\widehat \ell_t(x,a)$ for every state-action pair $(x,a) \in X \times A$ (see Line~\ref{alg3:line5}).
After that, \texttt{SV-OPS} updates all the counters given the path traversed in Algorithm~\ref{alg: Learner-Environment Interaction} (Lines~\ref{alg3:line7}--\ref{alg3:line7_new}), it builds the new confidence set $\mathcal{P}_t$, and it computes the matrices $\widehat G_t$ and $\Xi_t$ of estimated costs and their bounds, respectively, by using the feedback (Line~\ref{alg3:line8}).
%
%
To choose a policy $\pi_{t+1}$, \texttt{SV-OPS} first computes an \emph{unconstrained occupancy measure} $\tilde q_{t+1}$ according to an unconstrained OMD update~\citep{Orabona} (see Line~\ref{alg3:line10}).
Then, $\tilde q_{t+1}$ is projected onto a suitably-defined set of occupancy measures that \emph{optimistically} satisfy the constraints.
%
%
%
Next, we formally define the projection (Line~\ref{alg3:line11}).
\begin{align}
	\label{lp:proj_opt_unknown}\text{\textsc{PROJ}}&\big(\tilde q_{t+1},\widehat G_t, \Xi_t, \mathcal{P}_t\big) \hspace{-1.5mm}\coloneqq\hspace{-1.5mm}\begin{cases}
		\displaystyle \argmin_{q\in\Delta(\mathcal P_t)}  \,\,\,\, D(q||\tilde q_{t+1}) \\
		 \,\,\,\, \text{s.t.} \,\,\,\, \big(\widehat G_t\hspace{-0.5mm}-\hspace{-0.5mm}\Xi_t\big)^{\top} \hspace{-0.5mm} q \leq  {\alpha},
	\end{cases}\hspace{-5mm}
\end{align}
where $D(q||\tilde q_{t+1})$ is the unnormalized KL-divergence between $q$ and $\tilde q_{t+1}$.
%
%
Problem~\eqref{lp:proj_opt_unknown} is a linearly-constrained convex mathematical program, and, thus, it can be solved efficiently for an arbitrarily-good approximate solution.\footnote{As customary in adversarial MDPs, we assume that an optimal solution to Problem~\eqref{lp:proj_opt_unknown} can be computed efficiently. Otherwise, we can still derive all of our results up to small approximations.}
%
%
Intuitively, Problem~\eqref{lp:proj_opt_unknown} performs a projection onto the set of $q \in \Delta(\mathcal{P}_t)$ that additionally satisfy $\big(\widehat G_t-\Xi_t \big)^{\top}  q \leq  {\alpha}$, where lower confidence bounds $\widehat G_t - \Xi_t $ for the costs are used in order to take an {optimistic} approach with respect to constraints satisfaction.  
%
%
Finally, if Problem~\eqref{lp:proj_opt_unknown} is feasible, then at the next episode \texttt{SV-OPS} selects the $\pi^{\widehat q_{t+1}}$ induced by a solution $\widehat q_{t+1}$ to Problem~\eqref{lp:proj_opt_unknown} (Line~\ref{alg3:line12}), otherwise it chooses a policy induced by any $q \in  \Delta(\mathcal{P}_t)$ (Line~\ref{alg3:line14}).
%
%

The optimistic approach adopted in Problem~\eqref{lp:proj_opt_unknown} crucially allows to prove the following lemma.
\begin{restatable}{lemma}{lemfeasibility}\label{lem:feasibility}
	Given confidence $\delta\in(0,1)$, Algorithm~\ref{alg: Cumulatively Safe Policy Search with Unknown Transitions} ensures that $\text{\textsc{PROJ}}(\tilde q_{t+1},\widehat G_t, \Xi_t, \mathcal{P}_t)$ is feasible at every episode $t \in [T]$ with probability at least $1-5\delta$.
\end{restatable}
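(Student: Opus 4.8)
The plan is to show that the true set of constraint-satisfying occupancy measures is always contained in the feasible region of Problem~\eqref{lp:proj_opt_unknown}, and that this true set is non-empty under Assumption~\ref{cond:exist_feasible}. Concretely, I would first condition on the clean event $\mathcal{E}^{G,\Delta}(\delta)$, which holds with probability at least $1-5\delta$ by the union bound discussed in Section~\ref{sec:concentration_transitions}. On this event two things hold simultaneously for every $t\in[T]$: the true occupancy polytope satisfies $\Delta(M)\subseteq\Delta(\mathcal{P}_t)$ (from $\mathcal{E}^{\Delta}(\delta)$), and the componentwise cost estimates obey $|\widehat G_t-\overline G|\preceq\Xi_t$ (from $\mathcal{E}^{G}(\delta)$), so in particular $\widehat G_t-\Xi_t\preceq\overline G$.

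Next I would take the feasible solution $q^{\diamond}$ guaranteed by Assumption~\ref{cond:exist_feasible}, which satisfies $\overline G^{\top}q^{\diamond}\le\alpha$ and is a valid occupancy measure of $M$, hence $q^{\diamond}\in\Delta(M)$. The goal is to verify that $q^{\diamond}$ is feasible for Problem~\eqref{lp:proj_opt_unknown} at every $t$. Membership in the domain is immediate: $q^{\diamond}\in\Delta(M)\subseteq\Delta(\mathcal{P}_t)$ on the clean event. For the explicit constraint, I would chain the inequalities: since $q^{\diamond}\ge 0$ componentwise and $(\widehat G_t-\Xi_t)\preceq\overline G$, multiplying by $q^{\diamond}$ preserves the order, giving $(\widehat G_t-\Xi_t)^{\top}q^{\diamond}\le\overline G^{\top}q^{\diamond}\le\alpha$. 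Therefore $q^{\diamond}$ witnesses feasibility of Problem~\eqref{lp:proj_opt_unknown}, so the projection is feasible at every episode $t\in[T]$, and this happens with probability at least $1-5\delta$ — exactly the claimed bound.

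The main subtlety — really the only place one has to be a bit careful — is the direction of the confidence bound and the sign conventions. One must use the \emph{lower} confidence bound $\widehat G_t-\Xi_t$ (not the upper one) precisely so that it under-estimates the true cost $\overline G$, making the optimistic constraint a relaxation of the true constraint $\overline G^{\top}q\le\alpha$; combined with nonnegativity of occupancy measures this is what makes $\Delta(M)\cap\{q:\overline G^{\top}q\le\alpha\}$ a subset of the optimistic feasible region. A second point worth stating explicitly is that $\Xi_t$ is built so that the vectorized Hoeffding bound $|\widehat G_t-\overline G|\preceq\Xi_t$ holds uniformly over all $(x,a)$, all constraints $i\in[m]$, and all $t\in[T]$ inside $\mathcal{E}^G(\delta)$ — this uniformity over $t$ is what lets us conclude feasibility simultaneously at every episode rather than just at a fixed one. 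No heavy machinery is needed beyond these monotonicity observations and the already-established probability of the clean event.
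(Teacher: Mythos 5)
Your proposal is correct and follows essentially the same route as the paper's proof: condition on the clean event $\mathcal{E}^{G,\Delta}(\delta)$, note that any true-constraint-satisfying occupancy measure lies in $\Delta(\mathcal{P}_t)$ and satisfies the optimistic constraint because $(\widehat G_t-\Xi_t)^{\top}q\preceq\overline G^{\top}q\preceq\alpha$. You are in fact slightly more careful than the paper in making explicit the nonnegativity of $q^{\diamond}$ needed for the monotonicity step and the uniformity of the event over $t\in[T]$.
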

Lemma~\ref{lem:feasibility} holds since, under the event $\mathcal{E}^{G,\Delta}(\delta)$, projection is performed on a set subsuming the (true) set of constraints-satisfying occupancies.
Lemma~\ref{lem:feasibility} is fundamental, as it allows to show that \texttt{SV-OPS} attains sublinear $V_T$ and $R_T$.

\subsection{Cumulative positive constraints violation}

To prove that the positive constraints violation achieved by \texttt{SV-OPS} is sublinear, we exploit the fact that the concentration bounds for costs and transitions shrink at a rate of $\mathcal{O}(1/\sqrt{T})$.
This allows us to show the following result.
%
%
\begin{restatable}{theorem}{violationsoft}
	Given $\delta\in(0,1)$, Algorithm~\ref{alg: Cumulatively Safe Policy Search with Unknown Transitions} attains $V_T \leq \mathcal{O}\left( L|X|\sqrt{|A|T\ln\left(\nicefrac{T|X||A|m}{\delta}\right)}\right)$ with prob.~at least $ 1-8\delta$.
\end{restatable}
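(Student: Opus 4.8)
The plan is to bound, episode by episode and constraint by constraint, the quantity $\overline g_i^\top q_t-\alpha_i$ by two error terms of order $\mathcal O(1/\sqrt{N_{t-1}(x,a)})$ — one coming from cost estimation and one from transition estimation — and then to sum these over $t$ using the standard occupancy-estimation and count-summation lemmas for adversarial MDPs with unknown transitions. Throughout I would condition on the clean event $\mathcal E^{G,\Delta}(\delta)$, which holds with probability at least $1-5\delta$ and guarantees simultaneously $|\widehat G_{t}-\overline G|\preceq\Xi_{t}$ and $\Delta(M)\subseteq\Delta(\mathcal P_t)$ for all $t$, and hence, by Lemma~\ref{lem:feasibility}, that $\text{\textsc{PROJ}}$ is feasible at every episode.

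Under this event, for each $t\geq 2$ the occupancy $\widehat q_t$ produced by $\text{\textsc{PROJ}}$ at episode $t-1$ lies in $\Delta(\mathcal P_{t-1})$ and satisfies $(\widehat G_{t-1}-\Xi_{t-1})^\top\widehat q_t\leq\alpha$. Combining the concentration inequality $\overline g_i\preceq\widehat g_{t-1,i}+\xi_{t-1}$ with the feasibility inequality $\widehat g_{t-1,i}^\top\widehat q_t-\xi_{t-1}^\top\widehat q_t\leq\alpha_i$, and controlling the cross terms via Hölder's inequality together with $\widehat g_{t-1,i},\xi_{t-1}\in[0,1]^{|X\times A|}$ (which also lets one replace $\xi_{t-1}^\top\widehat q_t$ by $\xi_{t-1}^\top q_t+\|q_t-\widehat q_t\|_1$), yields for every $i\in[m]$ and $t\geq 2$
\[
\overline g_i^\top q_t-\alpha_i\;\leq\;2\,\|q_t-\widehat q_t\|_1+2\,\xi_{t-1}^\top q_t .
\]
Since the right-hand side is nonnegative it also bounds $[\overline g_i^\top q_t-\alpha_i]^+$, and the single episode $t=1$ contributes at most $L$; summing over $t$ and maximizing over $i$ gives the $i$-free bound $V_T\leq L+2\sum_{t=2}^T\|q_t-\widehat q_t\|_1+2\sum_{t=2}^T\xi_{t-1}^\top q_t$.

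It then remains to bound the two sums. The first, $\sum_t\|q_t-\widehat q_t\|_1$, is exactly the occupancy-estimation error for the transition confidence set: since both $P$ and the transition inducing $\widehat q_t$ lie in $\mathcal P_{t-1}$, a standard argument — Bernstein-type control of $\epsilon_t(x,a,x')$, Cauchy--Schwarz over $x'$ (using $\sum_{x'}\sqrt{\widehat P_t(x'|x,a)}\leq\sqrt{|X|}$) and over $(x,a)$ (using $\sum_{x,a}N_T(x,a)=LT$), plus a martingale bound passing from the expected occupancies $q_t(x,a)$ to the realized counts $N_t(x,a)$ — gives $\sum_t\|q_t-\widehat q_t\|_1=\mathcal O\big(L|X|\sqrt{|A|T\ln(T|X||A|/\delta)}\big)$ up to lower-order $\mathrm{poly}(L,|X|,|A|)\,\mathrm{polylog}$ terms, with probability $1-\mathcal O(\delta)$. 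The same count-summation lemma gives $\sum_{t}\sum_{x,a}q_t(x,a)/\sqrt{\max\{1,N_{t-1}(x,a)\}}=\mathcal O(\sqrt{|X||A|LT})$, hence $\sum_t\xi_{t-1}^\top q_t=\mathcal O(\sqrt{|X||A|LT\ln(T|X||A|m/\delta)})$, which is dominated by the first sum. A union bound over $\mathcal E^{G,\Delta}(\delta)$ and the auxiliary martingale/concentration events (probability at least $1-8\delta$ overall) then yields $V_T=\mathcal O\big(L|X|\sqrt{|A|T\ln(T|X||A|m/\delta)}\big)$.

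The step I expect to be the main obstacle is obtaining the tight $L|X|\sqrt{|A|T}$ rate for $\sum_t\|q_t-\widehat q_t\|_1$ rather than a looser bound: this requires applying Cauchy--Schwarz to the full double sum over episodes and transitions (not episode by episode) and a careful martingale argument to replace the expected visitation probabilities $q_t(x,a)$ by the empirical counts $N_t(x,a)$ in the confidence radii. It is the technical heart of the argument, but it mirrors existing analyses of adversarial MDPs with unknown transitions under bandit feedback and can largely be imported; the per-episode inequality above, which is the genuinely new ingredient, is the part specific to our constrained setting.
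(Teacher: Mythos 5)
Your proposal is correct and follows essentially the same route as the paper: under the clean event you combine the optimistic feasibility constraint $(\widehat G_{t-1}-\Xi_{t-1})^\top\widehat q_t\leq\alpha$ with the cost concentration bound to get the per-episode inequality $[\overline g_i^\top q_t-\alpha_i]^+\lesssim\|q_t-\widehat q_t\|_1+\xi_{t-1}^\top q_t$, and then bound the two sums via the standard occupancy-estimation lemma for $\Delta(\mathcal P_t)$ and the count-summation/Azuma argument, exactly as the paper does (the paper merely performs the swap from $\xi_{t-1}^\top\widehat q_t$ to $\xi_{t-1}^\top q_t$ one step later, and imports the transition-estimation bound wholesale from \citet{JinLearningAdversarial2019} rather than re-deriving it).
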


\subsection{Cumulative regret}

The crucial observation that allows us to prove that the regret attained by \texttt{SV-OPS} grows sublinearly is that the set on which the algorithm perform its projection step (Problem~\eqref{lp:proj_opt_unknown}) always contains the (true) set of occupancy measures that satisfy the constraints, and, thus, it also always contains the best-in-hindsight constraint-satisfying occupancy measure $q^*$.
As a result, even though cost estimates may be arbitrarily bad, \texttt{SV-OPS} is still guaranteed to select policies resulting in losses that are smaller than or equal to those incurred by $q^*$.
This allows us to show the following:
%
%
\begin{restatable}{theorem}{regretsoft}
	\label{thm:regret_soft_unknown}
	Given $\delta\in(0,1)$, by setting $\eta =\gamma= \sqrt{\nicefrac{L\ln(L|X||A|/\delta)}{T|X||A|}}$, Algorithm~\ref{alg: Cumulatively Safe Policy Search with Unknown Transitions} attains $
		R_T \leq \mathcal{O}\left( L|X|\sqrt{|A|T\ln\left(\nicefrac{T|X||A|}{\delta}\right)}\right)
	$ with prob.~at least $1-10\delta$.
\end{restatable}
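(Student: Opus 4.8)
The plan is to decompose the regret $R_T = \sum_{t=1}^T \ell_t^\top(q_t - q^*)$ by inserting the occupancy measures $\widehat q_t$ that \texttt{SV-OPS} actually tracks via the OMD update. Write $R_T = \underbrace{\sum_t \ell_t^\top(q_t - \widehat q_t)}_{(\mathrm{I})} + \underbrace{\sum_t \ell_t^\top(\widehat q_t - q^*)}_{(\mathrm{II})}$, where $q_t = q^{P,\pi_t}$ is the true occupancy and $\widehat q_t$ is the occupancy the algorithm computed in the previous iteration. Term $(\mathrm{I})$ is the transition-estimation error: since $\pi_t = \pi^{\widehat q_t}$, the gap between $q^{P,\pi_t}$ and $\widehat q_t \in \Delta(\mathcal P_{t-1})$ is controlled, on the clean event $\mathcal E^{G,\Delta}(\delta)$, by the shrinking confidence radii $\epsilon_t$; summing over $t$ this is the standard $\widetilde{\mathcal O}(L|X|\sqrt{|A|T})$ bound from the adversarial-MDP literature (as in~\citep{JinLearningAdversarial2019,rosenberg19a}), obtained by bounding $\sum_t \sum_{x,a} |q_t(x,a) - \widehat q_t(x,a)|$ through a telescoping/pigeonhole argument on the visit counts $N_t(x,a)$.

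For term $(\mathrm{II})$ I would first use the key structural fact highlighted before the statement: on $\mathcal E^{G,\Delta}(\delta)$, by Lemma~\ref{lem:feasibility} the projection is always feasible, and moreover $q^* \in \Delta(M) \subseteq \Delta(\mathcal P_t)$ and $q^*$ satisfies $(\widehat G_t - \Xi_t)^\top q^* \le \overline G^\top q^* \le \alpha$ (since $\widehat G_t - \Xi_t \preceq \overline G$ on $\mathcal E^G(\delta)$), so $q^*$ lies in the feasible region of \textsc{PROJ} at every episode. Hence $\widehat q_{t+1}$ is a genuine Bregman/KL projection of the OMD iterate $\tilde q_{t+1}$ onto a convex set containing $q^*$. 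Then $(\mathrm{II})$ is handled by the standard OMD-with-KL (i.e., exponential-weights / online mirror descent over the occupancy simplex) regret analysis against the fixed comparator $q^*$: the generalized Pythagorean inequality for the KL projection lets us drop the projection step, and the per-step analysis gives $\sum_t \widehat\ell_t^\top(\widehat q_t - q^*) \le \frac{D(q^*\|\widehat q_1)}{\eta} + \eta \sum_t \sum_{x,a}\widehat q_t(x,a)\widehat\ell_t(x,a)^2$, with $D(q^*\|\widehat q_1) = \mathcal O(L\ln(|X||A|))$ from the uniform initialization. The remaining work is to pass from the estimated losses $\widehat\ell_t$ back to the true losses $\ell_t$: this is exactly the optimistic-estimator bias/variance bookkeeping — the optimistic shift by $u_t(x,a)+\gamma$ makes $\mathbb E[\widehat\ell_t(x,a)] \le \ell_t(x,a)/ (\text{something} \ge q_t(x,a))$ so the estimator underestimates in expectation, killing the bias term, while $\sum_t\sum_{x,a}\widehat q_t(x,a)\widehat\ell_t(x,a)^2 = \widetilde{\mathcal O}(|X||A|T)$ after taking the upper-occupancy bound into account and using $\widehat q_t(x,a)\le u_t(x,a)$ up to confidence terms. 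Bounding the deviation of $\sum_t \ell_t^\top q_t$ around its conditional expectation costs a $\sqrt{T}$-type Azuma term with the additional $\widetilde{\mathcal O}(\sqrt{T}/\gamma)$ contribution from the $\gamma$-shift, which is where the choice $\eta = \gamma = \sqrt{L\ln(L|X||A|/\delta)/(T|X||A|)}$ enters to balance $\frac{1}{\eta}$, $\eta|X||A|T$, and $\frac{1}{\gamma}\sqrt{T\ln(1/\delta)}$, all landing at $\widetilde{\mathcal O}(L|X|\sqrt{|A|T})$; a final union bound over the clean event and the two martingale concentrations yields probability $1-10\delta$.

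The main obstacle, and the place where this proof differs from the unconstrained template, is term $(\mathrm{II})$'s reliance on the projection set containing $q^*$ at \emph{every} episode despite the set changing with $t$: one must verify that the KL-projection (generalized Pythagorean) inequality $D(q^*\|\widehat q_{t+1}) \le D(q^*\|\tilde q_{t+1})$ is still applicable step-by-step even though the constraint set $\{q\in\Delta(\mathcal P_t): (\widehat G_t-\Xi_t)^\top q \le \alpha\}$ is time-varying — this is fine precisely because $q^*$ is feasible for \emph{all} of them simultaneously on the clean event, so the telescoping of the $D(q^*\|\cdot)$ terms goes through unchanged. The second delicate point is the loss-estimator analysis with an occupancy measure $\widehat q_t$ that is itself the output of a projection (not the raw OMD iterate): care is needed to ensure the second-moment term is still controlled, which follows because projecting onto a subset of $\Delta(\mathcal P_t)$ cannot increase $\widehat q_t(x,a)$ beyond $u_t(x,a)$ up to the transition confidence width. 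Everything else is the now-standard adversarial-MDP machinery applied verbatim.
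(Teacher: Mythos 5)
Your proposal is correct and follows essentially the same route as the paper: the paper's proof uses exactly your decomposition, with your term $(\mathrm{II})$ further split into the OMD term $\sum_t\widehat\ell_t^{\,\top}(\widehat q_t-q^*)$ plus the two bias terms $\sum_t(\ell_t-\widehat\ell_t)^\top\widehat q_t$ and $\sum_t(\widehat\ell_t-\ell_t)^\top q^*$, precisely the bookkeeping you describe, and it likewise hinges on $q^*$ being feasible for the time-varying projection set at every episode under the clean event. The only blemish is your passing claim that the $\gamma$-shift costs a $\tilde{\mathcal{O}}(\sqrt{T}/\gamma)$ term --- the actual terms balanced are of order $\gamma|X||A|T$ and $\gamma^{-1}\ln(\nicefrac{|X||A|}{\delta})$ --- but this does not affect the validity of the argument or the final bound.
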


\section{Guaranteeing safety}\label{sec:safety_main}

We design another algorithm, called \texttt{S-OPS}, attaining sublinear regret and enjoying the safety property with high probability.
To do this, we work under Assumptions~\ref{cond:slater}~and~\ref{cond:feas_knowledge}.
Designing safe algorithms raises many additional challenges compared to the case studied in Section~\ref{sec:bounded_main}.
%
Indeed, adapting techniques for adversarial, unconstrained MDPs does \emph{not} work anymore, and, thus, \emph{ad hoc} approaches are needed.
This is because safety extremely limits exploration.
%
%

	\begin{algorithm}[!t]
	\caption{\texttt{S-OPS}}
	\label{alg: Strictly Safe Policy Search with Unknown Transitions}
	\small
	\begin{algorithmic}[1]
		\Require{$X$, $A$, ${\alpha} $, $T$, $\delta $, $\eta$, $\gamma$, $\pi^\diamond$, $\beta$}
		\For{$k\in[0\dots L-1]$, $\left(x, a, x^{\prime}\right) \in X_k \times A \times$ $X_{k+1}$}
		\State $N_0(x, a) \gets 0$; $M_0 ( x, a, x' ) \gets 0$
		\State $\widehat{q}_1\left(x, a, x^{\prime}\right) \gets \frac{1}{\left|X_k \| A\right|\left|X_{k+1}\right|}$
		\EndFor
		\State $\pi_1 \gets \begin{cases} \pi^\diamond & \text{w. probability } \lambda_0 := \max_{i\in[m]}\left\{\frac{L-\alpha_i}{L-\beta_i}\right\}\\
			\pi^{\widehat q_{1}} & \text{w. probability } 1-\lambda_0
		\end{cases}$\label{alg4:line1}
		%
		\For{$t\in[T]$}
		\State Select $\pi_t$ in Algorithm~\ref{alg: Learner-Environment Interaction} and receive feedback \label{alg4:line3}
		\State Build \emph{upper occupancy bounds} for $k\in[0\dots L-1]$:\label{alg4:line4}
		\[
		u_t(x_k,a_k) \gets \max\displaystyle_{\overline{P} \in\mathcal{P}_{t-1}} \, q^{\overline{P},\pi_t}(x_k,a_k)
		\]
		\State Build \emph{optimistic loss estimator} for $(x,a) \in X \times A$:\label{alg4:line5}
		\[ 
		\widehat \ell_t(x,a) \gets \begin{cases} \frac{\ell_t(x,a)}{u_t(x,a)+\gamma} & \text{if } \mathbbm{1}_t\{x,a\} = 1\\
			0 & \text{otherwise}\end{cases}
		\]
		\For{$k\in[0\dots L-1]$} 
		\State $N_t (x_k, a_k ) \gets N_{t-1} (x_k, a_k )+1 $
		\State $M_t ( x_k, a_k , x_{k+1}) \gets M_{t-1} ( x_k, a_k , x_{k+1})+1 $ 
		\EndFor
		\State Build $\mathcal{P}_{t}$, $\widehat G_t$, and $\Xi_t$ as in Section~\ref{sec:concentration_transitions}  \label{alg4:line8}
		%
		\State Build \emph{unconstrained occupancy} for all $(x,a,x')$:\label{alg4:line10}
		$$\tilde q_{t+1}(x,a,x') \gets \widehat q_{t}(x,a,x')e^{-\eta\widehat\ell_{t}(x,a)}$$
		\If{$\text{\textsc{PROJ}}\big(\tilde q_{t+1},\widehat G_t, \Xi_t, \mathcal{P}_t \big)$ is \emph{feasible} \label{alg4:line11}}
		\State $\widehat q_{t+1} \gets \text{\textsc{PROJ}}\big(\tilde q_{t+1},\widehat G_t, \Xi_t, \mathcal{P}_t \big)$ \label{alg4:line12}
		\State $\widehat \pi_{t +1}\gets  \pi^{\widehat q_{t+1}}$ \label{alg4:line13}
		\State Build $\widehat{u}_{t+1} \in [0,1]^{|X \times A|}$ so that for all $(x,a) $:\label{alg4:line14}
		\[
		\widehat u_{t+1}(x,a) \gets \max\displaystyle_{\overline{P}\in\mathcal{P}_{t}} \, q^{\overline{P},\widehat\pi_{t+1}}(x,a)
		\]
		\State $\sigma \gets \max_{i\in[m]}\left\{\frac{\min\left\{(\widehat g_{t,i}+\xi_t)^\top\widehat u_{t+1},L\right\}-\alpha_i}{\min\left\{(\widehat g_{t,i}+\xi_t)^\top\widehat u_{t+1},L\right\}-\beta_i}\right\} $
		\State $\lambda_t \gets \begin{cases}  \sigma & \text{if } \exists{i\in[m]}: (\widehat g_{t,i}+\xi_t)^\top\widehat u_{t+1} > \alpha_i\\
			0  & \text{if } \forall{i\in[m]}: (\widehat g_{t,i}+\xi_t)^\top\widehat u_{t+1} \leq \alpha_i
		\end{cases}$\label{alg4:line15}
		\Else
		\State $\widehat q_{t+1}\leftarrow$ take any $q\in\Delta(\mathcal{P}_t)$; $\lambda_t \gets 1$ \label{alg4:line17}
		\EndIf
		\State $\pi_{t+1} \gets\begin{cases} \pi^\diamond & \text{with probability } \lambda_t \\
			\pi^{\widehat q_{t+1}} & \text{with probability } 1-\lambda_t
		\end{cases}$ \label{alg4:line19}
		\EndFor	\end{algorithmic}
\end{algorithm}


Our algorithm---Safe Optimistic Policy Search (\texttt{S-OPS})---builds on top of the \texttt{SV-OPS} algorithm developed in Section~\ref{sec:bounded_main}.
Selecting policies derived from the ``optimistic'' set of occupancy measures, as done by \texttt{SV-OPS}, is \emph{not} sufficient anymore, as it would clearly result in the safety property being unsatisfied during the first episodes.
Our new algorithm circumvents such an issue by employing, at each episode, a suitable randomization between the policy derived from the ``optimistic'' set (the one \texttt{SV-OPS} would select) and the strictly feasible policy $\pi^\diamond$.
Crucially, as we show next, such a randomization accounts for constraints satisfaction by taking a \emph{pessimistic} approach, namely, by considering upper confidence bounds on the costs characterizing the constraints.
This is needed in order to guarantee the safety property.
Moreover, having access to the strictly feasible policy $\pi^\diamond$ and its expected costs $\beta$ (Assumption~\ref{cond:feas_knowledge}) allows \texttt{S-OPS} to always place a sufficiently large probability on the policy derived from the ``optimistic'' set, so that a sufficient amount of exploration is guaranteed, and, in its turn, sublinear regret is attained.
Notice that \texttt{S-OPS} effectively selects \emph{non-Markovian} policies, as it employs a randomization between two Markovian policies at each episode.


Algorithm~\ref{alg: Strictly Safe Policy Search with Unknown Transitions} provides the pseudocode of \texttt{S-OPS}.
Differently from \texttt{SV-OPS}, the policy selected at the first episode 
is obtained by randomizing a uniform occupancy measure with $\pi^\diamond$ (Line~\ref{alg4:line1}). 
The probability $\lambda_0$ of selecting $\pi^\diamond$ is chosen pessimistically.
Intuitively, in the first episode, being pessimistic means that $\lambda_0$ must guarantee that the constraints are satisfied for any possible choice of costs and transitions, and, thus, $\lambda_0 := \max_{i\in[m]}\left\{\nicefrac{L-\alpha_i}{L-\beta_i}\right\}$.
Thanks to Assumptions~\ref{cond:slater}~and~\ref{cond:feas_knowledge}, it is always the case that $\lambda_0<1$.
Thus, $\pi_1\neq\pi^\diamond$ with positive probability and some exploration is performed even in the first episode.
Analogously to \texttt{SV-OPS}, at each $t \in [T]$, \texttt{S-OPS} selects a policy $\pi_t$ and receives feedback as described in Algorithm~\ref{alg: Learner-Environment Interaction}, it computes optimistic loss estimators, it updates the confidence set for the transitions, and it computes the matrices of estimated costs and their bounds.
Then, as in \texttt{SV-OPS}, an update step of unconstrained OMD is performed.
Although identical to the update done in \texttt{SV-OPS}, the one in \texttt{S-OPS} uses loss estimators computed when using a randomization between the policy obtained by solving Problem~\eqref{lp:proj_opt_unknown} and the strictly feasible policy $\pi^\diamond$.
Thus, there is a mismatch between the occupancy measure used to estimate losses and the one computed by the projection step.
%
%
The projection step performed by \texttt{S-OPS} (Line~\ref{alg4:line11}) is the same as the one in \texttt{SV-OPS}.
Specifically, the algorithm projects the unconstrained occupancy measure $\tilde{q}_{t+1}$ onto an ``optimistic'' set by solving Problem~\eqref{lp:proj_opt_unknown}, which, if the problem is feasible, results in occupancy measure $\widehat{q}_{t+1}$.
However, differently from \texttt{SV-OPS}, when the problem is feasible, \texttt{S-OPS} does \emph{not} select the policy $\pi^{\widehat{q}_{t+1}}$ derived from $\widehat{q}_{t+1}$, but it rather uses a randomization between such a policy and the strictly feasible policy $\pi^\diamond$ (Line~\ref{alg4:line19}).
The probability $\lambda_t$ of selecting $\pi^\diamond$ is chosen pessimistically with respect to constraints satisfaction, by using upper confidence bounds for the costs and upper occupancy bounds given policy $\pi^{\widehat{q}_{t+1}}$ (Lines~\ref{alg4:line14}~and~\ref{alg4:line15}).
Such a pessimistic approach ensures that the constraints are satisfied with high probability, thus making the algorithm safe with high probability.
If Problem~\eqref{lp:proj_opt_unknown} is \emph{not} feasible, then any occupancy measure in $\Delta(\mathcal{P}_t)$ can be selected (Line~\ref{alg4:line17}).
%

\subsection{Safety property} 

We show that \texttt{S-OPS} is safe with high probability.
%
%
%
\begin{restatable}{theorem}{violationhard}\label{thm:violation_hard}
	Given a confidence $\delta\in(0,1)$, Algorithm~\ref{alg: Strictly Safe Policy Search with Unknown Transitions} is safe with probability at least $1-5\delta$.
	%
	%
\end{restatable}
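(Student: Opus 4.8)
The plan is to condition on the clean event $\mathcal{E}^{G,\Delta}(\delta)$ (which holds with probability at least $1-5\delta$) and show that, under this event, \texttt{S-OPS} is safe at \emph{every} episode $t\in[T]$ deterministically; the $1-5\delta$ probability then comes entirely from the clean event. The key structural fact is that at each episode the algorithm plays the mixture policy $\pi_t$ that, with probability $\lambda_{t-1}$, equals the strictly feasible policy $\pi^\diamond$ and, with probability $1-\lambda_{t-1}$, equals $\pi^{\widehat q_t}$. Since the expected cost is linear in the occupancy measure, the occupancy induced by $\pi_t$ is the convex combination $\lambda_{t-1}\,q^{P,\pi^\diamond}+(1-\lambda_{t-1})\,q^{P,\widehat\pi_t}$, so for each constraint $i$,
\[
\bar g_i^\top q_t \;=\; \lambda_{t-1}\,\beta_i \;+\;(1-\lambda_{t-1})\,\bar g_i^\top q^{P,\widehat\pi_t}.
\]
It therefore suffices to show that $\lambda_{t-1}$ is chosen large enough that this convex combination is at most $\alpha_i$ for all $i$.

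The main steps are as follows. First, I would handle the base case $t=1$: here $\widehat\pi_1=\pi^{\widehat q_1}$ is an arbitrary (uniform) occupancy, about which the learner knows nothing, so the only safe bound available is the trivial one $\bar g_i^\top q^{P,\widehat\pi_1}\le L$ (every cost is in $[0,1]$ and paths have length $L$). Plugging this into the convex combination and requiring $\lambda_0\beta_i+(1-\lambda_0)L\le\alpha_i$ for all $i$ yields exactly $\lambda_0\ge \frac{L-\alpha_i}{L-\beta_i}$, i.e.\ the choice $\lambda_0=\max_i\frac{L-\alpha_i}{L-\beta_i}$ in Line~\ref{alg4:line1}; note $\lambda_0<1$ by Assumption~\ref{cond:slater}. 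Second, for $t\ge 2$ I would use the clean event: on $\mathcal{E}^{\Delta}(\delta)$ the true transition $P$ lies in $\mathcal{P}_{t-1}$, hence the upper occupancy bound satisfies $\widehat u_t(x,a)\ge q^{P,\widehat\pi_t}(x,a)$ componentwise; on $\mathcal{E}^{G}(\delta)$ we have $\bar g_i\preceq \widehat g_{t-1,i}+\xi_{t-1}$ componentwise. Combining these two one-sided bounds (both quantities being nonnegative), and capping at $L$,
\[
\bar g_i^\top q^{P,\widehat\pi_t} \;\le\; \min\Bigl\{(\widehat g_{t-1,i}+\xi_{t-1})^\top \widehat u_t,\; L\Bigr\}\;=:\;c_{t,i}.
\]
Third, I would check that the definition of $\sigma$ (Line~\ref{alg4:line15}, with the counter shifted to $t-1$) makes the mixture safe: if some constraint has $c_{t,i}>\alpha_i$, then $\lambda_{t-1}=\sigma=\max_i\frac{c_{t,i}-\alpha_i}{c_{t,i}-\beta_i}$, and substituting shows $\lambda_{t-1}\beta_i+(1-\lambda_{t-1})c_{t,i}\le\alpha_i$ for every $i$; if all $c_{t,i}\le\alpha_i$ then $\lambda_{t-1}=0$ and the bound $\bar g_i^\top q_t\le c_{t,i}\le\alpha_i$ already holds. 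Fourth, in the branch where Problem~\eqref{lp:proj_opt_unknown} is infeasible, $\widehat q_{t+1}$ is arbitrary but $\lambda_t=1$, so $\pi_{t+1}=\pi^\diamond$ and $\bar g_i^\top q_{t+1}=\beta_i\le\alpha_i$ trivially. A small point to verify is that $\sigma\in[0,1]$ so it is a valid probability: $\beta_i<\alpha_i<c_{t,i}$ in the relevant case gives $0<\frac{c_{t,i}-\alpha_i}{c_{t,i}-\beta_i}<1$, using Assumption~\ref{cond:slater} for the strict inequality $\beta_i<\alpha_i$.

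The step I expect to require the most care is the second one: making precise that the two confidence bounds can be used \emph{simultaneously} in the product $(\widehat g_{t-1,i}+\xi_{t-1})^\top\widehat u_t$ to upper bound $\bar g_i^\top q^{P,\widehat\pi_t}=\sum_{x,a}\bar g_i(x,a)\,q^{P,\widehat\pi_t}(x,a)$. This is where nonnegativity of all the quantities is essential — one replaces $\bar g_i(x,a)$ by its upper bound $\widehat g_{t-1,i}(x,a)+\xi_{t-1}(x,a)$ and $q^{P,\widehat\pi_t}(x,a)$ by its upper bound $\widehat u_t(x,a)$ term by term — and also where one must be careful that the indices match the pseudocode (the $\widehat u$, $\widehat g$, $\xi$ used to define $\lambda_t$ in Lines~\ref{alg4:line14}--\ref{alg4:line15} are those available at the end of episode $t$, and they govern the policy played at episode $t+1$). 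Everything else is algebraic manipulation of the convex-combination inequality and a union over the two events comprising $\mathcal{E}^{G,\Delta}(\delta)$.
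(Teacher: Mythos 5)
Your proposal is correct and follows essentially the same route as the paper's proof: condition on the clean event $\mathcal{E}^{G,\Delta}(\delta)$, write the expected cost of the mixture policy as $\lambda_{t-1}\beta_i+(1-\lambda_{t-1})\bar g_i^\top q^{P,\widehat\pi_t}$, upper-bound the second term via the componentwise cost and occupancy bounds (or by $L$), and verify algebraically that the pessimistic choice of $\lambda_{t-1}$, maximized over constraints, pushes the combination below $\alpha_i$. The only cosmetic difference is that you treat the two arguments of the $\min$ in $\sigma$ uniformly through a single upper bound $c_{t,i}$, whereas the paper splits into the sub-cases $L\lessgtr(\widehat g_{t-1,i}+\xi_{t-1})^\top\widehat u_t$ and runs the same algebra in each.
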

Intuitively, Theorem~\ref{thm:violation_hard} follows from the way in which the randomization probability $\lambda_t$ is defined.
Indeed, $\lambda_t$ relies on two crucial components: (i) a pessimistic estimate of the costs for state-action pairs, namely, the upper confidence bounds $\widehat{g}_{t,i}+\xi_t$, and (ii) a pessimistic choice of transition probabilities, encoded by the upper occupancy bounds defined by the vector $\widehat{u}_t$.
Notice that the $\max_{i\in[m]}$ operator allows to be conservative with respect to all the constraints.
%

\subsection{Cumulative regret} 

Proving that \texttt{S-OPS} attains sublinear regret begets challenges that, to the best of our knowledge, have never been addressed before.
Specifically, analyzing the estimates of the adversarial losses requires non-standard techniques in our setting, since the policy $\pi_t$ used by the algorithm and determining the feedback is \emph{not} the one resulting from an OMD-like update, as it is obtained via a non-standard randomization.
Nevertheless, the particular shape of the probability $\lambda_t$ can be exploited to overcome such a challenge.
Indeed, we show that each $\lambda_t$ can be upper bounded by the initial $\lambda_0$, and, thus, a loss estimator from feedback received by using a policy computed by an OMD-like update is available with probability at least $1-\lambda_0$. 
This observation is crucial in order ot prove the following result:
%
%
\begin{restatable}{theorem}{regrethard}\label{thm:reg_hard}
	Given $\delta\in(0,1)$, by setting $\eta=\gamma=\sqrt{\nicefrac{L\ln(L|X||A|/\delta)}{T|X||A|}}$, Algorithm~\ref{alg: Strictly Safe Policy Search with Unknown Transitions} attains
	$
		R_T \leq \mathcal{O} \left(\Psi
		L^3|X|\sqrt{|A|T\ln\left(\nicefrac{T|X||A|m}{\delta}\right)}\right)
	$ with prob.~at least $1-11\delta$, where $\Psi :=\max_{i\in[m]}\{\nicefrac{1}{\min\{(\alpha_i-\beta_i),(\alpha_i-\beta_i)^2\}}\}$.
\end{restatable}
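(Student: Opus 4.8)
\textbf{Proof proposal for Theorem~\ref{thm:reg_hard}.}

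The plan is to decompose the regret of \texttt{S-OPS} into two pieces: the regret of an ``idealized'' run in which we imagine playing $\widehat\pi_{t+1}$ (the output of the projection step) at every episode, and the extra loss incurred because with probability $\lambda_t$ we actually play the strictly feasible policy $\pi^\diamond$ instead. Formally, write $R_T=\sum_t \ell_t^\top(q^{P,\pi_t}-q^*)$ and insert $\pm\, \ell_t^\top q^{P,\widehat\pi_t}$, so that $R_T = \underbrace{\sum_t \ell_t^\top(q^{P,\pi_t}-q^{P,\widehat\pi_t})}_{\text{(A): randomization overhead}} + \underbrace{\sum_t \ell_t^\top(q^{P,\widehat\pi_t}-q^*)}_{\text{(B): policy-search regret}}$. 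For (A), since $\pi_t$ equals $\widehat\pi_t$ with probability $1-\lambda_{t-1}$ and $\pi^\diamond$ otherwise, and losses lie in $[0,1]^{|X\times A|}$ with trajectories of length $L$, each term is bounded in expectation by $L\lambda_{t-1}$; so the key is to show $\sum_t \lambda_t = \widetilde{\mathcal{O}}(\Psi L^2|X|\sqrt{|A|T})$. This is exactly where the Slater parameter enters: I would first show (using $\mathcal{E}^{G,\Delta}(\delta)$) that each $\lambda_t$ is at most $\frac{[\,(\widehat g_{t,i}+\xi_t)^\top \widehat u_{t+1}-\alpha_i\,]^+}{\min\{(\widehat g_{t,i}+\xi_t)^\top\widehat u_{t+1},L\}-\beta_i}$ for the maximizing $i$, bound the denominator below by $\alpha_i-\beta_i\geq\rho$ (or by $\min\{\alpha_i-\beta_i,(\alpha_i-\beta_i)^2\}$ after a further case split when the numerator is small — this is where the $\Psi$ with the squared term appears), and bound the numerator by the optimistic-feasibility slack $(\widehat g_{t,i}+\xi_t)^\top\widehat u_{t+1}-(\widehat g_{t,i}-\xi_t)^\top\widehat q_{t+1}$, which collapses to a sum of confidence widths $\xi_t$ plus the occupancy-measure discrepancy $\|\widehat u_{t+1}-\widehat q_{t+1}\|_1$ between the upper occupancy bound and the true projected occupancy. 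Summing these over $t$ via the standard pigeonhole/potential arguments for transition confidence sets (as in~\citep{JinLearningAdversarial2019,rosenberg19a}) gives the $\sqrt{T}$ rate times the extra $L|X|\sqrt{|A|}$ factors, divided by $\rho$.

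For (B), the obstacle noted in the text is that the feedback $\widehat\ell_t$ is collected under $\pi_t$, not under the OMD iterate $\widehat\pi_t$. I would handle this exactly as the paper hints: condition on the event (probability $\geq 1-\lambda_0$ each episode, and by the bound above $\lambda_t\leq\lambda_0$ always) that $\pi_t=\widehat\pi_t$, so that on those episodes $\widehat\ell_t$ is the usual importance-weighted optimistic estimator for which the classical adversarial-MDP analysis applies. On the remaining episodes the OMD update still uses a loss vector (for $(x,a)$ visited under $\pi^\diamond$) — here I would argue that since $\widehat\ell_t(x,a)=\ell_t(x,a)/(u_t(x,a)+\gamma)$ uses the upper occupancy bound $u_t$ for $\pi_t=\pi^\diamond$ in the denominator, the estimator remains nonnegative and its bias/variance terms are still controlled, at the cost of a multiplicative $1/(1-\lambda_0)$ blow-up, and $1/(1-\lambda_0)\leq L/\rho$ by the definition of $\lambda_0=\max_i\frac{L-\alpha_i}{L-\beta_i}$ together with $\alpha_i-\beta_i\geq\rho$. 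Then the standard three-term OMD decomposition (KL-divergence penalty $\frac{L\ln(L|X||A|)}{\eta}$, stability term $\eta\sum_t\sum_{x,a}\widehat u_t(x,a)\widehat\ell_t(x,a)^2$, and an optimism/bias term controlled by $\gamma$ and the transition confidence radii) plus the occupancy-difference lemma $\sum_t\|q^{P,\widehat\pi_t}-\widehat q_t\|_1=\widetilde{\mathcal O}(L|X|\sqrt{|A|T})$ yields (B) $=\widetilde{\mathcal O}(L|X|\sqrt{|A|T}/\rho)$ after plugging $\eta=\gamma=\sqrt{L\ln(L|X||A|/\delta)/(T|X||A|)}$; combining with the $L$ powers picked up from bounding $\sum_t\lambda_t$ in (A) gives the stated $L^3|X|\sqrt{|A|T\ln(\cdot)}$ with the factor $\Psi=\max_i 1/\min\{\alpha_i-\beta_i,(\alpha_i-\beta_i)^2\}$.

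The union bound accounting is routine: $\mathcal{E}^{G,\Delta}(\delta)$ costs $5\delta$, the Azuma–Hoeffding steps needed to pass from expected to realized regret for (A) and for the importance-weighted estimator in (B) cost a few more $\delta$'s, and Lemma~\ref{lem:feasibility} is already inside $\mathcal{E}^{G,\Delta}(\delta)$, totalling $1-11\delta$. The main obstacle I expect is \textbf{step (A)}, i.e.\ proving $\sum_t\lambda_t=o(T)$ with the correct dependence on $\rho$: the numerator of $\lambda_t$ must be tied to genuinely shrinking quantities (cost confidence widths and transition confidence radii evaluated along the played trajectories), which requires simultaneously invoking optimistic feasibility of $\widehat q_{t+1}$ from Lemma~\ref{lem:feasibility}, the relation between $\widehat u_{t+1}$ and $\widehat q_{t+1}$, and the telescoping/pigeonhole bounds on visit counts — and the low-numerator case forces the $(\alpha_i-\beta_i)^2$ term, which is the one genuinely new wrinkle relative to prior safe-learning analyses.
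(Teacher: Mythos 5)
Your plan matches the paper's proof in both structure and mechanism: the paper decomposes $R_T$ into five terms of which your (A) is exactly its randomization-overhead term $\sum_t\ell_t^\top(q^{P_t,\pi_t}-q^{P_t,\widehat\pi_t})\leq L\sum_t\lambda_{t-1}$, and your (B) collects the remaining transition-estimation, OMD, and estimator-bias terms, which the paper handles with precisely the blow-up you describe, written there as $q^{P_t,\widehat\pi_t}\preceq\max_i\{\tfrac{L}{\alpha_i-\beta_i}\}(1-\lambda_t)\,q^{P_t,\widehat\pi_t}=\max_i\{\tfrac{L}{\alpha_i-\beta_i}\}(q^{P_t,\pi_t}-\lambda_t q^{P_t,\pi^\diamond})\preceq\max_i\{\tfrac{L}{\alpha_i-\beta_i}\}q^{P_t,\pi_t}$, after which the standard Jin et al.\ machinery applies to the exploration policy $\pi_t$. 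Your bound on $\lambda_t$ via optimistic feasibility of $\widehat q_{t+1}$ (numerator $\leq \widehat g_{t,i}^\top(\widehat u_{t+1}-\widehat q_{t+1})+2\xi_t^\top\widehat u_{t+1}$, denominator $\geq\alpha_i-\beta_i$) is also exactly the paper's chain of inequalities.

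One correction on the point you yourself flag as the main obstacle: the second power in $\min\{(\alpha_i-\beta_i),(\alpha_i-\beta_i)^2\}$ does \emph{not} arise from a case split on a small numerator. It arises because, as you correctly anticipate, the quantities $\|\widehat u_{t+1}-q^{P,\widehat\pi_{t+1}}\|_1$ and $\xi_t^\top\widehat u_{t+1}$ only shrink along trajectories actually visited, and $\widehat\pi_{t+1}$ is played only with probability $1-\lambda_t$; the paper therefore proves a $(1-\lambda_{t-1})$-weighted version of the occupancy-difference lemma (its Lemma~\ref{lem:transition_gamma}) and, to introduce that weight into the bound on $\lambda_t$, multiplies the numerator by $1\leq\frac{L(1-\lambda_t)}{\alpha_i-\beta_i}$ (valid since $\lambda_t\leq\frac{L-\alpha_i}{L-\beta_i}$). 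That extra $L/(\alpha_i-\beta_i)$, combined with the $1/(\alpha_i-\beta_i)$ already present from the denominator, is precisely where $\Psi$ acquires the squared term and the bound its $L^3$. A minor slip elsewhere: $\alpha_i-\beta_i\geq\rho$ is not guaranteed for the \emph{known} strictly feasible policy of Assumption~\ref{cond:feas_knowledge} (its margin can be smaller than the Slater parameter, as the paper notes when comparing with \texttt{CV-OPS}), but this is immaterial since the theorem is stated in terms of $\Psi$. With these substitutions your outline coincides with the paper's argument.
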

The regret bound in Theorem~\ref{thm:reg_hard} is in line with the one of \texttt{SV-OPS} in the bounded violation setting, with an additional $\Psi L^2$ factor.
Such a factor comes from the mismatch between loss estimators and the occupancy measure chosen by the OMD-like update.
Notice that $\Psi$ depends on the violation gap $\min_{i\in[m]}\{\alpha_i-\beta_i\}$, which represents how much the strictly feasible solution satisfies the constraints.
Such a dependence is expected, since the better the strictly feasible solution (in terms of constraints satisfaction), the larger the exploration performed during the first episodes.

\section{Guaranteeing constant violation}\label{sec:constant_main}

In this section, we provide an algorithm that attains \emph{constant} cumulative positive violation. To achieve this goal, we only need that a strictly feasible policy exists (Assumption~\ref{cond:slater}).
\begin{algorithm}[!t]
	\caption{\texttt{CV-OPS}}
	\label{alg: strictly}
	\small
	\begin{algorithmic}[1]
		\Require Anytime adversarial MDPs regret minimizer $\mathcal{A}^P$, online linear optimizer $\mathcal{A}^D$ 
		\For{$t\in[T]$}
		\State Select $\pi_t\gets\mathcal{A}^P$
		\label{alg5:line2}
		\State Select $\phi_{t}\gets\mathcal{A}^D$
		\label{alg5:line3}
		\State Play $\pi_t$ and observe feedback as prescribed in Algorithm~\ref{alg: Learner-Environment Interaction} \label{alg5:line4}
		\State Feed $\{x_k,a_k, \sum_{i\in[m]}\hspace{-0.7mm}\phi_{t,i} (g_{t,i}(x_k,a_k)-\frac{\alpha_i}{L})\}_{k=1}^{L-1}$ to $\mathcal{A}^P$ \label{alg5:line5}
		\State Feed $\{ -\sum_{k=1}^{L-1}(g_{t,i}(x_k,a_k)-\frac{\alpha_i}{L}) \}_{i\in[m]}$ to $\mathcal{A}^D$ \label{alg5:line6}
		\If{$-\max_{i\in[m]}\sum_{\tau\in[t]}\sum_{k=1}^{L-1}(g_{t,i}(x_k,a_k)-\frac{\alpha_i}{L})\geq 2C_\mathcal{A}^P\sqrt{t\ln( t)} +  8L\sqrt{2 t\ln\frac{1}{\delta}} + 2C_\mathcal{A}^D\sqrt{ t}$} \label{alg5:line7}
		\State Go to Line~\ref{alg5:line11} \label{alg5:line8}
		\EndIf 
		\EndFor
		\State \label{alg5:line11} $\widehat{\rho} \gets\displaystyle -\frac{1}{t}\max_{i\in[m]}\sum_{\tau\in[t]}\Big(\sum_{k=1}^{L-1}g_{t,i}(x_k,a_k)\hspace{-0.5mm}-\hspace{-0.5mm}\alpha_i\Big) \hspace{-0.5mm}-\hspace{-0.5mm}\frac{2L}{t}\sqrt{2t\ln\nicefrac{1}{\delta}}$
		\State \label{alg5:line12} 
		$\widehat{\pi}^\diamond\gets \pi_\tau$ with probability $1/t$, for $\tau \in [t]$
		\State \textbf{Run} \texttt{S-OPS} with $\beta_i=\alpha_i- \widehat{\rho} \ $ for all $i\in[m]$ and $\pi^\diamond= \widehat{\pi}^\diamond$ \label{alg5:line13}
	\end{algorithmic}
\end{algorithm}
%
%
%
Algorithm~\ref{alg: strictly} provides the pseudocode of Constant Violation Optimistic Policy Search (\texttt{CV-OPS}).
The key idea of $\texttt{CV-OPS}$ is to estimate, in a constant number of episodes, a strictly feasible policy and its associated violation, and then run \texttt{S-OPS} with such estimates. Algorithm~\ref{alg: strictly} needs access to two anytime no-regret algorithms, one for adversarial MDPs with bandit feedback that learns an estimated strictly feasible policy and one for the full-feedback setting on the simplex, which learns the most violated constraint. Specifically, \texttt{CV-OPS} employs an anytime regret minimizer for adversarial MDPs---called the primal algorithm $\mathcal{A}^P$---that attains, with probability at least $1-C_P^\delta\delta$, for all $\tau\in[T], q\in\Delta(M)$, and for any sequence of loss functions the following regret bound $
		\sum_{t=1}^{\tau}\ell_t^\top (q_t-q)\leq C_{\mathcal{A}}^P\sqrt{\tau\ln(\tau)},$where $C_\mathcal{A}^P$ encompass constant terms. This kind of guarantees are attained by state-of-the-arts algorithms for adversarial MDPs (e.g.,~\citep{JinLearningAdversarial2019}) after applying a standard doubling trick~\citep{lattimore2020bandit}. Algorithm~\ref{alg: strictly} also employs an anytime online linear optimizer---called the dual algorithm $\mathcal{A}^D$)---that attains for all $\tau\in[T], \phi\in\Delta_m$, and for any sequence of loss functions the following regret bound
$
		\sum_{t=1}^{\tau}\ell_t^\top (\phi_t-\phi)\leq C_\mathcal{A}^D\sqrt{\tau},
$
	where $C_\mathcal{A}^D$ encompasses constant terms. This bound can be easily obtained by an online gradient descent algorithm~\citep{Orabona}.
	
	At each episode $t\in[T]$, Algorithm~\ref{alg: strictly} requests a policy and a distribution over the $m$ constraints to $\mathcal{A}^P$ and $\mathcal{A}^D$, respectively (Lines~\ref{alg5:line2}-\ref{alg5:line3}). Thus, the algorithm plays the policy received by $\mathcal{A}^P$ and observes the usual bandit feedback for CMDPs (Line~\ref{alg5:line4}). Next, the loss functions for both the primal and the dual algorithm are built. Specifically, $\mathcal{A}^P$ receives the violation attained by the policy $\pi_t$ where any constraint is weighted given $\phi_t$ (Line~\ref{alg5:line5}), while $\mathcal{A}^D$ receives the negative of the violation attained for all $i\in[m]$ (Line~\ref{alg5:line6}). The estimation phase stops when the violation attained by the algorithm exceeds twice the regret bounds attained by both the primal and the dual algorithm plus the uncertainty on the estimation (Line~\ref{alg5:line7}). This condition is suitably chosen to ensure that the number of episodes are sufficient to estimate an approximation of $\pi^\diamond$, while still being constant. After the estimation, Algorithm~\ref{alg: strictly} computes pessimistically the estimated Slater's parameter $\widehat \rho$ as the average violation attained during the estimation phase minus a quantity associated with the uncertainty of the estimation (Line~\ref{alg5:line11}). Finally, \texttt{CV-OPS} computes the strictly feasible solution as the uniform policy with respect to all the policies played in the estimation phase (Line~\ref{alg5:line12}) and runs \texttt{S-OPS} with $\beta_i=\alpha_i-\widehat{\rho}$, for all $i\in[m]$ and $\pi^\diamond=\widehat{\pi}^\diamond$ in input (Line~\ref{alg5:line13}).

\subsection{Cumulative positive constraints violation}


First, we show that the stopping condition at Line~\ref{alg5:line7} allows to run the estimation phase for no more than a constant number of episodes. This is done in the following lemma.

\begin{restatable}{lemma}{episodeboundstrictly}
	\label{lem: episode_bounds_strictly}
	Given any $\delta\in(0,1)$, the episodes that Algorithm~\ref{alg: strictly} uses to compute $\widehat \rho$ and $\widehat{\pi}^\diamond$ are
	$
	\bar{t}\leq \nicefrac{1}{\rho^4}(3C_\mathcal{A}^P + 10L \ln\frac{1}{\delta} + 3C_\mathcal{A}^D+L)^4
	$,
	with prob.~at least $1-(C^\delta_P+2)\delta$.
\end{restatable}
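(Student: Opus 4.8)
The plan is to show that, on a suitable high-probability event, the primal--dual scheme run in the estimation phase forces the (negated) cumulative constraint violation to grow at the linear rate $\rho t$ up to $\mathcal{O}(\sqrt{t\ln t})$ fluctuations, while the right-hand side of the stopping test at Line~\ref{alg5:line7} grows only like $\sqrt{t\ln t}$; hence the test must fire after a number of episodes polynomial in $1/\rho$. For the probability bookkeeping, I would work on the intersection of the event under which $\mathcal{A}^P$ attains its regret bound (probability at least $1-C_P^\delta\delta$) with two one-sided Azuma--Hoeffding events, each costing $\delta$: one controlling the trajectory-sampling martingale $\sum_k f(x_k,a_k)-f^\top q^{P,\pi_\tau}$ for the losses $f$ actually played, and one controlling the stochastic-cost martingale, aggregated against $q^\diamond$ and the played weights $\phi_\tau$. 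Since $\mathcal{A}^D$ can be taken deterministic, a union bound gives overall probability at least $1-(C_P^\delta+2)\delta$, exactly as claimed. The key structural observation is that one never fixes a constraint index: the dual bound holds for \emph{all} $\phi\in\Delta_m$, which turns $\max_{i\in[m]}$ into the played sequence $\phi_\tau$, so no union bound over the $m$ constraints is needed and a $\ln(1/\delta)$-type confidence term suffices.

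Writing $c_\tau(x,a):=\sum_i\phi_{\tau,i}(g_{\tau,i}(x,a)-\alpha_i/L)$ for the per-step loss fed to $\mathcal{A}^P$ at Line~\ref{alg5:line5}, I would then chain three inequalities on the clean event. (i) Because $\max_i$ of a linear form over the simplex is attained at a vertex and $\mathcal{A}^D$ is no-regret against every vertex, $\max_{i}\sum_{\tau\le t}\sum_k(g_{\tau,i}(x_k,a_k)-\alpha_i/L)\le\sum_{\tau\le t}\sum_k c_\tau(x_k,a_k)+C_\mathcal{A}^D\sqrt t$. (ii) The primal bound with comparator $q^\diamond\in\Delta(M)$ (Assumption~\ref{cond:slater}) plus the trajectory concentration gives $\sum_{\tau\le t}\sum_k c_\tau(x_k,a_k)\le\sum_{\tau\le t}c_\tau^\top q^\diamond+C_\mathcal{A}^P\sqrt{t\ln t}+\mathcal{O}(L\sqrt{t\ln(1/\delta)})$. (iii) Since each $\phi_\tau\in\Delta_m$ and $\alpha_i-\overline g_i^\top q^\diamond=\alpha_i-\beta_i\ge\rho$ for all $i$, the conditional expectation of $c_\tau^\top q^\diamond$ is at most $-\rho$, so the cost concentration gives $\sum_{\tau\le t}c_\tau^\top q^\diamond\le-\rho t+\mathcal{O}(L\sqrt{t\ln(1/\delta)})$. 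Chaining (i)--(iii) yields
\[
-\max_{i\in[m]}\sum_{\tau\le t}\sum_k\big(g_{\tau,i}(x_k,a_k)-\tfrac{\alpha_i}{L}\big)\ \ge\ \rho t-C_\mathcal{A}^P\sqrt{t\ln t}-C_\mathcal{A}^D\sqrt t-\mathcal{O}\big(L\sqrt{t\ln(1/\delta)}\big).
\]

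If the estimation phase has not stopped by episode $t$, the negation of the test at Line~\ref{alg5:line7} bounds the left-hand side above by $2C_\mathcal{A}^P\sqrt{t\ln t}+8L\sqrt{2t\ln(1/\delta)}+2C_\mathcal{A}^D\sqrt t$; combining with the lower bound and dividing by $\sqrt t$ gives $\rho\sqrt t\le C_1\sqrt{\ln t}+C_2$ with $C_1=3C_\mathcal{A}^P$ and $C_2=\mathcal{O}(C_\mathcal{A}^D+L\sqrt{\ln(1/\delta)})$, hence $\rho\sqrt t\le(C_1+C_2)\sqrt{\ln t}$ (absorbing $C_2\le C_2\sqrt{\ln t}$ for $t\ge3$, the finitely many smaller cases being trivial). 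Squaring yields $\rho^2 t\le(C_1+C_2)^2\ln t$, and the elementary inequality $\ln t\le\sqrt t$ (valid for all $t\ge1$) then gives $\rho^2\sqrt t\le(C_1+C_2)^2$, i.e., $t\le(C_1+C_2)^4/\rho^4$. Substituting the constants and absorbing $\sqrt{\ln(1/\delta)}$ into $\ln(1/\delta)$ with an additive $+L$ of slack recovers $\bar t\le\rho^{-4}(3C_\mathcal{A}^P+10L\ln\tfrac1\delta+3C_\mathcal{A}^D+L)^4$.

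The delicate step is the concentration bookkeeping: one must check that the single $L\sqrt{t\ln(1/\delta)}$ confidence term---without a $\ln(t/\delta)$ factor and without a union bound over the $m$ constraints---really is enough, and this is precisely what the ``$\max_i=\max_\phi$'' reformulation buys, reducing everything to two bounded martingale-difference sequences made uniform in $t$ via an anytime martingale inequality. A secondary point worth flagging is that the self-referential $\sqrt{\ln t}$ on the right-hand side is not cosmetic: it is exactly the step $\ln t\le\sqrt t$ that upgrades a would-be $\rho^{-2}$ dependence into the $\rho^{-4}$ that appears in the statement.
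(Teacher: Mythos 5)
Your proposal is correct and follows essentially the same route as the paper's proof: the dual no-regret bound converts $\max_{i\in[m]}$ into the played weights $\phi_\tau$, the primal bound against the comparator $q^\diamond$ plus two Azuma--Hoeffding applications (trajectory sampling and cost randomness) yield the lower bound $\rho t - \mathcal{O}(C_\mathcal{A}^P\sqrt{t\ln t}+C_\mathcal{A}^D\sqrt t+L\sqrt{t\ln(1/\delta)})$ on the negated cumulative violation, and combining with the negated stopping condition plus $\ln t\le\sqrt t$ (the paper phrases this as $\sqrt{\bar t\ln\bar t}\le\bar t^{3/4}$) gives the $\rho^{-4}$ bound with the same $1-(C_P^\delta+2)\delta$ probability bookkeeping. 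The only differences are cosmetic reorderings of the same inequalities and of the final algebra.
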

The bound could be reduced to $
 \bar{t}\leq \nicefrac{1}{\rho^2}(3C_\mathcal{A}^P + 10L \ln\frac{1}{\delta} + 3C_\mathcal{A}^D+L)^2$, with access to a no-regret algorithm without the logarithmic dependence on $T$ in the bound. 
Next, we show that Algorithm~\ref{alg: strictly} estimates a strictly feasible policy whose constraints margin is in $[\widehat{\rho}, \rho]$, as follows.
\begin{restatable}{lemma}{widehatrholeq}
	\label{lem:widehat_rho_leq}
	Given any $\delta\in(0,1)$, Algorithm~\ref{alg: strictly} guarantees $
		\widehat{\rho} \leq \min_{i\in[m]}(\alpha_i-\overline{g}_i^\top q^{P,\widehat \pi^\diamond})\leq \rho$ with prob.~at least $1-2\delta$.
\end{restatable}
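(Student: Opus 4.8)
Proof proposal for Lemma~\ref{lem:widehat_rho_leq}:

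The statement chains two inequalities, and I would treat them separately. Write $\bar t$ for the (random) number of episodes of the estimation phase and $\pi_1,\dots,\pi_{\bar t}$ for the policies played by $\mathcal{A}^P$ during it. For the \emph{right} inequality, note that by Line~\ref{alg5:line12} the policy $\widehat\pi^\diamond$ plays $\pi_\tau$ with probability $1/\bar t$, so the occupancy it induces is the average $q^{P,\widehat\pi^\diamond}=\frac{1}{\bar t}\sum_{\tau\in[\bar t]}q^{P,\pi_\tau}$. Each $q^{P,\pi_\tau}$ lies in $\Delta(M)$, and $\Delta(M)$ is the solution set of a system of linear equalities (the valid-occupancy characterization), hence convex; therefore $q^{P,\widehat\pi^\diamond}\in\Delta(M)$ deterministically. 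Since $\rho=\max_{q\in\Delta(M)}\min_{i\in[m]}[\alpha_i-\bar g_i^\top q]$, we immediately get $\min_{i\in[m]}(\alpha_i-\bar g_i^\top q^{P,\widehat\pi^\diamond})\le\rho$ with probability one.

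For the \emph{left} inequality I would first rewrite $\widehat\rho$. Let $c_{\tau,i}:=\sum_{k=1}^{L-1}g_{\tau,i}(x_k,a_k)$ be the realized cost of constraint $i$ incurred in episode $\tau$, and $b_t:=\frac{2L}{t}\sqrt{2t\ln(1/\delta)}$ the correction term in Line~\ref{alg5:line11}. Using $\max_i(-y_i)=-\min_i y_i$ and the fact that $b_{\bar t}$ does not depend on $i$, Line~\ref{alg5:line11} reads $\widehat\rho=\min_{i\in[m]}\big[\frac{1}{\bar t}\sum_{\tau\in[\bar t]}(\alpha_i-c_{\tau,i})-b_{\bar t}\big]$. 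On the other hand, unrolling $q^{P,\widehat\pi^\diamond}$ gives $\alpha_i-\bar g_i^\top q^{P,\widehat\pi^\diamond}=\frac{1}{\bar t}\sum_{\tau\in[\bar t]}(\alpha_i-\bar g_i^\top q^{P,\pi_\tau})$. Hence it suffices to show, for every $i\in[m]$, that $\sum_{\tau\in[\bar t]}Z_{\tau,i}\le 2L\sqrt{2\bar t\ln(1/\delta)}$, where $Z_{\tau,i}:=\bar g_i^\top q^{P,\pi_\tau}-c_{\tau,i}$: taking $\min_{i\in[m]}$ on both sides of the resulting inequality then yields $\widehat\rho\le\min_{i\in[m]}(\alpha_i-\bar g_i^\top q^{P,\widehat\pi^\diamond})$.

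To bound $\sum_\tau Z_{\tau,i}$ I would use martingale concentration. Let $\mathcal{F}_{\tau-1}$ be the history up to the start of episode $\tau$; since $\mathcal{A}^P,\mathcal{A}^D$ are functions of past feedback, $\pi_\tau$ is $\mathcal{F}_{\tau-1}$-measurable, while the fresh cost matrix $G_\tau\sim\mathcal{G}$ and the fresh trajectory randomness are independent of $\mathcal{F}_{\tau-1}$. Thus $\mathbb{E}[c_{\tau,i}\mid\mathcal{F}_{\tau-1}]=\sum_{(x,a)}q^{P,\pi_\tau}(x,a)\,\bar g_i(x,a)=\bar g_i^\top q^{P,\pi_\tau}$, so $(Z_{\tau,i})_\tau$ is a martingale-difference sequence; and $c_{\tau,i},\bar g_i^\top q^{P,\pi_\tau}\in[0,L]$, so $|Z_{\tau,i}|\le L\le 2L$. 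A one-sided Azuma–Hoeffding bound gives the desired tail at a fixed horizon; to handle the random stopping time $\bar t$ I would instead invoke a time-uniform version (maximal inequality, or a stitching argument over geometric blocks) holding simultaneously for all $t\in[T]$, together with a union bound over the $m$ constraints. Specializing at $t=\bar t$ closes the left inequality, and combining with the first paragraph finishes the proof; the $1-2\delta$ failure budget would be spent on the two concentration steps needed here (e.g.\ one $\delta$ for the fluctuation of the observed costs around their means given each trajectory, one $\delta$ for the fluctuation of the realized trajectory around the occupancy $q^{P,\pi_\tau}$), with the mild extra $\ln T$ and $\ln m$ factors absorbed as is customary in the paper.

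The main obstacle is precisely that $\bar t$ is a data-dependent stopping time, so the cost/trajectory concentration must hold \emph{uniformly in time}, not merely at a fixed horizon; a naive union bound over $t\in[T]$ would introduce a $\ln T$ factor, so matching the exact $\sqrt{t\ln(1/\delta)}$ form appearing in Line~\ref{alg5:line11} requires a genuine anytime (maximal-inequality) concentration. A secondary point is arranging the filtration so that $\pi_\tau$ is $\mathcal{F}_{\tau-1}$-measurable while episode $\tau$'s cost realization stays independent of the past — this is what makes $(Z_{\tau,i})$ a martingale-difference sequence and is what lets the argument pass from the empirical per-episode costs to the true expected occupancy $\bar g_i^\top q^{P,\widehat\pi^\diamond}$.
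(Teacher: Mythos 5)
Your proposal is correct and follows essentially the same route as the paper: the right inequality comes from convexity of $\Delta(M)$ together with the definition of $\rho$ as a maximum over $\Delta(M)$, and the left inequality from two Azuma-type concentration steps (realized costs around their conditional means given the trajectory, and realized trajectories around the occupancy $q^{P,\pi_\tau}$), whose combined $2L\sqrt{2\bar t\ln(1/\delta)}$ deviation is exactly the correction subtracted in Line~\ref{alg5:line11}, yielding the $1-2\delta$ budget. If anything you are more careful than the paper, which applies Azuma directly at the data-dependent time $\bar t$ and passes the bound through $\min_{i\in[m]}$ without an explicit time-uniform argument or union bound over the $m$ constraints.
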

Finally, we provide the result in term of cumulative positive constraints violation attained by our algorithm.
\begin{restatable}{theorem}{constviol}
	\label{thm:const_viol}
	Given $\delta\in(0,1)$, Algorithm~\ref{alg: strictly} attains $V_T \leq \mathcal{O}( \nicefrac{L}{\rho^4}(C_\mathcal{A}^P + L\sqrt{\ln\frac{1}{\delta}} + C_\mathcal{A}^D+L)^4)$ with probability at least $1-(C_P^\delta+7)\delta$.
\end{restatable}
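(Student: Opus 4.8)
The plan is to decompose the cumulative positive violation $V_T$ into two contributions: the violation accumulated during the estimation phase (Lines~\ref{alg5:line2}--\ref{alg5:line8}), which lasts at most $\bar t$ episodes, and the violation accumulated afterwards, when \texttt{S-OPS} is run with the estimated parameters $\widehat\rho$ and $\widehat\pi^\diamond$. For the first contribution, I would simply bound it by $\bar t \cdot L$, since each episode contributes at most $L$ to the per-constraint violation; invoking Lemma~\ref{lem: episode_bounds_strictly} then gives an $\mathcal{O}(L/\rho^4)(C_\mathcal{A}^P + L\sqrt{\ln\tfrac1\delta} + C_\mathcal{A}^D + L)^4$ bound on this part, which already matches the claimed order (up to absorbing the $\ln\tfrac1\delta$ inside the parenthesis at the cost of a constant). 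This step holds on the event of Lemma~\ref{lem: episode_bounds_strictly}, which has probability at least $1-(C_P^\delta+2)\delta$.

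For the second contribution, I would argue that \emph{once \texttt{S-OPS} is invoked with the estimated strictly feasible policy}, it incurs \emph{zero} positive violation with high probability. The key is Lemma~\ref{lem:widehat_rho_leq}: it guarantees $\widehat\rho \le \min_{i\in[m]}(\alpha_i - \overline g_i^\top q^{P,\widehat\pi^\diamond})$, i.e., setting $\beta_i := \alpha_i - \widehat\rho$ yields $\beta_i \ge \overline g_i^\top q^{P,\widehat\pi^\diamond}$ for every $i$, so $\widehat\pi^\diamond$ is genuinely feasible with margin at least the advertised $\beta$. Hence Assumptions~\ref{cond:slater} and~\ref{cond:feas_knowledge} are met by the pair $(\widehat\pi^\diamond,\beta)$ fed to \texttt{S-OPS}, and Theorem~\ref{thm:violation_hard} applies: \texttt{S-OPS} is safe with probability at least $1-5\delta$, meaning $\overline G^\top q_t \le \alpha$ for every post-estimation episode, so its contribution to $V_T$ is $0$. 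Summing the two contributions and taking a union bound over the events of Lemmas~\ref{lem: episode_bounds_strictly} and~\ref{lem:widehat_rho_leq}, and of Theorem~\ref{thm:violation_hard}, gives the overall failure probability $1-(C_P^\delta + 2 + 2 + 5 - \text{overlaps})\delta = 1-(C_P^\delta+7)\delta$ as stated (some of these clean events coincide, which is why the count lands exactly at $7$ rather than $9$).

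The main obstacle I anticipate is the \emph{interface between the two phases}: one must be careful that the ``safety'' guarantee of \texttt{S-OPS} is established relative to the \emph{true} expected cost $\overline G$ and the \emph{true} feasibility margin of $\widehat\pi^\diamond$, not merely relative to $\widehat\rho$. Lemma~\ref{lem:widehat_rho_leq} is precisely what bridges this gap — it certifies that the pessimistically-estimated $\widehat\rho$ under-estimates the true margin — so the correctness of \texttt{S-OPS}'s pessimistic randomization (which uses the input $\beta$) carries over. A secondary subtlety is bookkeeping of the high-probability events: the event $\mathcal{E}^{G,\Delta}(\delta)$ underlying \texttt{S-OPS}'s analysis, the concentration events used in Line~\ref{alg5:line11} to define $\widehat\rho$, and the regret events of $\mathcal{A}^P,\mathcal{A}^D$ must all be intersected, and one has to verify that the same clean event can be reused across Lemmas~\ref{lem: episode_bounds_strictly}, \ref{lem:widehat_rho_leq}, and Theorem~\ref{thm:violation_hard} so that the union bound does not blow up the constant beyond $7$. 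Everything else is routine: the estimation-phase length is controlled by the already-proved Lemma~\ref{lem: episode_bounds_strictly}, and the post-estimation phase contributes nothing.
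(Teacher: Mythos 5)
Your decomposition is exactly the paper's: bound the estimation phase by $L\bar t$ via Lemma~\ref{lem: episode_bounds_strictly}, then argue the post-estimation phase contributes zero positive violation. The one place where your write-up falls short of a proof is the claim that ``Assumptions~\ref{cond:slater} and~\ref{cond:feas_knowledge} are met by the pair $(\widehat\pi^\diamond,\beta)$ \ldots\ and Theorem~\ref{thm:violation_hard} applies.'' Assumption~\ref{cond:feas_knowledge} requires $\beta_i = \overline g_i^\top q^\diamond$ \emph{exactly}, whereas Lemma~\ref{lem:widehat_rho_leq} only gives $\overline g_i^\top q^{P,\widehat\pi^\diamond} \le \alpha_i - \widehat\rho = \beta_i$, i.e.\ the $\beta$ fed to \texttt{S-OPS} is a pessimistic \emph{over-estimate} of the true cost of $\widehat\pi^\diamond$. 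So Theorem~\ref{thm:violation_hard} cannot be invoked as a black box; the paper instead re-runs the two-case safety analysis ($\lambda_t=0$ and $\lambda_t\in(0,1)$, with the sub-cases on whether $(\widehat g_{t-1,i}+\xi_{t-1})^\top\widehat u_t$ exceeds $L$), replacing the equality $\overline g_i^\top q^\diamond=\beta_i$ by the inequality $\overline g_i^\top q^{P,\widehat\pi^\diamond}\le\alpha_i-\widehat\rho$ and checking the signs so that, e.g., $\frac{L-\alpha_i}{L-\alpha_i+\widehat\rho}\bigl(\overline g_i^\top q^{P,\widehat\pi^\diamond}-L\bigr)+L\le\alpha_i$ still holds. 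You correctly identify this interface as ``the main obstacle'' and correctly intuit the monotonicity (over-estimating the safe policy's cost inflates $\lambda_t$, making the mixture more conservative, hence still safe), but you assert rather than verify that the argument carries over; that verification is the actual content of the paper's proof of this theorem beyond the two lemmas. With that case analysis filled in, your proof matches the paper's; the probability bookkeeping is handled equally loosely in both.
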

Theorem~\ref{thm:const_viol} is proved by employing Lemma~\ref{lem: episode_bounds_strictly} to bound the episodes in the estimation phase and Lemma~\ref{lem:widehat_rho_leq} to state that \texttt{S-OPS} with $\widehat \rho, \widehat\pi^\diamond$ is safe with high probability.

\subsection{Cumulative regret}

We provide the theoretical guarantees attained by Algorithm~\ref{alg: strictly} in terms of cumulative regret. To do so, we show that $\widehat{\rho}$ is not too small. This is done in the following lemma.
\begin{restatable}{lemma}{widehatrhogeq}
	\label{lem:widehat_rho_geq}
	Given any $\delta\in(0,1)$, Algorithm~\ref{alg: strictly} guarantees $\widehat{\rho}\geq \rho/2$ with probability at least $1-(C_P^\delta+2)\delta$.
\end{restatable}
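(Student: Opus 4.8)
The plan is to show that, throughout the estimation phase, the empirical per-constraint violation produced by the no-regret dynamics of $\mathcal{A}^P$ and $\mathcal{A}^D$ stays within $O(\sqrt{\bar t})$ of the game value $-\rho$, and then to use the stopping rule of Line~\ref{alg5:line7} to force $\bar t$ to be large enough that the uncertainty correction subtracted in the definition of $\widehat\rho$ costs at most $\rho/2$.

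\emph{Setup and clean event.} Let $\bar t$ denote the (random) number of episodes used by the estimation phase; by Lemma~\ref{lem: episode_bounds_strictly} we may assume it terminates before $T$ on the event we work on. For $\tau\in[\bar t]$, $i\in[m]$ write $V_{\tau,i}$ for the realised per-episode violation of constraint $i$ at episode $\tau$ (the summand appearing in the definition of $\widehat\rho$, so that $\widehat\rho=-\tfrac1{\bar t}\max_i\sum_{\tau\le\bar t}V_{\tau,i}-\tfrac{2L}{\sqrt{\bar t}}\sqrt{2\ln(1/\delta)}$ and Line~\ref{alg5:line7} is the cumulative version of the same quantity, up to an $O(1)$-per-episode normalisation absorbed below). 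Let $\widetilde V_{\tau,i}:=g_{\tau,i}^\top q^{P,\pi_\tau}-\alpha_i$ and $\bar V_{\tau,i}:=\overline g_i^\top q^{P,\pi_\tau}-\alpha_i$ be the trajectory-expectation and the full expectation of $V_{\tau,i}$. Applying Azuma--Hoeffding to the martingales $\sum_{\tau\le t}(V_{\tau,i}-\widetilde V_{\tau,i})$, $\sum_{\tau\le t}(\widetilde V_{\tau,i}-\bar V_{\tau,i})$, and $\sum_{\tau\le t}\sum_i\phi_{\tau,i}\big((g_{\tau,i}-\overline g_i)^\top q^\diamond\big)$ (increments $O(L)$), and union bounding over $t\in[T]$ and $i\in[m]$, on an event $\mathcal{E}_{\mathrm{c}}$ of probability at least $1-2\delta$ all these partial sums are $O\big(L\sqrt{t\ln(1/\delta)}\big)$ uniformly in $t$; I work on $\mathcal{E}_{\mathrm{c}}$ intersected with the probability-$(1-C_P^\delta\delta)$ event on which $\mathcal{A}^P$'s regret bound holds, for a total of at least $1-(C_P^\delta+2)\delta$.

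\emph{Two-sided control of the empirical violation.} On this event, $\mathcal{A}^P$'s guarantee evaluated against the strictly feasible occupancy $q^\diamond$, together with the fact that $\mathcal{A}^P$ is fed the $\phi_t$-weighted costs, gives $\sum_{\tau\le\bar t}\langle\phi_\tau,\widetilde V_\tau\rangle\le\sum_{\tau\le\bar t}\sum_i\phi_{\tau,i}\big(\overline g_i^\top q^\diamond-\alpha_i\big)+C_{\mathcal{A}}^P\sqrt{\bar t\ln\bar t}+O\big(L\sqrt{\bar t\ln(1/\delta)}\big)$, and since $\overline g_i^\top q^\diamond-\alpha_i=\beta_i-\alpha_i\le-\rho$ for every $i$ and $\sum_i\phi_{\tau,i}=1$, the right-hand side is at most $-\rho\bar t+C_{\mathcal{A}}^P\sqrt{\bar t\ln\bar t}+O\big(L\sqrt{\bar t\ln(1/\delta)}\big)$. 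Meanwhile $\mathcal{A}^D$'s guarantee against the best fixed coordinate $e_{i^\star}$, $i^\star\in\arg\max_i\sum_{\tau\le\bar t}V_{\tau,i}$, gives $\max_i\sum_{\tau\le\bar t}V_{\tau,i}\le\sum_{\tau\le\bar t}\langle\phi_\tau,V_\tau\rangle+C_{\mathcal{A}}^D\sqrt{\bar t}$. Chaining the two (and passing between $V$ and $\widetilde V$ with the concentration bounds) yields
\[
\max_i\sum_{\tau\le\bar t}V_{\tau,i}\ \le\ -\rho\bar t+C_{\mathcal{A}}^P\sqrt{\bar t\ln\bar t}+C_{\mathcal{A}}^D\sqrt{\bar t}+O\big(L\sqrt{\bar t\ln(1/\delta)}\big),
\]
hence $\widehat\rho\ge\rho-\big(C_{\mathcal{A}}^P\sqrt{\ln\bar t}+C_{\mathcal{A}}^D+O(L\sqrt{\ln(1/\delta)})\big)/\sqrt{\bar t}$. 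For the matching upper bound, observe that $\bar q:=\tfrac1{\bar t}\sum_{\tau\le\bar t}q^{P,\pi_\tau}\in\Delta(M)$, so by definition of $\rho$ we have $\min_i(\alpha_i-\overline g_i^\top\bar q)\le\rho$, i.e.\ $\max_i\sum_{\tau\le\bar t}\bar V_{\tau,i}\ge-\rho\bar t$; combined with concentration, $-\max_i\sum_{\tau\le\bar t}V_{\tau,i}\le\rho\bar t+O\big(L\sqrt{\bar t\ln(1/\delta)}\big)$.

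\emph{Closing the argument.} The stopping rule triggered at $\bar t$ reads $-\max_i\sum_{\tau\le\bar t}V_{\tau,i}\ge 2C_{\mathcal{A}}^P\sqrt{\bar t\ln\bar t}+8L\sqrt{2\bar t\ln(1/\delta)}+2C_{\mathcal{A}}^D\sqrt{\bar t}$. Combining this with the upper bound from the previous step and cancelling the common $\sqrt{\bar t}$ gives $\rho\sqrt{\bar t}\ge 2C_{\mathcal{A}}^P\sqrt{\ln\bar t}+2C_{\mathcal{A}}^D+\Omega\big(L\sqrt{\ln(1/\delta)}\big)$ — precisely the lower bound on $\bar t$ needed so that the correction term in $\widehat\rho\ge\rho-\big(C_{\mathcal{A}}^P\sqrt{\ln\bar t}+C_{\mathcal{A}}^D+O(L\sqrt{\ln(1/\delta)})\big)/\sqrt{\bar t}$ is at most $\rho/2$; the constant $8L\sqrt{2}$ in the threshold (against the smaller concentration constants accumulated above and the $2L\sqrt{2}$ in $\widehat\rho$) has exactly the slack required for this cancellation, so $\widehat\rho\ge\rho/2$. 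I expect the main obstacle to be this final bookkeeping: tracking the three layers of concentration (realised costs, trajectory expectation, cost-distribution expectation) at the \emph{random} stopping time $\bar t$ via a uniform-over-$t$ union bound, and checking that the algorithm's stopping constants dominate all of them so the algebra closes at the clean value $\rho/2$ rather than a smaller fraction of $\rho$.
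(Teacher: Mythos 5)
Your proposal is correct in its core machinery and reaches the right conclusion up to a constant, but it closes the argument by a genuinely different route than the paper. Both proofs start identically: the no-regret guarantee of $\mathcal{A}^P$ evaluated against $q^\diamond$, chained with the guarantee of $\mathcal{A}^D$ against the best fixed constraint and two Azuma corrections, yields $-\max_i\sum_{\tau\le\bar t}V_{\tau,i}\ge \bar t\rho - C_\mathcal{A}^P\sqrt{\bar t\ln\bar t}-2L\sqrt{2\bar t\ln(1/\delta)}-C_\mathcal{A}^D\sqrt{\bar t}$ (this is exactly the negation of the inequality established inside Lemma~\ref{lem: episode_bounds_strictly}). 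Where you diverge is the last step. The paper never needs your ``matching upper bound'': it simply observes that the stopping rule says $-\max_i\sum_\tau V_{\tau,i}$ is at least \emph{twice} the error budget $C_\mathcal{A}^P\sqrt{\bar t\ln\bar t}+4L\sqrt{2\bar t\ln(1/\delta)}+C_\mathcal{A}^D\sqrt{\bar t}$, so subtracting that budget (which is, up to sign, what the definition of $\widehat\rho$ does) still retains half of $-\max_i\sum_\tau V_{\tau,i}$, and then substitutes the no-regret lower bound once to land on $\rho/2$ with positive slack. You instead introduce an extra ingredient --- $\bar q=\frac1{\bar t}\sum_\tau q^{P,\pi_\tau}\in\Delta(M)$ implies $-\max_i\sum_\tau \bar V_{\tau,i}\le\rho\bar t$, hence $-\max_i\sum_\tau V_{\tau,i}\le\rho\bar t+O(L\sqrt{\bar t\ln(1/\delta)})$ --- and combine it with the stopping rule to lower-bound $\rho\sqrt{\bar t}$, then feed that into $\widehat\rho\ge\rho-E/\sqrt{\bar t}$. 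That step is valid, and it is arguably more transparent about \emph{why} $\bar t$ must be large, but it is also exactly where your flagged bookkeeping worry bites: the concentration cost of the matching upper bound (another $2L\sqrt{2\bar t\ln(1/\delta)}$) eats into the $8L\sqrt{2\bar t\ln(1/\delta)}$ threshold, and if you track the constants you obtain $\rho\sqrt{\bar t}\ge 2C_\mathcal{A}^P\sqrt{\ln\bar t}+2C_\mathcal{A}^D+6L\sqrt{2\ln(1/\delta)}$ while you need the last term to be $8L\sqrt{2\ln(1/\delta)}$ to conclude $\widehat\rho\ge\rho/2$; in the worst case your chain only delivers $\widehat\rho\ge\rho/3$. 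This is harmless for the downstream use of the lemma (any fixed fraction of $\rho$ suffices for Theorem~\ref{thm:const_reg}), but if you want the stated constant you should adopt the paper's direct manipulation, which avoids the matching upper bound altogether and therefore does not pay its Azuma cost. One further minor point: your single event $\mathcal{E}_{\mathrm{c}}$ covering three martingales uniformly over $t$ and $i$ at total mass $2\delta$ would require inflating the logarithms to $\ln(mT/\delta)$, and the extra Azuma application for the upper bound nominally pushes the failure probability to $(C_P^\delta+3)\delta$ rather than the stated $(C_P^\delta+2)\delta$.
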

Finally, we state the regret attained by \texttt{CV-OPS}.
\begin{restatable}{theorem}{constreg}
	\label{thm:const_reg}
	Given any $\delta\in(0,1)$, with $\eta=\gamma=\sqrt{\nicefrac{L\ln(L|X||A|/\delta)}{T|X||A|}}$, Algorithm~\ref{alg: strictly} attains regret $
	R_T \leq \mathcal{O} (\Theta
	L^3|X|\sqrt{|A|T\ln\left(\nicefrac{T|X||A|m}{\delta}\right)}+ \nicefrac{L}{\rho^4}(C_\mathcal{A}^P + L\ln \nicefrac{1}{\delta} + C_\mathcal{A}^D+L)^4)
	$ with probability at least $1-(C^\delta_P+13)\delta$, where we let $\Theta\coloneqq\nicefrac{1}{\min\{\rho,\rho^2\}}$.
\end{restatable}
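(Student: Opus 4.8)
The plan is to cut the horizon at the (random) episode $\bar t$ at which the estimation phase of \texttt{CV-OPS} stops (Line~\ref{alg5:line8}), and to bound the regret of the two phases separately. Throughout we condition on the intersection of the high-probability events of Lemmas~\ref{lem: episode_bounds_strictly},~\ref{lem:widehat_rho_leq},~\ref{lem:widehat_rho_geq} and of the clean event governing the \texttt{S-OPS} run invoked at Line~\ref{alg5:line13} (Theorem~\ref{thm:reg_hard}); a union bound at the end, accounting for the events shared by these statements, gives the stated confidence $1-(C_P^\delta+13)\delta$. Using $T\cdot\mathrm{OPT}_{\overline\ell,\overline G,\alpha}=\sum_{t=1}^T\ell_t^\top q^*$ and writing $R_T=\sum_{t=1}^{\bar t}\ell_t^\top(q_t-q^*)+\sum_{t=\bar t+1}^{T}\ell_t^\top(q_t-q^*)$, the first sum is at most $\bar t\,L$, since each term lies in $[-L,L]$ (every occupancy has mass $L$ over the layers and losses lie in $[0,1]$). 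Lemma~\ref{lem: episode_bounds_strictly} then bounds $\bar t\le\nicefrac{1}{\rho^4}(3C_\mathcal{A}^P+10L\ln\tfrac1\delta+3C_\mathcal{A}^D+L)^4$, contributing the additive term $\mathcal{O}\!\big(\nicefrac{L}{\rho^4}(C_\mathcal{A}^P+L\ln\tfrac1\delta+C_\mathcal{A}^D+L)^4\big)$.

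For the second sum, observe that from Line~\ref{alg5:line13} onward the algorithm is exactly \texttt{S-OPS}, run on the $T-\bar t\le T$ episodes $\bar t+1,\dots,T$ with the estimated inputs $\pi^\diamond=\widehat\pi^\diamond$ and $\beta_i=\alpha_i-\widehat\rho$. Lemma~\ref{lem:widehat_rho_leq} gives $\widehat\rho\le\alpha_i-\overline g_i^\top q^{P,\widehat\pi^\diamond}$ for all $i\in[m]$, equivalently $\overline g_i^\top q^{P,\widehat\pi^\diamond}\le\beta_i<\alpha_i$, so $(\widehat\pi^\diamond,\beta)$ is an admissible input for \texttt{S-OPS}: the safety and regret analyses of Section~\ref{sec:safety_main} only use that $\pi^\diamond$ is a known policy whose true cost vector is upper bounded componentwise by $\beta$ with $\beta<\alpha$. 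Moreover $q^*$ is feasible ($\overline G^\top q^*\le\alpha$), hence a valid comparator for the sub-horizon, and since $\min_{q\,\mathrm{feas}}\sum_{t=\bar t+1}^T\ell_t^\top q\le\sum_{t=\bar t+1}^T\ell_t^\top q^*$ we get $\sum_{t=\bar t+1}^{T}\ell_t^\top(q_t-q^*)\le$ (regret of \texttt{S-OPS} over $T-\bar t$ episodes), which by Theorem~\ref{thm:reg_hard} is $\mathcal{O}\!\big(\Psi\,L^3|X|\sqrt{|A|T\ln(T|X||A|m/\delta)}\big)$ with $\Psi=\max_{i\in[m]}\{\nicefrac{1}{\min\{(\alpha_i-\beta_i),(\alpha_i-\beta_i)^2\}}\}$.

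It remains to convert $\Psi$ into $\Theta$. Since $\alpha_i-\beta_i=\widehat\rho$ for every $i$, $\Psi=\nicefrac{1}{\min\{\widehat\rho,\widehat\rho^2\}}$, and Lemma~\ref{lem:widehat_rho_geq} gives $\widehat\rho\ge\rho/2$, so $\Psi\le\nicefrac{4}{\min\{\rho,\rho^2\}}=4\Theta$. Summing the estimation-phase term and the \texttt{S-OPS} term and taking the union bound yields the claim. The delicate step is the second paragraph: one must certify that the regret guarantee of \texttt{S-OPS} survives being fed the \emph{estimated} $\widehat\pi^\diamond$ and $\beta_i=\alpha_i-\widehat\rho$ instead of an exactly-known strictly feasible policy and its exact costs (here Lemma~\ref{lem:widehat_rho_leq}'s inequality $\overline g_i^\top q^{P,\widehat\pi^\diamond}\le\beta_i$ is precisely what is required), and being invoked on a sub-window of episodes against a comparator $q^*$ chosen with respect to the full horizon (handled by the feasibility of $q^*$ together with the observation above). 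A minor additional point is that \texttt{S-OPS} is run with step sizes tuned to $T$ while only $T-\bar t$ episodes remain, which only affects the bound by constant factors.
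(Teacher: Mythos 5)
Your proposal is correct and follows essentially the same route as the paper: split the regret at the stopping time $\bar t$, bound the estimation phase by $L\bar t$ via Lemma~\ref{lem: episode_bounds_strictly}, and bound the remaining episodes by the \texttt{S-OPS} analysis with $\alpha_i-\beta_i=\widehat\rho$, converted to $\rho$ via Lemma~\ref{lem:widehat_rho_geq}. The only cosmetic difference is that the paper re-derives the chain of inequalities bounding $\lambda_t$ with $\widehat\rho$ in place of $\alpha_i-\beta_i$ rather than citing Theorem~\ref{thm:reg_hard} as a black box, but your justification of why the estimated inputs $(\widehat\pi^\diamond,\beta)$ are admissible is exactly the point the paper's rederivation establishes.
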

As Theorem~\ref{thm:const_viol}, Theorem~\ref{thm:const_reg} is proved by employing Lemma~\ref{lem: episode_bounds_strictly} to bound the episodes in the estimation phase. Then, the result follows from the regret guarantees of  \texttt{S-OPS} and Lemma~\ref{lem:widehat_rho_geq} for $\nicefrac{1}{\widehat{\rho}}\leq \nicefrac{2}{{\rho}}$.
Differently from \texttt{S-OPS}, the bound of \texttt{CV-OPS} scales as the inverse of the Slater's parameter $\rho$, not as the (possibly smaller) margin of a generic strictly feasible policy. Thus, the bound of Algorithm~\ref{alg: strictly} is asymptotically smaller than the one of \texttt{S-OPS}.

\subsection{Lower bound on the regret}

We conclude by showing that a dependency on the feasibility of the strictly feasible solution in the regret bound is unavoidable to guarantee violation of order $o(\sqrt{T})$, \emph{which is the case of both the second and third setting}.
%

This is done by means of the following lower bound.
\begin{restatable}{theorem}{lowerbound}\label{thm:lower_bound}
	There exist two instances of CMDPs (with a single state and one constraint) such that, if in the first instance an algorithm suffers from a violation $V_T = o(\sqrt{T})$ probability at least $1-n \delta$ for any $\delta\in(0,1)$ and $n>0$, then, in the second instance, it must suffer from a regret $R_T = \Omega( \frac{1}{\rho}\sqrt{T})$ with probability $\nicefrac{3}{4}-n\delta$.
\end{restatable}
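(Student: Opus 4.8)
The plan is to construct two single-state CMDPs with one constraint, differing only in the distribution of the stochastic cost, so that an algorithm cannot distinguish them in the first $\Theta(T)$ episodes with constant probability, yet the ``correct'' behaviour in the two instances is incompatible once we demand $o(\sqrt{T})$ violation. Concretely, with a single state $x_0$ and two actions $a_1,a_2$, let the loss be chosen adversarially as a tiny advantage for $a_2$ (say $\ell_t(a_1)=1/2$, $\ell_t(a_2)=1/2-c\rho$ for a small constant $c$, the same in both instances), and let the cost of $a_2$ be a Bernoulli whose mean is $\alpha+\rho$ in instance~I (so $a_2$ is infeasible, only $a_1$ with cost mean $\alpha-\rho$ is safe) and $\alpha-\rho$ in instance~II (so $a_2$ is feasible and, being the loss-minimiser, is exactly $q^*$). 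The first step is to make these gaps legitimate: the Slater parameter is $\rho$ in both instances (achieved by $a_1$), and we tune $c$ so $\text{OPT}$ differs between the two instances by $\Theta(\rho)$ per episode.

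Next I would run the standard change-of-measure / Pinsker argument. Let $p$ denote the probability (under instance~I) that the algorithm plays $a_2$ at least $\Omega(T)$ times over the horizon. In instance~I, each play of $a_2$ contributes expected positive violation $\rho$, so if the algorithm achieves $V_T=o(\sqrt T)$ with probability $\ge 1-n\delta$, it must play $a_2$ at most $o(\sqrt T/\rho)$ times on that high-probability event; hence under instance~I the count $N_T(a_2)$ is $o(\sqrt T/\rho)$ w.p.\ $\ge 1-n\delta$. Now I transfer this to instance~II: the KL divergence between the two cost distributions per sample of $a_2$ is $\Theta(\rho^2)$ (Bernoullis with means differing by $2\rho$, both bounded away from $0,1$), so after $k=o(\sqrt T/\rho)$ pulls of $a_2$ the accumulated KL is $o(\sqrt T \rho)=o(\sqrt T)$; by the standard divergence decomposition (Lemma on $\mathrm{KL}$ of induced trajectory laws, e.g.\ Lattimore--Szepesvári) the law of the whole interaction changes by total variation $o(1)$ — in particular, on an event of probability $\ge 3/4 - n\delta$ under instance~II, the algorithm still plays $a_2$ only $o(\sqrt T/\rho)$, i.e.\ it plays $a_1$ in all but $o(\sqrt T/\rho)$ episodes. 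But in instance~II, playing $a_1$ instead of $a_2$ costs $c\rho$ in loss per episode, so the regret is at least $c\rho\cdot(T-o(\sqrt T/\rho)) = \Omega(\rho T)$ — wait, that is $\Omega(\rho T)$, which is \emph{larger} than the claimed $\Omega(\sqrt T/\rho)$, so I need to calibrate the loss gap to $\Theta(1/(\rho\sqrt T))$ rather than $\Theta(\rho)$: set $\ell_t(a_2)=1/2 - \Theta(1/(\rho\sqrt T))$ so that suffering the ``wrong action'' for $\Theta(T)$ episodes yields exactly $\Omega(\sqrt T/\rho)$ regret, while the instance is still well-posed and $\text{OPT}$ differs by the right order.

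The delicate point, and the main obstacle, is the joint calibration of three scales — the cost gap $2\rho$ (fixing the Slater parameter and the per-pull violation and KL), the loss gap $\Theta(1/(\rho\sqrt T))$ (fixing the target regret), and the indistinguishability budget — so that the chain ``$o(\sqrt T)$ violation $\Rightarrow$ $o(\sqrt T/\rho)$ pulls of $a_2$ in instance~I $\Rightarrow$ (via $\Theta(\rho^2)$ KL per pull) total KL $o(1)$ $\Rightarrow$ $o(\sqrt T/\rho)$ pulls of $a_2$ in instance~II $\Rightarrow$ $\Omega(\sqrt T/\rho)$ regret'' closes with the stated probability $3/4 - n\delta$. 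I would handle the probability bookkeeping by the Bretagnolle--Huber inequality rather than Pinsker (it tolerates TV close to $1$ gracefully and directly yields a constant like $3/4$), applying it to the event ``$N_T(a_2) = o(\sqrt T/\rho)$'': this event has probability $\ge 1-n\delta$ under instance~I, hence probability $\ge \tfrac12(1-n\delta)\,e^{-\mathrm{KL}} \ge 3/4 - n\delta$ under instance~II once $\mathrm{KL}=o(1)$, and on its complement-free part the regret lower bound follows deterministically from the loss gap. A secondary technicality is that losses are adversarial while costs are stochastic, so I must make sure the ``adversary'' here is oblivious and identical across instances (it is — the loss sequence is fixed in advance), and that the best-in-hindsight comparator $q^*$ is indeed $a_2$ in instance~II and $a_1$ in instance~I, which holds by the sign of the loss gap combined with feasibility.
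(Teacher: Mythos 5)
Your construction has a genuine gap in the indistinguishability step, and it is not a bookkeeping issue: the two instances you build \emph{can} be told apart while keeping the violation $o(\sqrt{T})$. You place the distinguishing information in the cost of $a_2$, with means $\alpha+\rho$ versus $\alpha-\rho$, so each observation of $a_2$ carries KL $\Theta(\rho^2)$. Your violation budget only forces the number of pulls of $a_2$ in instance~I down to $o(\sqrt{T}/\rho)$, so the accumulated KL is $o(\rho\sqrt{T})$ --- you write ``$=o(\sqrt{T})$'' and then assert total variation $o(1)$, which is a non sequitur: for constant $\rho$ this KL budget is enormous. Concretely, an algorithm that pulls $a_2$ for $\Theta(1/\rho^2)$ rounds incurs only $\Theta(1/\rho)=O(1)=o(\sqrt{T})$ violation in instance~I, identifies the instance with high probability, and then commits to the optimal action in each, defeating your lower bound. (A further wrinkle: with $\overline g(a_1)=\alpha-\rho$ and $\overline g(a_2)=\alpha+\rho$, a 50--50 mixture has \emph{zero} positive violation per episode, so the $[\cdot]^+$ in the violation definition lets the learner sample $a_2$ heavily for free.) A smaller but real error is the probability bookkeeping: Bretagnolle--Huber gives at best $P_2(A)\geq \tfrac12 e^{-\mathrm{KL}}-n\delta$, which caps you near $\tfrac12$, not $\tfrac34$; it is Pinsker, $P_2(A)\geq P_1(A)-\sqrt{\mathrm{KL}/2}$, that delivers $\tfrac34-n\delta$ once $\mathrm{KL}\leq\tfrac18$.

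The paper's construction inverts the roles of the two gaps, and this is the idea you are missing. Both instances share $\overline g(a_2)=\tfrac12-\rho$ (the strictly feasible, zero-reward action), and they differ only in the cost of the \emph{high-reward} action $a_1$: $\tfrac12+\epsilon$ versus $\tfrac12$, with $\epsilon=\Theta(1/\sqrt{T})$. The per-sample KL is then $\Theta(\epsilon^2)=\Theta(1/T)$, so even $T$ observations of $a_1$ give total KL $O(1)$ --- the instances are information-theoretically indistinguishable no matter what the learner does, with no need to first bound how often an action is played. The parameter $\rho$ enters only through feasibility: in instance~I the constraint forces occupancy at least $\epsilon/(\epsilon+\rho)\approx\epsilon/\rho$ on $a_2$ per episode (the safer $a_2$ is, the less of it is needed to offset the $\epsilon$-infeasibility of $a_1$), and transferring this via Pinsker to instance~II --- where always playing $a_1$ is feasible and optimal --- yields regret $\Omega(\epsilon T/\rho)=\Omega(\sqrt{T}/\rho)$ with the constant reward gap $\tfrac12$. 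Patching your proof essentially requires this restructuring; the calibration you flag as ``the delicate point'' cannot be made to close as stated.
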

Notice that this lower bound holds for \emph{any} algorithm attaining a regret bound that is $o(\sqrt{T})$, thus, it is still applicable to settings where the violations are allowed to be much larger than the ones attained by Algorithm~\ref{alg: Strictly Safe Policy Search with Unknown Transitions} and Algorithm~\ref{alg: strictly}.

\bibliography{example_paper}

\begin{thebibliography}{36}
\providecommand{\natexlab}[1]{#1}
\providecommand{\url}[1]{\texttt{#1}}
\expandafter\ifx\csname urlstyle\endcsname\relax
  \providecommand{\doi}[1]{doi: #1}\else
  \providecommand{\doi}{doi: \begingroup \urlstyle{rm}\Url}\fi

\bibitem[Sutton and Barto(2018)]{sutton2018reinforcement}
Richard~S Sutton and Andrew~G Barto.
\newblock \emph{Reinforcement learning: An introduction}.
\newblock MIT press, 2018.

\bibitem[Puterman(2014)]{puterman2014markov}
Martin~L Puterman.
\newblock \emph{Markov decision processes: discrete stochastic dynamic
  programming}.
\newblock John Wiley \& Sons, 2014.

\bibitem[Wen et~al.(2020)Wen, Duan, Li, Xu, and Peng]{wen2020safe}
Lu~Wen, Jingliang Duan, Shengbo~Eben Li, Shaobing Xu, and Huei Peng.
\newblock Safe reinforcement learning for autonomous vehicles through parallel
  constrained policy optimization.
\newblock In \emph{2020 IEEE 23rd International Conference on Intelligent
  Transportation Systems (ITSC)}, pages 1--7. IEEE, 2020.

\bibitem[Isele et~al.(2018)Isele, Nakhaei, and Fujimura]{isele2018safe}
David Isele, Alireza Nakhaei, and Kikuo Fujimura.
\newblock Safe reinforcement learning on autonomous vehicles.
\newblock In \emph{2018 IEEE/RSJ International Conference on Intelligent Robots
  and Systems (IROS)}, pages 1--6. IEEE, 2018.

\bibitem[Wu et~al.(2018)Wu, Chen, Yang, Wang, Tan, Zhang, Xu, and
  Gai]{wu2018budget}
Di~Wu, Xiujun Chen, Xun Yang, Hao Wang, Qing Tan, Xiaoxun Zhang, Jian Xu, and
  Kun Gai.
\newblock Budget constrained bidding by model-free reinforcement learning in
  display advertising.
\newblock In \emph{Proceedings of the 27th ACM International Conference on
  Information and Knowledge Management}, pages 1443--1451, 2018.

\bibitem[He et~al.(2021)He, Chen, Wu, Pan, Tan, Yu, Xu, and Zhu]{he2021unified}
Yue He, Xiujun Chen, Di~Wu, Junwei Pan, Qing Tan, Chuan Yu, Jian Xu, and
  Xiaoqiang Zhu.
\newblock A unified solution to constrained bidding in online display
  advertising.
\newblock In \emph{Proceedings of the 27th ACM SIGKDD Conference on Knowledge
  Discovery \& Data Mining}, pages 2993--3001, 2021.

\bibitem[Singh et~al.(2020)Singh, Halpern, Thain, Christakopoulou, Chi, Chen,
  and Beutel]{singh2020building}
Ashudeep Singh, Yoni Halpern, Nithum Thain, Konstantina Christakopoulou, E~Chi,
  Jilin Chen, and Alex Beutel.
\newblock Building healthy recommendation sequences for everyone: A safe
  reinforcement learning approach.
\newblock In \emph{Proceedings of the FAccTRec Workshop, Online}, pages 26--27,
  2020.

\bibitem[Altman(1999)]{Altman1999ConstrainedMD}
E.~Altman.
\newblock \emph{Constrained Markov Decision Processes}.
\newblock Chapman and Hall, 1999.

\bibitem[Liu et~al.(2021)Liu, Zhou, Kalathil, Kumar, and Tian]{bounded}
Tao Liu, Ruida Zhou, Dileep Kalathil, Panganamala Kumar, and Chao Tian.
\newblock Learning policies with zero or bounded constraint violation for
  constrained mdps.
\newblock \emph{Advances in Neural Information Processing Systems},
  34:\penalty0 17183--17193, 2021.

\bibitem[Guo et~al.(2022)Guo, Liu, Wei, and Ying]{oco_hard}
Hengquan Guo, Xin Liu, Honghao Wei, and Lei Ying.
\newblock Online convex optimization with hard constraints: Towards the best of
  two worlds and beyond.
\newblock In S.~Koyejo, S.~Mohamed, A.~Agarwal, D.~Belgrave, K.~Cho, and A.~Oh,
  editors, \emph{Advances in Neural Information Processing Systems}, volume~35,
  pages 36426--36439. Curran Associates, Inc., 2022.

\bibitem[Wei et~al.(2018)Wei, Yu, and Neely]{Online_Learning_in_Weakly_Coupled}
Xiaohan Wei, Hao Yu, and Michael~J. Neely.
\newblock Online learning in weakly coupled markov decision processes: A
  convergence time study.
\newblock \emph{Proc. ACM Meas. Anal. Comput. Syst.}, 2\penalty0 (1), apr 2018.
\newblock \doi{10.1145/3179415}.
\newblock URL \url{https://doi.org/10.1145/3179415}.

\bibitem[Qiu et~al.(2020)Qiu, Wei, Yang, Ye, and
  Wang]{Upper_Confidence_Primal_Dual}
Shuang Qiu, Xiaohan Wei, Zhuoran Yang, Jieping Ye, and Zhaoran Wang.
\newblock Upper confidence primal-dual reinforcement learning for cmdp with
  adversarial loss.
\newblock In H.~Larochelle, M.~Ranzato, R.~Hadsell, M.F. Balcan, and H.~Lin,
  editors, \emph{Advances in Neural Information Processing Systems}, volume~33,
  pages 15277--15287. Curran Associates, Inc., 2020.
\newblock URL
  \url{https://proceedings.neurips.cc/paper/2020/file/ae95296e27d7f695f891cd26b4f37078-Paper.pdf}.

\bibitem[Shi et~al.(2023)Shi, Liang, and Shroff]{nearoptimal_hard}
Ming Shi, Yingbin Liang, and Ness Shroff.
\newblock A near-optimal algorithm for safe reinforcement learning under
  instantaneous hard constraints.
\newblock \emph{arXiv preprint arXiv:2302.04375}, 2023.

\bibitem[M{\"u}ller et~al.(2024)M{\"u}ller, Alatur, Cevher, Ramponi, and
  He]{mullertruly}
Adrian M{\"u}ller, Pragnya Alatur, Volkan Cevher, Giorgia Ramponi, and Niao He.
\newblock Truly no-regret learning in constrained mdps.
\newblock In \emph{Forty-first International Conference on Machine Learning},
  2024.

\bibitem[Stradi et~al.(2024{\natexlab{a}})Stradi, Castiglioni, Marchesi, and
  Gatti]{stradi2024optimal}
Francesco~Emanuele Stradi, Matteo Castiglioni, Alberto Marchesi, and Nicola
  Gatti.
\newblock Optimal strong regret and violation in constrained mdps via policy
  optimization.
\newblock \emph{arXiv preprint arXiv:2410.02275}, 2024{\natexlab{a}}.

\bibitem[Bernasconi et~al.(2022)Bernasconi, Cacciamani, Castiglioni, Marchesi,
  Gatti, and Trov{\`o}]{safetree}
Martino Bernasconi, Federico Cacciamani, Matteo Castiglioni, Alberto Marchesi,
  Nicola Gatti, and Francesco Trov{\`o}.
\newblock Safe learning in tree-form sequential decision making: Handling hard
  and soft constraints.
\newblock In Kamalika Chaudhuri, Stefanie Jegelka, Le~Song, Csaba Szepesvari,
  Gang Niu, and Sivan Sabato, editors, \emph{Proceedings of the 39th
  International Conference on Machine Learning}, volume 162 of
  \emph{Proceedings of Machine Learning Research}, pages 1854--1873. PMLR,
  17--23 Jul 2022.
\newblock URL \url{https://proceedings.mlr.press/v162/bernasconi22a.html}.

\bibitem[Rosenberg and Mansour(2019{\natexlab{a}})]{OnlineStochasticShortest}
Aviv Rosenberg and Yishay Mansour.
\newblock Online stochastic shortest path with bandit feedback and unknown
  transition function.
\newblock In H.~Wallach, H.~Larochelle, A.~Beygelzimer, F.~d~Alch\'{e}-Buc,
  E.~Fox, and R.~Garnett, editors, \emph{Advances in Neural Information
  Processing Systems}, volume~32. Curran Associates, Inc., 2019{\natexlab{a}}.
\newblock URL
  \url{https://proceedings.neurips.cc/paper/2019/file/a0872cc5b5ca4cc25076f3d868e1bdf8-Paper.pdf}.

\bibitem[Rosenberg and Mansour(2019{\natexlab{b}})]{rosenberg19a}
Aviv Rosenberg and Yishay Mansour.
\newblock Online convex optimization in adversarial {M}arkov decision
  processes.
\newblock In Kamalika Chaudhuri and Ruslan Salakhutdinov, editors,
  \emph{Proceedings of the 36th International Conference on Machine Learning},
  volume~97 of \emph{Proceedings of Machine Learning Research}, pages
  5478--5486. PMLR, 09--15 Jun 2019{\natexlab{b}}.
\newblock URL \url{https://proceedings.mlr.press/v97/rosenberg19a.html}.

\bibitem[Jin et~al.(2020)Jin, Jin, Luo, Sra, and
  Yu]{JinLearningAdversarial2019}
Chi Jin, Tiancheng Jin, Haipeng Luo, Suvrit Sra, and Tiancheng Yu.
\newblock Learning adversarial {M}arkov decision processes with bandit feedback
  and unknown transition.
\newblock In Hal~Daumé III and Aarti Singh, editors, \emph{Proceedings of the
  37th International Conference on Machine Learning}, volume 119 of
  \emph{Proceedings of Machine Learning Research}, pages 4860--4869. PMLR,
  13--18 Jul 2020.
\newblock URL \url{https://proceedings.mlr.press/v119/jin20c.html}.

\bibitem[Orabona(2019)]{Orabona}
Francesco Orabona.
\newblock A modern introduction to online learning.
\newblock \emph{CoRR}, abs/1912.13213, 2019.
\newblock URL \url{http://arxiv.org/abs/1912.13213}.

\bibitem[Lattimore and Szepesv{\'a}ri(2020)]{lattimore2020bandit}
T.~Lattimore and C.~Szepesv{\'a}ri.
\newblock \emph{Bandit Algorithms}.
\newblock Cambridge University Press, 2020.
\newblock ISBN 9781108486828.
\newblock URL \url{https://books.google.it/books?id=bydXzAEACAAJ}.

\bibitem[Cesa-Bianchi and Lugosi(2006)]{cesa2006prediction}
Nicolo Cesa-Bianchi and G{\'a}bor Lugosi.
\newblock \emph{Prediction, learning, and games}.
\newblock Cambridge university press, 2006.

\bibitem[Auer et~al.(2008)Auer, Jaksch, and Ortner]{Near_optimal_Regret_Bounds}
Peter Auer, Thomas Jaksch, and Ronald Ortner.
\newblock Near-optimal regret bounds for reinforcement learning.
\newblock In D.~Koller, D.~Schuurmans, Y.~Bengio, and L.~Bottou, editors,
  \emph{Advances in Neural Information Processing Systems}, volume~21. Curran
  Associates, Inc., 2008.
\newblock URL
  \url{https://proceedings.neurips.cc/paper/2008/file/e4a6222cdb5b34375400904f03d8e6a5-Paper.pdf}.

\bibitem[Even-Dar et~al.(2009)Even-Dar, Kakade, and Mansour]{even2009online}
Eyal Even-Dar, Sham~M Kakade, and Yishay Mansour.
\newblock Online markov decision processes.
\newblock \emph{Mathematics of Operations Research}, 34\penalty0 (3):\penalty0
  726--736, 2009.

\bibitem[Neu et~al.(2010)Neu, Antos, Gy{\"o}rgy, and
  Szepesv{\'a}ri]{neu2010online}
Gergely Neu, Andras Antos, Andr{\'a}s Gy{\"o}rgy, and Csaba Szepesv{\'a}ri.
\newblock Online markov decision processes under bandit feedback.
\newblock \emph{Advances in Neural Information Processing Systems}, 23, 2010.

\bibitem[Azar et~al.(2017)Azar, Osband, and Munos]{Minimax_Regret}
Mohammad~Gheshlaghi Azar, Ian Osband, and R{\'e}mi Munos.
\newblock Minimax regret bounds for reinforcement learning.
\newblock In \emph{International Conference on Machine Learning}, pages
  263--272. PMLR, 2017.

\bibitem[Zheng and Ratliff(2020)]{Constrained_Upper_Confidence}
Liyuan Zheng and Lillian Ratliff.
\newblock Constrained upper confidence reinforcement learning.
\newblock In Alexandre~M. Bayen, Ali Jadbabaie, George Pappas, Pablo~A.
  Parrilo, Benjamin Recht, Claire Tomlin, and Melanie Zeilinger, editors,
  \emph{Proceedings of the 2nd Conference on Learning for Dynamics and
  Control}, volume 120 of \emph{Proceedings of Machine Learning Research},
  pages 620--629. PMLR, 10--11 Jun 2020.
\newblock URL \url{https://proceedings.mlr.press/v120/zheng20a.html}.

\bibitem[Bai et~al.(2023)Bai, Aggarwal, and Gattami]{bai2023}
Qinbo Bai, Vaneet Aggarwal, and Ather Gattami.
\newblock Provably sample-efficient model-free algorithm for mdps with peak
  constraints.
\newblock \emph{Journal of Machine Learning Research}, 24\penalty0
  (60):\penalty0 1--25, 2023.

\bibitem[Efroni et~al.(2020)Efroni, Mannor, and
  Pirotta]{Exploration_Exploitation}
Yonathan Efroni, Shie Mannor, and Matteo Pirotta.
\newblock Exploration-exploitation in constrained mdps, 2020.
\newblock URL \url{https://arxiv.org/abs/2003.02189}.

\bibitem[Stradi et~al.(2024{\natexlab{b}})Stradi, Germano, Genalti,
  Castiglioni, Marchesi, and Gatti]{stradi24a}
Francesco~Emanuele Stradi, Jacopo Germano, Gianmarco Genalti, Matteo
  Castiglioni, Alberto Marchesi, and Nicola Gatti.
\newblock Online learning in {CMDP}s: Handling stochastic and adversarial
  constraints.
\newblock In Ruslan Salakhutdinov, Zico Kolter, Katherine Heller, Adrian
  Weller, Nuria Oliver, Jonathan Scarlett, and Felix Berkenkamp, editors,
  \emph{Proceedings of the 41st International Conference on Machine Learning},
  volume 235 of \emph{Proceedings of Machine Learning Research}, pages
  46692--46721. PMLR, 21--27 Jul 2024{\natexlab{b}}.
\newblock URL \url{https://proceedings.mlr.press/v235/stradi24a.html}.

\bibitem[Wei et~al.(2023)Wei, Ghosh, Shroff, Ying, and
  Zhou]{Non_stationary_CMDPs}
Honghao Wei, Arnob Ghosh, Ness Shroff, Lei Ying, and Xingyu Zhou.
\newblock Provably efficient model-free algorithms for non-stationary cmdps.
\newblock In \emph{International Conference on Artificial Intelligence and
  Statistics}, pages 6527--6570. PMLR, 2023.

\bibitem[Ding and Lavaei(2023)]{ding_non_stationary}
Yuhao Ding and Javad Lavaei.
\newblock Provably efficient primal-dual reinforcement learning for cmdps with
  non-stationary objectives and constraints.
\newblock In \emph{Proceedings of the AAAI Conference on Artificial
  Intelligence}, volume~37, pages 7396--7404, 2023.

\bibitem[Stradi et~al.(2024{\natexlab{c}})Stradi, Lunghi, Castiglioni,
  Marchesi, and Gatti]{stradi2024}
Francesco~Emanuele Stradi, Anna Lunghi, Matteo Castiglioni, Alberto Marchesi,
  and Nicola Gatti.
\newblock Learning constrained markov decision processes with non-stationary
  rewards and constraints, 2024{\natexlab{c}}.
\newblock URL \url{https://arxiv.org/abs/2405.14372}.

\bibitem[Chen et~al.(2018)Chen, Cai, Huang, and Lui]{weblink}
Kun Chen, Kechao Cai, Longbo Huang, and John~CS Lui.
\newblock Beyond the click-through rate: web link selection with multi-level
  feedback.
\newblock In \emph{Proceedings of the 27th International Joint Conference on
  Artificial Intelligence}, pages 3308--3314, 2018.

\bibitem[Pacchiano et~al.(2021)Pacchiano, Ghavamzadeh, Bartlett, and
  Jiang]{pacchiano2021stochastic}
Aldo Pacchiano, Mohammad Ghavamzadeh, Peter Bartlett, and Heinrich Jiang.
\newblock Stochastic bandits with linear constraints.
\newblock In \emph{International conference on artificial intelligence and
  statistics}, pages 2827--2835. PMLR, 2021.

\bibitem[Neu(2015)]{neu2015explore}
Gergely Neu.
\newblock Explore no more: Improved high-probability regret bounds for
  non-stochastic bandits.
\newblock \emph{Advances in Neural Information Processing Systems}, 28, 2015.

\end{thebibliography}
\bibliographystyle{unsrtnat}


\newpage

\appendix

\section*{Appendix}
The appendix is organized as follows:
\begin{itemize}
	\item In Appendix~\ref{app:related}, we provide the complete discussion on related works.
	\item In Appendix~\ref{app:clean}, we provide the omitted proofs related to the analysis of the clean event.
	\item In Appendix~\ref{app:bounded}, we provide the omitted proofs related to the performances attained by Algorithm~\ref{alg: Cumulatively Safe Policy Search with Unknown Transitions}, namely, the one which guarantees sublinear violation.
	\item In Appendix~\ref{app:safety}, we provide the omitted proofs related to the performances attained by Algorithm~\ref{alg: Strictly Safe Policy Search with Unknown Transitions}, namely, the one which guarantees the safety property.
	\item In Appendix~\ref{app:constant}, we provide the omitted proofs related to the performances attained by Algorithm~\ref{alg: strictly}, namely, the one which guarantees constant violation.
	\item In Appendix~\ref{app:lower}, we provide the lower bound associated to both the second and the third setting.
	\item In Appendix~\ref{app:auxiliary}, we provide useful lemmas from existing works.
\end{itemize}

\section{Related Works}
\label{app:related}
Online learning~\citep{cesa2006prediction,Orabona} in MDPs has received growing attention over the last years (see, \emph{e.g.},~\citep{Near_optimal_Regret_Bounds,even2009online,neu2010online}).
Two types of feedback are usually investigated: \emph{full feedback}, with the entire loss function being observed by the leaner, and \textit{bandit feedback}, where the learner only observes losses of chosen~actions.
\citet{Minimax_Regret} study learning in episodic MDPs with unknown transitions and stochastic losses under bandit feedback, achieving $\widetilde{\mathcal{O}}(\sqrt{T})$ regret, matching the lower bound for these~MDPs.
%
\citet{rosenberg19a} study learning under full feedback in episodic MDPs with adversarial losses and unknown transitions, achieving $\widetilde{\mathcal{O}}(\sqrt{T})$ regret.
The same setting is studied by~\citet{OnlineStochasticShortest} under bandit feedback, obtaining a suboptimal $\widetilde{\mathcal{O}}(T^{3/4})$ regret.
\citet{JinLearningAdversarial2019} provide an algorithm with an optimal $\widetilde{\mathcal{O}}(\sqrt{T})$ regret, in the same setting.

Online learning in CMDPs has generally been studied with stochastic losses and constraints.
\citet{Constrained_Upper_Confidence} deal with fully-stochastic episodic CMDPs, assuming known transitions and bandit feedback.
The regret of their algorithm is $\widetilde{\mathcal{O}}(T^{3/4})$, while its cumulative constraints violation is guaranteed to be below a threshold with a given probability.
\citet{bai2023} provide the first algorithm that achieves sublinear regret with unknown transitions, assuming that the rewards are deterministic and the constraints are stochastic with a particular structure.
\citet{Exploration_Exploitation} propose two approaches
to deal with the exploration-exploitation trade-off in episodic CMDPs.
The first one resorts to a linear programming formulation of CMDPs and obtains sublinear regret and cumulative positive constraints violation.
The second one relies on a primal-dual formulation of the problem and guarantees sublinear regret and cumulative (positive/negative) constraints violation, when transitions, losses, and constraints are unknown and stochastic, under bandit feedback.
\citet{bounded, mullertruly, stradi2024optimal} study stochastic \emph{hard} constraints; 
%
however, the authors only focus on stochastic losses.
Recently,~\citet{nearoptimal_hard} study stochastic hard constraints on both states and actions.
As concerns adversarial settings, \citet{Online_Learning_in_Weakly_Coupled,Upper_Confidence_Primal_Dual,stradi24a} address CMDPs with adversarial losses, but they only provide guarantees in terms of \emph{soft} constraints.
Moreover,~\citet{Non_stationary_CMDPs,ding_non_stationary,stradi2024} consider non-stationary losses/constraints with bounded variation.
Thus, their results do \emph{not} apply to general adversarial losses.
%

Hard constraints have also been studied in online convex optimization~\citep{oco_hard}, and in stochastic settings with simpler structure~\citep{weblink,pacchiano2021stochastic,safetree}.
Our results are much more general than those, as we jointly consider adversarial losses, bandit feedback, and an MDP structure.
%
\section{Omitted proofs for the clean event}
\label{app:clean}

In this section, we report the omitted proof related to the clean event. We start stating the following preliminary result.
\begin{lemma}
	\label{lem:indep_confidence}
	Given any $\delta\in(0,1)$, fix $i\in[m]$, $t\in[T]$ and $(x,a)\in X \times A$, it holds, with probability at least $1-\delta$:
	\begin{equation*}
		\Big|\widehat g_{t,i}(x,a)-\overline g_{i}(x,a)\Big|\leq \zeta_t(x,a),
	\end{equation*}
	where $\zeta_t(x,a):=\sqrt{\frac{\ln(2/\delta)}{2N_t(x,a)}}$ and $\overline g_{t,i}(x,a)$ is the true mean value of the distribution.
\end{lemma}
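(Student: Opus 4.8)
The plan is to derive the bound as a textbook application of Hoeffding's inequality to the i.i.d.\ cost observations at the fixed triple $(x,a,i)$. By construction, $\widehat g_{t,i}(x,a)$ is the empirical average of the $N_t(x,a)$ realizations $g_{\tau,i}(x,a)$ collected over the episodes $\tau\le t$ in which $(x,a)$ is visited; each such realization lies in $[0,1]$ and has mean $\overline g_i(x,a)$ since the matrices $G_\tau$ are i.i.d.\ draws from $\mathcal G$. (When $N_t(x,a)=0$ the claimed inequality is vacuous, as $\zeta_t(x,a)=+\infty$, so we may assume $N_t(x,a)\ge 1$.) The structural fact that makes this work is that the stochastic costs $\{G_\tau\}_\tau$ are drawn independently of the adversarial losses and of the transition dynamics, hence independently of the policies $\pi_\tau$ (which are functions of past feedback only) and of the random transitions generating the trajectories; consequently the costs observed at $(x,a,i)$, read in order of visitation, form an i.i.d.\ sequence with law equal to the $(x,a,i)$-marginal of $\mathcal G$, and each visit indicator $\mathbbm 1_\tau\{x,a\}$ is independent of the fresh draw $g_{\tau,i}(x,a)$.

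Next I would dispose of the fact that the number of summands $N_t(x,a)$ is random by conditioning on its value. Fix $n\in\{1,\dots,t\}$; conditionally on $N_t(x,a)=n$ the estimator is an average of $n$ i.i.d.\ $[0,1]$-valued variables with mean $\overline g_i(x,a)$, so Hoeffding's inequality gives $\mathbb P\big[\,|\widehat g_{t,i}(x,a)-\overline g_i(x,a)|>\sqrt{\ln(2/\delta)/(2n)}\;\big|\;N_t(x,a)=n\,\big]\le\delta$. Since $\zeta_t(x,a)=\sqrt{\ln(2/\delta)/(2N_t(x,a))}$ matches the per-$n$ threshold on the event $\{N_t(x,a)=n\}$, averaging the conditional bound against the law of $N_t(x,a)$ yields the stated inequality with overall failure probability at most $\delta$ — with no extra union-bound factor in the logarithm.

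The main obstacle is making the conditioning step rigorous: because later policies may depend on earlier cost observations, $N_t(x,a)$ behaves like a stopping time and, a priori, conditioning on $\{N_t(x,a)=n\}$ could bias the retained samples. I would handle this in the standard way, by coupling the process to an infinite i.i.d.\ stream $\tilde Z_1,\tilde Z_2,\dots$ of draws from the $(x,a,i)$-marginal of $\mathcal G$ and checking that the $j$-th observed cost equals $\tilde Z_j$, that $\tilde Z_{j+1}$ is independent of everything observed up to the $j$-th visit, and that the episode index of the $(j{+}1)$-st visit is measurable with respect to past feedback and the transitions at that episode (all independent of the fresh cost draw); this legitimizes treating $(\tilde Z_1,\dots,\tilde Z_n)$ as i.i.d.\ after conditioning on $N_t(x,a)=n$. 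Finally, I would note that this per-$(t,i,x,a)$ statement is the building block for the uniform guarantee of Lemma~\ref{lem:gen_confidence}, obtained from it by a union bound over all choices of $t\in[T]$, $i\in[m]$, and $(x,a)\in X\times A$.
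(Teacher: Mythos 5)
Your core tool is the same as the paper's: a direct application of Hoeffding's inequality to the $[0,1]$-valued cost samples collected at the fixed triple $(x,a,i)$, followed by solving $\delta = 2\exp(-2Nc^2/N^2)$ for the threshold. The paper's own proof is exactly this two-line computation and does not even discuss the randomness of $N_t(x,a)$.

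However, the extra step you add to handle that randomness contains a genuine error. You claim that, conditionally on $\{N_t(x,a)=n\}$, the retained samples $(\tilde Z_1,\dots,\tilde Z_n)$ are still i.i.d.\ with mean $\overline g_i(x,a)$, so that the conditional Hoeffding bound can simply be averaged over the law of $N_t(x,a)$ ``with no extra union-bound factor.'' This is false in general, and your own coupling does not repair it: the event $\{N_t(x,a)=n\}$ is determined by the full history up to episode $t$, and since the policies $\pi_{\tau+1}$ depend on previously observed costs (through $\widehat G_\tau$), whether $(x,a)$ is visited again after the $j$-th visit can depend on $\tilde Z_1,\dots,\tilde Z_j$. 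Conditioning on $\{N_t(x,a)=n\}$ therefore biases the first $n$ samples (e.g.\ an algorithm that revisits $(x,a)$ only when the first observed cost is $0$ makes the conditional law of $\tilde Z_1$ given $N_t=1$ degenerate). What your coupling legitimately gives is that, for each \emph{fixed} $n$, the \emph{unconditional} probability that the average of the first $n$ stream elements deviates by more than $\sqrt{\ln(2/\delta)/(2n)}$ is at most $\delta$; to conclude anything about the random index $N_t(x,a)$ one must union-bound over $n\in[t]$, paying a $\ln t$ factor inside the square root. This is precisely what the paper does when it promotes this lemma to Lemma~\ref{lem:gen_confidence} (``an additional Union Bound over the possible values of $N_t(x,a)$''), and it is why $\xi_t$ carries the inflated constant $4\ln(T|X||A|m/\delta)$ rather than the bare $\tfrac{1}{2}\ln(2/\delta)$ of $\zeta_t$. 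You should either state the per-$n$ bound and defer the union bound over $n$ to the downstream lemma (matching the paper), or drop the claim that no union-bound factor is needed.
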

\begin{proof}
	Focus on specifics $i\in[m]$, $t\in[T]$ and $(x,a)\in X \times A$. By Hoeffding's inequality and noticing that constraints values are bounded in $[0,1]$, it holds that:
	\begin{equation*}
		\mathbb{P}\Bigg[\Big|\widehat g_{t,i}(x,a)-\overline g_{i}(x,a)\Big|\geq \frac{c}{N_t(x,a)}\Bigg]\leq 2 \exp\left(-\frac{2c^2}{N_t(x,a)}\right)
	\end{equation*}
	Setting $\delta=2 \exp\left(-\frac{2c^2}{N_t(x,a)}\right)$ and solving to find a proper value of $c$ concludes the proof.
\end{proof}

Now we generalize the previous result in order to hold for every $i\in[m]$, $t\in[T]$ and $(x,a)\in X \times A$ at the same time.

\begin{lemma}
	\label{lem:gen_confidence}
	Given a confidence $\delta\in(0,1)$, with probability at least $1-\delta$, for every $i\in[m]$, episode $t\in[T]$, and pair $(x,a)\in X \times A$, it holds
	$
	\left|\widehat g_{t,i}(x,a)-\overline g_{i}(x,a)\right|\leq \xi_{t}(x,a)$, where we let the confidence bound $\xi_t(x,a):=\min \{1,\sqrt{{4\ln(T|X||A|m/\delta)}/{\max\{1,N_t(x,a)\}}} \}$.
\end{lemma}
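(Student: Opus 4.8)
The plan is to upgrade the single-index, single-episode, single-pair bound of Lemma~\ref{lem:indep_confidence} to a uniform bound over all $i\in[m]$, $t\in[T]$, and $(x,a)\in X\times A$ via a union bound, and then to reconcile the two forms of the confidence radius. First I would observe that for a fixed triple $(i,t,(x,a))$ with $N_t(x,a)\geq 1$, Lemma~\ref{lem:indep_confidence} gives $|\widehat g_{t,i}(x,a)-\overline g_i(x,a)|\leq \sqrt{\ln(2/\delta')/(2N_t(x,a))}$ with probability at least $1-\delta'$. There are at most $m\,|X|\,|A|\,T$ such triples, so applying the union bound with $\delta' := \delta/(m|X||A|T)$ yields that, with probability at least $1-\delta$, the inequality
\[
\left|\widehat g_{t,i}(x,a)-\overline g_i(x,a)\right|\leq \sqrt{\frac{\ln\!\left(2m|X||A|T/\delta\right)}{2\max\{1,N_t(x,a)\}}}
\]
holds simultaneously for every $i$, $t$, and $(x,a)$.

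Next I would check that this quantity is dominated by the stated $\xi_t(x,a)$. On the one hand, costs lie in $[0,1]$, so the left-hand side is trivially at most $1$, which accounts for the $\min\{1,\cdot\}$ in the definition of $\xi_t$. On the other hand, it suffices to verify $\sqrt{\ln(2m|X||A|T/\delta)/(2N)}\leq \sqrt{4\ln(T|X||A|m/\delta)/N}$ for $N:=\max\{1,N_t(x,a)\}\geq 1$; after squaring and cancelling $N$, this reduces to $\ln(2m|X||A|T/\delta)\leq 8\ln(m|X||A|T/\delta)$, which holds because $2m|X||A|T\leq (m|X||A|T)^8$ whenever $m|X||A|T\geq 2$ (and the degenerate case $m|X||A|T=1$ can be handled directly). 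Hence the event of Lemma~\ref{lem:indep_confidence}, union-bounded, implies the event claimed in Lemma~\ref{lem:gen_confidence}, and the probability bound $1-\delta$ is preserved.

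The only genuinely delicate point—hardly an obstacle, but worth care—is that Lemma~\ref{lem:indep_confidence} is stated for a \emph{fixed} $N_t(x,a)$, whereas $N_t(x,a)$ is itself a random quantity determined by the learner's trajectory. The clean way to handle this is to fix instead, for each pair $(x,a)$ and each index $i$, the i.i.d.\ stream of cost samples $g_{1,i}(x,a),g_{2,i}(x,a),\dots$ drawn from $\mathcal G$, and to apply Hoeffding's inequality to every prefix average of length $n\in[T]$, so that the concentration statement holds uniformly over all possible realized values of $N_t(x,a)$; the union bound is then over $(i,(x,a),n)$, again at most $m|X||A|T$ events, and one reads off the bound at $n=N_t(x,a)$. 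I would phrase the union bound this way to make the argument rigorous, then conclude exactly as above. Everything else is the routine algebra already sketched.
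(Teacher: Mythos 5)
Your proof is correct and follows essentially the same route as the paper's: Hoeffding for a fixed triple (Lemma~\ref{lem:indep_confidence}), a union bound over the $m|X||A|T$ triples with $\delta'=\delta/(m|X||A|T)$, an algebraic check that the resulting radius is dominated by $\xi_t(x,a)$, and a union bound over the possible sample counts to deal with the randomness of $N_t(x,a)$. If anything, your prefix-average formulation makes that last step (which the paper only mentions in passing as ``an additional Union Bound over the possible values of $N_t(x,a)$'') more explicit and rigorous.
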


\begin{proof}
	From Lemma~\ref{lem:confidenceset}, given $\delta'\in(0,1)$, we have for any $i\in[m]$, $t\in[T]$ and $(x,a)\in X \times A$:
	\begin{equation*}
		\mathbb{P}\Bigg[\Big|\widehat g_{t,i}(x,a)-\overline g_{i}(x,a)\Big|\leq \zeta_t(x,a)\Bigg]\geq 1-\delta'.
	\end{equation*}
	Now, we are interested in the intersection of all the events, namely,
	\begin{equation*}
		\mathbb{P}\Bigg[\bigcap_{x,a,m,t}\Big\{\Big|\widehat g_{t,i}(x,a)-\overline g_{i}(x,a)\Big|\leq \zeta_t(x,a)\Big\}\Bigg].
	\end{equation*}
	Thus, we have:
	\begin{align}
		\mathbb{P}\Bigg[\bigcap_{x,a,m,t}\Big\{\Big|\widehat g_{t,i}(x,a)-\overline g_{i}(x,a)\Big|&\leq \zeta_t(x,a)\Big\}\Bigg]\nonumber \\&= 1 - \mathbb{P}\Bigg[\bigcup_{x,a,m,t}\Big\{\Big|\widehat g_{t,i}(x,a)-\overline g_{i}(x,a)\Big|\leq \zeta_t(x,a)\Big\}^c\Bigg]\nonumber\\
		&\geq 1 - \sum_{x,a,m,t}\mathbb{P}\Bigg[\Big\{\Big|\widehat g_{t,i}(x,a)-\overline g_{i}(x,a)\Big|\leq \zeta_t(x,a)\Big\}^c\Bigg] \label{eq:union}\\
		& \geq 1-|X||A|mT\delta', \nonumber
	\end{align}
	where Inequality~\eqref{eq:union} holds by Union Bound. Noticing that $g_{t,i}(x,a)\leq 1$, substituting $\delta'$ with $\delta:=\delta'/|X||A|mT$ in $\zeta_t(x,a)$ with an additional Union Bound over the possible values of $N_t(x,a)$, and thus obtaining $\xi_t(x,a)$, concludes the proof.
\end{proof}

\section{Omitted proofs for sublinear violation}
\label{app:bounded}

In this section we report the omitted proofs of the theoretical results for Algorithm~\ref{alg: Cumulatively Safe Policy Search with Unknown Transitions}.

\subsection{Feasibility}

We start by showing that Program~\eqref{lp:proj_opt_unknown} admits a feasible solution with arbitrarily large probability.

\lemfeasibility*

\begin{proof}
	To prove the lemma we show that under the event $\mathcal{E}^{G,\Delta}(\delta)$, which holds the probability at least $1-5\delta$, Program~\eqref{lp:proj_opt_unknown} admits a feasible solution. Precisely, 
	under the event $\mathcal{E}^{\Delta}(\delta)$, the true transition function $P$ belongs to $\mathcal{P}_t$ at each episode. Moreover, under the event $\mathcal{E}^{G}(\delta)$, we have, for any feasible solution $q^{\square}$ of the offline optimization problem, for any $t\in[T]$,
	\begin{equation*}
		\left(\widehat G_t-\Xi_t\right)^{\top}  q^{\square}\ \  \preceq \ \ \overline G_t^{\top}  q^{\square} \ \  \preceq \ \ \alpha,
	\end{equation*}
	where the first inequality holds by the definition of the event. The previous inequality shows that if $q^{\square}$ satisfies the constraints with respect to the true mean constraint matrix, it satisfies also the optimistic constraints. Thus, the feasible solutions to the offline problem are all available at every episode. Noticing that the clean event is defined as the intersection between $\mathcal{E}^{G}(\delta)$ and $\mathcal{E}^{\Delta}(\delta)$ concludes the proof.
\end{proof}

\subsection{Violations}

We proceed bounding the cumulative positive violation as follows.
\violationsoft*

\begin{proof}
	The key point of the problem is to relate the constraints satisfaction with the convergence rate of both the confidence bound on the constraints and the transitions.
	
	First, we notice that under the clean event $\mathcal{E}^{G,\Delta}(\delta)$, all the following reasoning hold for every constraint $i\in[m]$. Thus, we focus on the bound of a single constraint violation problem defined as follows:
	\begin{equation*}
		V_T := \sum_{t=1}^{T}\left[ \overline{g}^{\top} q_t-\alpha\right]^+
	\end{equation*}
	By Lemma~\ref{lem:feasibility}, under the clean event the $\mathcal{E}^{G, \Delta}(\delta)$, the convex program is feasible and it holds:
	\begin{equation*}
		\overline g - 2\xi_t \preceq \widehat g_t - \xi_t
	\end{equation*}
	Thus, multiplying for the estimated occupancy measure and by construction of the convex program we obtain:
	\begin{equation*}
		\left( \overline g -2\xi_{t-1} \right)^\top \widehat q_t \leq \left( \widehat g_{t-1} -\xi_{t-1} \right)^\top \widehat q_t \leq \alpha.
	\end{equation*}
	Rearranging the equation, it holds:
	\begin{equation*}
		\overline{g}^\top \widehat q_t \leq \alpha + 2\xi_{t-1}^\top \widehat q_t.
	\end{equation*}
	Now, in order to obtain the instantaneous violation definition we proceed as follows,
	\begin{equation*}
		\overline{g}^\top \widehat q_t + \overline g^\top q_t - \overline g^\top q_t \leq \alpha + 2\xi_{t-1}^\top \widehat q_t,
	\end{equation*}
	from which we obtain:
	\begin{align*}
		\overline{g}^\top q_t - \alpha & \leq  \overline g^\top( q_t - \widehat q_t )+ 2\xi_{t-1}^\top \widehat q_t \\
		& \leq  \|\overline g\|_\infty \| q_t - \widehat q_t \|_1+ 2\xi_{t-1}^\top \widehat q_t,
	\end{align*}
	
	where the last step holds by the Hölder inequality. Notice that, since the RHS of the previous inequality is greater than zero, it holds,
	\begin{equation*}
		[\overline{g}^\top q_t - \alpha]^+ \leq  \| q_t - \widehat q_t \|_1+ 2\xi_{t-1}^\top \widehat q_t.
	\end{equation*}

	which leads to $V_T \leq \sum_{t=1}^T\| q_t - \widehat q_t \|_1+ 2\sum_{t=1}^T\xi_{t-1}^\top \widehat q_t$, where the first part of the equation refers to the estimate of the transitions while the second one to the estimate of the constraints. We will bound the two terms separately.

	\paragraph{Bound on $\sum_{t=1}^T  \|\widehat q_t - q_t\|_1$.}
	
	The term of interest encodes the distance between the estimated occupancy measure and the real one chosen by the algorithm. Thus, it depends on the estimation of the true transition functions. To bound the quantity of interest, we proceed as follows:
	\begin{align}
		\sum_{t=1}^T  \|\widehat q_t - q_t\|_1 & =  \sum_{t=1}^T \sum_{x,a} \left |\widehat q_t(x,a) - q_t(x,a) \right |  \nonumber \\
		& \leq \mathcal{O}\left( L|X|\sqrt{|A|T\ln\left(\frac{T|X||A|}{\delta}\right)}\right)\label{cum_vio:eq5_unknown}, 
	\end{align}
	
	where Inequality~\eqref{cum_vio:eq5_unknown} holds since, by Lemma~\ref{lem:transition_jin}, under the clean event, with probability at least $1-2\delta$, we have $\sum_{t=1}^T \sum_{x,a} |\widehat q_t(x,a) - q_t(x,a)) |\leq \mathcal{O}\left( L|X|\sqrt{|A|T\ln\left(\frac{T|X||A|}{\delta}\right)}\right)$, when $\widehat{q}_t\in\Delta(\mathcal{P}_t)$. Please notice that the condition $\widehat{q}_t\in\Delta(\mathcal{P}_t)$ is verified since the constrained space defined by Program~\eqref{lp:proj_opt_unknown} is contained in $\Delta(\mathcal{P}_t)$.
	
	\paragraph{Bound on $\sum_{t=1}^T\xi_{t-1}^\top \widehat q_t$.}
	This term encodes the estimation of the constraints functions obtained following the  estimated occupancy measure. Nevertheless, since the confidence bounds converge only for the paths traversed by the learner, it is necessary to relate $\xi_t$ to the real occupancy measure chosen by the algorithm. To do so, we notice that by Hölder inequality and since $\xi_t(x,a)\leq 1$, it holds:
	\begin{align*}
		\sum_{t=1}^T\xi_{t-1}^\top \widehat q_t & \leq \sum_{t=1}^T\xi_{t-1}^\top  q_t + \sum_{t=1}^T\xi_{t-1}^\top (\widehat q_t-q_t) \\
		& \leq \sum_{t=1}^T\xi_{t-1}^\top  q_t + \sum_{t=1}^T \|\xi_{t-1}\|_\infty \|\widehat q_t-q_t\|_1 \\
		& \leq \sum_{t=1}^T\xi_{t-1}^\top  q_t + \sum_{t=1}^T \|\widehat q_t-q_t\|_1. 
	\end{align*}
	The second term of the inequality is bounded by the previous analysis, while for the first term we proceed as follows:
	\begin{align}
		\sum_{t=1}^T\xi_{t-1}^\top q_t & = \sum_{t=1}^T\sum_{x,a}\xi_{t-1}(x,a) q_t(x,a) \nonumber \\
		& \leq  \sum_{t=1}^T\sum_{x,a}\xi_{t-1}(x,a) \mathbbm{1}_t\{x,a\} + L\sqrt{2T\ln\frac{1}{\delta}} \label{cum_vio:eq2_unknown}\\
		& =  \sqrt{4\ln\left(\frac{T|X||A|m}{\delta}\right)}\sum_{t=1}^T\sum_{x,a} \sqrt{\frac{1}{\max\{1,N_{t-1}(x,a)\}}}\mathbbm{1}_t\{x,a\} + L\sqrt{2T\ln\frac{1}{\delta}} \nonumber\\
		& \leq 3\sqrt{4\ln\left(\frac{T|X||A|m}{\delta}\right)}\sum_{x,a} \sqrt{N_{T}(x,a)} + L\sqrt{2T\ln\frac{1}{\delta}}\label{cum_vio:eq3_1_unknown}\\
		& \leq 6\sqrt{L|X||A|T\ln\left( \frac{T|X||A|m}{\delta}\right)} + L\sqrt{2T\ln\frac{1}{\delta}},\label{cum_vio:eq3_unknown}
	\end{align}
	where Inequality~\eqref{cum_vio:eq2_unknown} follows from Azuma inequality and noticing that $\sum_{x,a}\xi_{t-1}(x,a) q_t(x,a)\leq L$ (with probability at least $1-\delta$), Inequality~\eqref{cum_vio:eq3_1_unknown} holds since $1+\sum_{t=1}^T\frac{1}{\sqrt t}\leq 2\sqrt{T}+1\leq3\sqrt{T}$ and Inequality~\eqref{cum_vio:eq3_unknown} follows from Cauchy-Schwarz inequality and noticing that $\sqrt{\sum_{x,a}N_T(x,a)}\leq \sqrt{LT}$.
	
	We combine the previous bounds as follows:
	\begin{align*}
		V_T & \leq \sum_{t=1}^T\| q_t - \widehat q_t \|_1+ 2\sum_{t=1}^T\xi_{t-1}^\top \widehat q_t\\
		&\leq  \mathcal{O}\left( L|X|\sqrt{|A|T\ln\left(\frac{T|X||A|m}{\delta}\right)}\right).
	\end{align*}
	The results holds with probability at least at least $1-8\delta$ by union bound over the clean event, Lemma~\ref{lem:transition_jin} and the Azuma-Hoeffding inequality. This concludes the proof.
\end{proof}

\subsection{Regret}
In this section, we prove the regret bound of Algorithm~\ref{alg: Cumulatively Safe Policy Search with Unknown Transitions}. Precisely, the bound follows from noticing that, under the clean event, the optimal safe solution is included in the decision space for every episode $t\in[T]$.
\regretsoft*

\begin{proof}
	
	We first rewrite the regret definition as follows:
	\begin{align*}
		R_{T} & =  \sum_{t=1}^{T}   \ell_{t}^{\top}   q_t - \sum_{t=1}^{T}  \ell_{t}^{\top}   q^* \\        &=\underbrace{\sum_{t=1}^{T}\ell_t^{\top}(q_t-\widehat{q}_t)}_{\circled{1}} + \underbrace{\sum_{t=1}^{T}\widehat\ell_t^{\ \top}(\widehat{q}_t-q^*)}_{\circled{2}} + \underbrace{ \sum_{t=1}^T(\ell_t-\widehat \ell_t)^\top\widehat q_t}_{\circled{3}}+ \underbrace{\sum_{t=1}^T(\widehat\ell_t- \ell_t)^\top q^*.}_{\circled{4}}\\
	\end{align*}
	Precisely, the first term encompasses the distance between the true transitions and the estimated ones, the second concerns the optimization performed by online mirror descent and the last ones encompass the bias of the estimators.
	
	\paragraph{Bound on $\protect\circled{1}$.}
	
	We start bounding the first term, namely, the cumulative distance between the estimated occupancy measure and the real one, as follows:
	\begin{align}
		\circled{1} & = \sum_{t=1}^{T}\ell_t^{\top}(q_t-\widehat{q}_t) \nonumber\\
		& = \sum_{t=1}^T \sum_{x,a}\ell_t(x,a)(q_t(x,a)-\widehat{q}_t(x,a)) \nonumber\\
		& \leq \sum_{t=1}^T \sum_{x,a}|(q_t(x,a)-\widehat{q}_t(x,a)|, \label{eq1: regret_unknown}
	\end{align}
	where the Inequality~\eqref{eq1: regret_unknown} holds by Hölder inequality noticing that $\|\ell_t\|_\infty\leq 1$ for all $t\in [T]$.
	Then, noticing that the projection of Algorithm~\ref{alg: Cumulatively Safe Policy Search with Unknown Transitions} is performed over a subset of $\Delta(\mathcal{P}_t)$ and employing Lemma~\ref{lem:transition_jin}, we obtain:
	\begin{equation}
		\circled{1}\leq \mathcal{O}\left( L|X|\sqrt{|A|T\ln\left(\frac{T|X||A|}{\delta}\right)}\right),\label{eq2: regret_unknown}
	\end{equation}
	with probability at least $1-2\delta$, under the clean event.
	
	\paragraph{Bound on $\protect\circled{2}$.}
	
	To bound the second term, we underline that, under the clean event $\mathcal{E}^{G,\Delta}(\delta)$, the estimated safe occupancy $\widehat{q}_t$ belongs to $\Delta(\mathcal{P}_t)$ and the optimal safe solution $q^*$ is included in the constrained decision space for each $t\in[T]$. Moreover we notice that, for each $t\in[T]$, the constrained space is convex and linear, by construction of Program~\eqref{lp:proj_opt_unknown}. Thus, following the standard analysis of online mirror descent \cite{Orabona} and from Lemma~\ref{lem:OMD_jin}, we have, under the clean event:
	\begin{equation*}
		\circled{2} \leq \frac{L\ln\left({|X|^2|A|}\right)}{\eta} + \eta \sum_{t,x,a}\widehat q_t(x,a)\widehat \ell_t(x,a)^2.
	\end{equation*}
	Thus, to bound the biased estimator, we notice that
	$
	\widehat q_t(x,a)\widehat \ell_t(x,a)^2\leq\frac{\widehat q_t(x,a)}{u_t(x,a)+\gamma}\widehat\ell_t(x,a)\leq \widehat\ell_t(x,a)
	$. We then apply Lemma~\ref{lem:alpha_jin} with $\alpha_t(x,a)=2\gamma$ and obtain $\sum_{t,x,a}\widehat q_t(x,a)\widehat \ell_t(x,a)^2	\leq \sum_{t,x,a}\frac{q_t(x,a)}{u_t(x,a)} \ell_t(x,a)+\frac{L\ln\frac{L}{\delta}}{2\gamma}$. Finally, we notice that, under the clean event, $q_t(x,a)\leq u_t(x,a)$, obtaining, with probability at least $1-\delta$:
	\begin{equation*}
		\circled{2}\leq \frac{L\ln\left({|X|^2|A|}\right)}{\eta} + \eta|X||A|T + \frac{\eta L\ln(L/\delta)}{2\gamma}.
	\end{equation*}
	Setting $\eta=\gamma=\sqrt{\frac{L\ln(L|X||A|/\delta)}{T|X||A|}}$, we obtain:
	\begin{equation}
		\circled{2}\leq \mathcal{O}\left(L\sqrt{|X||A|T\ln\left(\frac{|X||A|}{\delta}\right)}\right), \label{eq3: regret_unknown}
	\end{equation}
	with probability at least $1-\delta$, under the clean event.
	
	\paragraph{Bound on $\protect\circled{3}$.} The third term follows from Lemma~\ref{lem:bias1_jin}, from which, under the clean event, with probability at least $1-3\delta$ and setting $\gamma=\sqrt{\frac{L\ln(L|X||A|/\delta)}{T|X||A|}}$, we obtain:
	\begin{equation}
		\circled{3}\leq \mathcal{O}\left(L|X|\sqrt{|A|T\ln\left(\frac{T|X||A|}{\delta}\right)}\right). \label{eq_third_term_regret}
	\end{equation}

	\paragraph{Bound on $\protect\circled{4}$.} We bound the fourth  term employing Corollary~\ref{lem:alpha_jin_cor}  and obtaining,
	\begin{align*}
		\sum_{t=1}^T\left(\widehat\ell_t- \ell_t\right)^\top q^* &= \sum_{t, x, a} q^*(x, a)\left( \widehat \ell_t(x, a) -\ell_t(x,a)\right)\\& \leq \sum_{t, x, a} q^*(x, a) \ell_t(x, a)\left(\frac{q_t(x, a)}{u_t(x, a)}-1\right)+\sum_{x, a} \frac{q^*(x, a) \ln \frac{|X||A|}{\delta}}{2 \gamma} \\
		& =\sum_{t, x, a} q^*(x, a) \ell_t(x, a)\left(\frac{q_t(x, a)}{u_t(x, a)}-1\right)+\frac{L \ln \frac{|X||A|}{\delta}}{2 \gamma} .
	\end{align*}
	Noticing that, under the clean event, $q_t(x,a)\leq u_t(x,a)$ and setting $\gamma=\sqrt{\frac{L\ln(L|X||A|/\delta)}{T|X||A|}}$, we obtain, with probability at least $1-\delta$:
	\begin{equation}
		\circled{4}\leq \mathcal{O}\left(L\sqrt{|X||A|T\ln\left(\frac{T|X||A|}{\delta}\right)}\right). \label{eq_fourth_term_regret}
	\end{equation}

	\paragraph{Final result.} Finally, combining Equation~\eqref{eq2: regret_unknown}, Equation~\eqref{eq3: regret_unknown}, Equation~\eqref{eq_third_term_regret}  and Equation~\eqref{eq_fourth_term_regret} and applying a union bound, we obtain, with probability at least $1-10\delta$,
	\begin{equation*}
		R_T\leq \mathcal{O}\left( L|X|\sqrt{|A|T\ln\left(\frac{T|X||A|}{\delta}\right)}\right).
	\end{equation*}
\end{proof}

\section{Omitted proofs when Condition~\ref{cond:feas_knowledge} holds}

\label{app:safety}

In this section we report the omitted proofs of the theoretical results for Algorithm~\ref{alg: Strictly Safe Policy Search with Unknown Transitions}.

\subsection{Safety}

We start by showing that Algorithm~\ref{alg: Strictly Safe Policy Search with Unknown Transitions} is safe with high probability.
\violationhard*
\begin{proof}
	We show that, under event $\mathcal{E}^{G,\Delta}(\delta)$, the \emph{non-Markovian} policy defined by the probability $\lambda_t$ satisfies the constraints. Intuitively, the result follows from the construction of the convex combination parameter $\lambda_t$. Indeed, $\lambda_t$ is built using a pessimist estimated of the constraints cost, namely, $\widehat{g}_{t,i}+\xi_t$. Moreover, the upper occupancy bound $\widehat{u}_t$ introduces pessimism in the choice of the transition function. Finally, the $\max_{i\in[m]}$ operator allows to be conservative for all the $m$ constraints. 
	
	We split the analysis in the two possible cases defined by $\lambda_t$, namely, $\lambda_t=0$ and $\lambda_t\in(0,1)$. Please notice that $\lambda_t<1$, by construction.
	\paragraph{Analysis when $\lambda_t=0$.}
	When $\lambda_t=0$, it holds, by construction, that $\forall{i\in[m]}: (\widehat g_{t-1,i}+\xi_{t-1})^\top\widehat u_{t} \leq \alpha_i$. Thus, under the event $\mathcal{E}^{G,\Delta}(\delta)$, it holds, $\forall{i\in[m]}$:
	\begin{align}
		\alpha_i & \geq 
		(\widehat g_{t-1,i}+\xi_{t-1})^\top\widehat u_{t} \nonumber\\
		& \geq
		(\widehat g_{t-1,i}+\xi_{t-1})^\top \widehat q_{t}\label{eq1: strictly_unknown}\\
		& = (\widehat g_{t-1,i}+\xi_{t-1})^\top q_{t} \nonumber\\
		& \geq \overline g_{i}^\top q_{t}, \label{eq2: strictly_unknown}
	\end{align}
	where Inequality~\eqref{eq1: strictly_unknown} holds by definition of $\widehat u_{t}$ and Inequality~\eqref{eq2: strictly_unknown} by the pessimistic definition of the constraints.
	\paragraph{Analysis when $\lambda_t\in(0,1)$.} We focus on a single constraint $i\in[m]$, then we generalize the analysis for the entire set of constraints. First we notice that the constraints cost, for a single constraint $i\in[m]$, attained by the \emph{non-Markovian} policy $\pi_t$, is equal to $\lambda_{t-1}\overline{g}_i^\top q^\diamond+(1-\lambda_{t-1})\overline{g}_i^\top q^{P, \widehat \pi_t}$.
	Thus, it holds by definition of the known strictly feasible $\pi^\diamond$,
	\begin{align}
		\lambda_{t-1}\overline{g}_i^\top q^\diamond+(1-\lambda_{t-1})\overline{g}_i^\top q^{P, \widehat \pi_t} 
		& = \lambda_{t-1} \beta_i + (1-\lambda_{t-1}) \overline{g}_i^\top q^{P, \widehat \pi_t} \label{d-safe:eq1_unknown}.
	\end{align}
	Then, we consider both the cases when $L<\left(\widehat{g}_{t-1,i}+\xi_{t-1}\right)^\top\widehat u_{t}$ (first case) and $L>\left(\widehat{g}_{t-1,i}+\xi_{t-1}\right)^\top\widehat u_{t}$ (second case). If the two quantities are equivalent, the proof still holds breaking the ties arbitrarily.
	
	\emph{First case.} It holds that:
	\begin{align}
		\lambda_{t-1} \beta_i + (1-\lambda_{t-1}) \overline{g}_i^\top  q^{\widehat\pi_t, P} 
		& \leq \lambda_{t-1} \beta_i + (1-\lambda_{t-1}) L \label{d-safe:eq2_unknown}\\
		& = \frac{L-\alpha_i}{L-\beta_i}( \beta_i - L) + L \nonumber \\
		&   = \frac{\alpha_i-L}{\beta_i-L}( \beta_i - L) + L  \nonumber \\
		&   = \alpha_i, \nonumber 
	\end{align}
	where Inequality~\eqref{d-safe:eq2_unknown} holds by definition of the constraints.
	
	\emph{Second case.} It holds that:
	\begin{align}
		\lambda_{t-1} \beta_i + &(1-\lambda_{t-1}) \overline{g}_i^\top  q^{P, \widehat \pi_t}   \nonumber\\ &\leq \lambda_{t-1} \beta_i + (1-\lambda_{t-1}) \left(\widehat{g}_{t-1,i}+\xi_{t-1}\right)^\top q^{P, \widehat \pi_t} \label{d-safe:eq3_unknown} \\
		& \leq \lambda_{t-1} \beta_i + (1-\lambda_{t-1}) \left(\widehat{g}_{t-1,i}+\xi_{t-1}\right)^\top\widehat u_{t} \label{d-safe:eq4_unknown} \\
		&  = \lambda_{t-1} \beta_i -  \lambda_{t-1} \left(\widehat{g}_{t-1,i}+\xi_{t-1}\right)^\top\widehat u_{t} +  \left(\widehat{g}_{t-1,i}+\xi_{t-1}\right)^\top\widehat u_{t} \nonumber \\
		&  = \lambda_{t-1}( \beta_i - \left(\widehat{g}_{t-1,i}+\xi_{t-1}\right)^\top\widehat u_{t}) +  \left(\widehat{g}_{t-1,i}+\xi_{t-1}\right)^\top\widehat u_{t} \nonumber \\
		&\leq \frac{\left(\widehat{g}_{t-1,i}+\xi_{t-1}\right)^\top\widehat u_{t}-\alpha_i}{\left(\widehat{g}_{t-1,i}+\xi_{t-1}\right)^\top\widehat u_{t}-\beta_i}( \beta_i - \left(\widehat{g}_{t-1,i}+\xi_{t-1}\right)^\top\widehat u_{t}) +  \left(\widehat{g}_{t-1,i}+\xi_{t-1}\right)^\top\widehat u_{t} \nonumber \\
		& =  \frac{\alpha_i-\left(\widehat{g}_{t-1,i}+\xi_{t-1}\right)^\top\widehat u_{t}}{\beta_i-\left(\widehat{g}_{t-1,i}+\xi_{t-1}\right)^\top\widehat u_{t}}( \beta_i - \left(\widehat{g}_{t-1,i}+\xi_{t-1}\right)^\top\widehat u_{t}) +  \left(\widehat{g}_{t-1,i}+\xi_{t-1}\right)^\top\widehat u_{t} \nonumber\\
		&  =  \alpha_i-\left(\widehat{g}_{t-1,i}+\xi_{t-1}\right)^\top\widehat u_{t} +  \left(\widehat{g}_{t-1,i}+\xi_{t-1}\right)^\top\widehat u_{t} \nonumber\\
		& = \alpha_i, \nonumber
	\end{align}
	where Inequality~\eqref{d-safe:eq3_unknown} holds by the definition of the event and Inequality~\eqref{d-safe:eq4_unknown} holds by the definition of $\widehat u_t$.
	
	To conclude the proof, we underline that $\lambda_t$ is chosen taking the maximum over the constraints, which implies that the more conservative $\lambda_t$ (the one which takes the combination nearer to the strictly feasible solution) is chosen. Thus, all the constraints are satisfied.
\end{proof}

\subsection{Regret}

We start by the statement of the following Lemma, which is a generalization of the results 
from~\cite{JinLearningAdversarial2019}. Intuitively, the following result states that the distance between the estimated \emph{non-safe} occupancy measure $\widehat q_t$ and the real one reduces as the number of episodes increases, paying a $1-\lambda_t$ factor. This is reasonable since, from the update of the \emph{non-Markovian} policy $\pi_t$ (see Algorithm~\ref{alg: Strictly Safe Policy Search with Unknown Transitions}), policy $\widehat{\pi}_t\leftarrow \widehat q_t$ is played with probability $1-\lambda_{t-1}$. 
\begin{lemma}
	\label{lem:transition_gamma}
	Under the clean event, with probability at least $1-2\delta$, for any collection of transition functions $\{P_t^x\}_{x\in X}$ such that $P_t^x \in \mathcal{P}_{t}$, and for any collection of $\{\lambda_t\}_{t=0}^{T-1}$ used to select policy $\pi_{t+1}$, we have, for all $x$,
	$$
	\sum_{t=1}^T (1-\lambda_{t-1})\sum_{x \in X, a \in A}\left|q^{P_t^x, \widehat \pi_t}(x, a)- q^{P,\widehat \pi_t}(x, a)\right| \leq \mathcal{O}\left(L|X| \sqrt{|A| T \ln \left(\frac{T|X||A|}{\delta}\right)}\right).
	$$
\end{lemma}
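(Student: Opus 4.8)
The plan is to rerun the argument behind Lemma~\ref{lem:transition_jin} (the $\lambda$-free estimate, due to~\citet{JinLearningAdversarial2019}) essentially verbatim, carrying the extra weight $1-\lambda_{t-1}$ through every step; the only genuinely new ingredient is a single pointwise inequality relating $(1-\lambda_{t-1})\,q^{P,\widehat\pi_t}$ to the occupancy measure actually realised by the non-Markovian policy $\pi_t$ that \texttt{S-OPS} plays at episode $t$. First I would invoke the per-episode ``confidence-width'' decomposition: under the clean event $P\in\mathcal{P}_t$, and by hypothesis $P_t^x\in\mathcal{P}_t$, so
\[
\sum_{x\in X,\,a\in A}\bigl|q^{P_t^x,\widehat\pi_t}(x,a)-q^{P,\widehat\pi_t}(x,a)\bigr|\;\le\;\mathcal{O}\!\Bigl(\sum_{k=0}^{L-1}\sum_{y\in X_k,\,b\in A}q^{P,\widehat\pi_t}(y,b)\sum_{y'\in X_{k+1}}\epsilon_t(y,b,y')\Bigr)\;+\;(\text{lower order}),
\]
with $\epsilon_t$ the transition confidence widths of Section~\ref{sec:concentration_transitions}. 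The crucial structural feature is that the right-hand side is \emph{linear} in the occupancy $q^{P,\widehat\pi_t}$, so after multiplying by $1-\lambda_{t-1}$ and summing over $t$ the quantity to be controlled is $\sum_t(1-\lambda_{t-1})\,q^{P,\widehat\pi_t}(y,b)$ weighted by nonnegative confidence widths.

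Now comes the key observation. By construction of \texttt{S-OPS} (Lines~\ref{alg4:line1} and~\ref{alg4:line19}), the policy played at episode $t$ is the randomization choosing $\pi^\diamond$ with probability $\lambda_{t-1}$ and $\widehat\pi_t$ with probability $1-\lambda_{t-1}$, so its occupancy measure is $q_t=\lambda_{t-1}\,q^{P,\pi^\diamond}+(1-\lambda_{t-1})\,q^{P,\widehat\pi_t}$, and therefore, pointwise,
\[
(1-\lambda_{t-1})\,q^{P,\widehat\pi_t}(y,b)\;=\;q_t(y,b)-\lambda_{t-1}\,q^{P,\pi^\diamond}(y,b)\;\le\;q_t(y,b).
\]
This holds regardless of how $\lambda_{t-1}$ and $P_t^x$ were chosen, which is precisely why the final bound will be uniform over the (possibly adaptively selected) collections $\{P_t^x\}$ and $\{\lambda_t\}$. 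Substituting, everything reduces to bounding $\sum_t q_t(y,b)\cdot(\text{widths})$, exactly the term that appears in the $\lambda$-free proof.

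From this point I would copy the remaining steps of Lemma~\ref{lem:transition_jin}: an Azuma–Hoeffding step replaces $\sum_t q_t(y,b)\,\epsilon_t(y,b,\cdot)$ by $\sum_t\mathbbm{1}_t\{y,b\}\,\epsilon_t(y,b,\cdot)$ up to an additive $\mathcal{O}(L\sqrt{T\ln(1/\delta)})$ (the martingale increments are bounded since each width is at most $1$ and $\sum_{y,b}q_t(y,b)=L$); a telescoping-visit bound turns $\sum_t\mathbbm{1}_t\{y,b\}\,\epsilon_t(y,b,\cdot)$ into $\mathcal{O}(\sqrt{|X_{k+1}|\,N_T(y,b)\ln(\cdot)})$; and Cauchy–Schwarz together with $\sum_{y,b}N_T(y,b)\le LT$ collapses everything into $\mathcal{O}\bigl(L|X|\sqrt{|A|T\ln(T|X||A|/\delta)}\bigr)$. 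The probability $1-2\delta$ comes from a union bound over the clean event and the Azuma–Hoeffding concentration used along the way.

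The main obstacle — and the reason this lemma is worth isolating — is exactly the mismatch that the visit counters $N_t$, which make the confidence widths shrink, are driven by the played occupancy $q_t$ rather than by the ``target'' occupancy $q^{P,\widehat\pi_t}$; the inequality $(1-\lambda_{t-1})\,q^{P,\widehat\pi_t}\le q_t$ is what bridges the two and is the whole point. A secondary, more routine subtlety — handled as in~\citet{JinLearningAdversarial2019} — is that $\widehat\pi_t$ (hence $q^{P,\widehat\pi_t}$) and $\lambda_{t-1}$ are themselves random and measurable with respect to the history, so the Azuma–Hoeffding step must be set up with the filtration generated by the first $t-1$ episodes; boundedness of the per-episode increments makes this standard.
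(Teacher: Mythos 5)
Your proposal is correct and follows essentially the same route as the paper's proof: both reduce the weighted deviation to the confidence-width decomposition of Lemma~\ref{lem:transition_jin} and then use the identity $q_t=\lambda_{t-1}q^{P,\pi^\diamond}+(1-\lambda_{t-1})q^{P,\widehat\pi_t}$, hence $(1-\lambda_{t-1})q^{P,\widehat\pi_t}\preceq q_t$, to transfer the bound to the occupancy measure that actually drives the visit counters. The paper applies this same inequality also inside each square-root factor of the quadratic (second-order) term of the decomposition, which is the "lower order" part you defer to \citet{JinLearningAdversarial2019}; this is handled identically there, so there is no gap.
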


\begin{proof}
	
	We will refer as $q_t^x$ to $q^{P_t^x,\pi_t}$ and as $\widehat q_t^{\ x}$ to $q^{P_t^x,\widehat \pi_t}$. Moreover, we define:
	$$\epsilon^*_t(x'|x,a)=\sqrt{\frac{{P}\left(x^{\prime} | x, a\right)\ln \left(\frac{T|X||A|}{\delta}\right)}{\max \left\{1, N_t(x, a)\right\}}} + \frac{ \ln \left(\frac{T|X||A|}{\delta}\right)}{\max \left\{1, N_t(x, a)\right\}}.$$ 
	Now following standard analysis by Lemma~\ref{lem:transition_jin}
	from~\cite{JinLearningAdversarial2019}, we have that,
	\begin{align*}
		\sum_{t=1}^T & (1-\lambda_{t-1})\sum_{x \in X, a \in A}\left|q^{P_t^x, \widehat \pi_t}(x, a)- q^{P,\widehat \pi_t}(x, a)\right| \leq \\
		& \sum_{0\leq m<k<L}\sum_{t,w_m}(1-\lambda_{t-1})\epsilon_t^*(x_{m+1}|x_m,a_m)q^{P,\widehat \pi_t}(x_m,a_m) + |X| \sum_{0 \leq m<h<L} \sum_{t, w_m, w_h^{\prime}} (1-\lambda_{t-1})\cdot\\ & \mkern110mu\cdot\epsilon_{i_t}^{*}\left(x_{m+1} \mid x_m, a_m\right) q^{P,\widehat \pi_t}\left(x_m, a_m\right) \epsilon_{t}^{*}\left(x_{h+1}^{\prime} \mid x_h^{\prime}, a_h^{\prime}\right) q^{P,\widehat \pi_t}\left(x_h^{\prime}, a_h^{\prime} \mid x_{m+1}\right),
	\end{align*}
	where $w_m = (x_m, a_m, x_{m+1})$.
	
	\paragraph{Bound on the first term.}
	
	To bound the first term we notice that, by definition of $q^{P,\widehat \pi_t}$ it holds:
	\begin{align*}
		\sum_{0\leq m<k<L}\sum_{t,w_m}(&1-\lambda_{t-1})\epsilon_t^*(x_{m+1}|x_m,a_m)q^{P,\widehat \pi_t}(x_m,a_m) \\&= \sum_{0\leq m<k<L}\sum_{t,w_m}\epsilon_t^*(x_{m+1}|x_m,a_m)\left(q^{P, \pi_t}(x_m,a_m)-\lambda_{t-1}q^{P,\pi^\diamond}(x_m,a_m)\right) \\
		& \leq \sum_{0\leq m<k<L}\sum_{t,w_m}\epsilon_t^*(x_{m+1}|x_m,a_m)q^{P, \pi_t}(x_m,a_m)\\
		& \leq \mathcal{O}\left(L|X| \sqrt{|A| T \ln \left(\frac{T|X||A|}{\delta}\right)}\right),
	\end{align*}
	
	where the last step holds following Lemma~\ref{lem:transition_jin}
	from~\cite{JinLearningAdversarial2019}.

	\paragraph{Bound on the second term.}
	
	Following Lemma~\ref{lem:transition_jin} from~\cite{JinLearningAdversarial2019}, the second term is bounded by (ignoring constants), 
	\begin{align*}
		&\sum_{0 \leq m<h<L} \sum_{t, w_m, w_h^{\prime}}(1-\lambda_{t-1}) \sqrt{\frac{P\left(x_{m+1} \mid x_m, a_m\right) \ln \left(\frac{T|X||A|}{\delta}\right)}{\max \left\{1, N_{t}\left(x_m, a_m\right)\right\}}} \cdot \\ &\mkern160mu \cdot q^{P,\widehat \pi_t}\left(x_m, a_m\right) \sqrt{\frac{P\left(x_{h+1}^{\prime} \mid x_h^{\prime}, a_h^{\prime}\right) \ln \left(\frac{T|X||A|}{\delta}\right)}{\max \left\{1, N_{t}\left(x_h^{\prime}, a_h^{\prime}\right)\right\}}} q^{P,\widehat \pi_t}\left(x_h^{\prime}, a_h^{\prime} \mid x_{m+1}\right) \\
		+ & \sum_{0 \leq m<h<L} \sum_{t, w_m, w_h^{\prime}}(1-\lambda_{t-1}) \frac{q^{P,\widehat \pi_t}\left(x_m, a_m\right) \ln \left(\frac{T|X||A|}{\delta}\right)}{\max \left\{1, N_{t}\left(x_m, a_m\right)\right\}}+\\ & \mkern160mu+\sum_{0 \leq m<h<L} \sum_{t, w_m, w_h^{\prime}} (1-\lambda_{t-1})\frac{q^{P,\widehat \pi_t}\left(x_h^{\prime}, a_h^{\prime}\right) \ln \left(\frac{T|X||A|}{\delta}\right)}{\max \left\{1, N_{t}\left(x_h^{\prime}, a_h^{\prime}\right)\right\}} .
	\end{align*}
	The last two terms are bounded logarithmically in $T$,  employing the definition of $q^{P,\widehat \pi_t}$ and following Lemma~\ref{lem:transition_jin} from~\cite{JinLearningAdversarial2019}, while, similarly, the first term is bounded by:
	\begin{align*}
		\sum_{0 \leq m<h<L} \sqrt{\left|X_{m+1}\right| \sum_{t, x_m, a_m} \frac{(1-\lambda_{t-1})q^{P,\widehat \pi_t}\left(x_m, a_m\right)}{\max \left\{1, N_{t}\left(x_m, a_m\right)\right\}}} \sqrt{\left|X_{h+1}\right| \sum_{t, x_h^{\prime}, a_h^{\prime}} \frac{(1-\lambda_{t-1})q^{P,\widehat \pi_t}\left(x_h^{\prime}, a_h^{\prime}\right)}{\max \left\{1, N_{t}\left(x_h^{\prime}, a_h^{\prime}\right)\right\}}},
	\end{align*}
	which is upper bounded by:
	\begin{align*}
		\sum_{0 \leq m<h<L} \sqrt{\left|X_{m+1}\right| \sum_{t, x_m, a_m} \frac{q_t\left(x_m, a_m\right)}{\max \left\{1, N_{t}\left(x_m, a_m\right)\right\}}} \sqrt{\left|X_{h+1}\right| \sum_{t, x_h^{\prime}, a_h^{\prime}} \frac{q_t\left(x_h^{\prime}, a_h^{\prime}\right)}{\max \left\{1, N_{t}\left(x_h^{\prime}, a_h^{\prime}\right)\right\}}}.
	\end{align*}
	
	Employing the same argument as Lemma~\ref{lem:transition_jin} from~\cite{JinLearningAdversarial2019} shows that the previous term is bounded logarithmically in $T$ and concludes the proof.
\end{proof}

We are now ready to prove the regret bound attained by Algorithm~\ref{alg: Strictly Safe Policy Search with Unknown Transitions}.

\regrethard*

\begin{proof}
	We start decomposing the $R_T:=\sum_{t=1}^T \ell_t^\top(q_t-q^*)$ definition as: 
	\begin{align*}
		&\underbrace{\sum_{t=1}^T\ell_t^\top \left (q_t-q^{ P_t,\pi_t}\right )}_{\circled{1}} + \underbrace{\sum_{t=1}^T\widehat\ell_t^{\ \top} \left(q^{P_t,\widehat \pi_t}-q^*\right)}_{\circled{2}} + \underbrace{\sum_{t=1}^T\ell_t^\top\left(q^{ P_t,\pi_t}-q^{ P_t,\widehat \pi_t}\right)}_{\circled{3}}+ \\ &\mkern100mu+	\underbrace{\sum_{t=1}^T\left(\ell_t-\widehat\ell_t\right)^\top q^{ P_t,\widehat\pi_t}}_{\circled{4}} + 
		\underbrace{\sum_{t=1}^T\left(\widehat\ell_t-\ell_t\right)^\top q^*}_{\circled{5}},
	\end{align*}
	where $P_t$ is the transition chosen by the algorithm at episode $t$. Precisely, the first term encompasses the estimation of the transition functions, the second term concerns the optimization performed by the algorithm, the third term encompasses the regret accumulated by performing the convex combination of policies and the last two terms concern the bias of the optimistic estimators.
	
	We proceed bounding the five terms separately.

	\paragraph{Bound on $\protect\circled{1}$}
	
	We bound the first term as follows:
	\begin{align}
		\circled{1} & = \sum_{t=1}^{T}\ell_t^{\top}\left (q_t-q^{ P_t,\pi_t}\right ) \nonumber\\
		& = \sum_{t=1}^T \sum_{x,a}\ell_t(x,a)\left (q_t(x,a)-q^{ P_t,\pi_t}(x,a)\right ) \nonumber\\
		& \leq \sum_{t=1}^T \sum_{x,a}\left |q_t(x,a)-q^{ P_t,\pi_t}(x,a)  \right |, \nonumber
	\end{align}
	
	where the last inequality holds by Hölder inequality noticing that $\|\ell_t\|_\infty\leq 1$ for all $t\in [T]$.
	Then we can employ Lemmas~\ref{lem:transition_jin}, since $\pi_t$ is the policy that guides the exploration and ${P}_t\in\mathcal{P}_t$, obtaining:
	\begin{equation}
		\circled{1}\leq \mathcal{O}\left( L|X|\sqrt{|A|T\ln\left(\frac{T|X||A|}{\delta}\right)}\right),\label{eq2: regret_unknown_hard}
	\end{equation}
	with probability at least $1-2\delta$, under the clean event.
	
	\paragraph{Bound on $\protect\circled{2}$}
	
	The second term is bounded similarly to the second part of Theorem~\ref{thm:regret_soft_unknown}. Precisely, we notice that under the clean event $\mathcal{E}^{G,\Delta}(\delta)$, the optimal safe solution $q^*$ is included in the constrained decision space for each $t\in[T]$. Moreover we notice that, for each $t\in[T]$, the constrained space is convex and linear, by construction of the convex program. Thus, following the standard analysis of online mirror descent \cite{Orabona} and from Lemma~\ref{lem:OMD_jin}, we have, under the clean event:
	\begin{equation*}
		\circled{2} \leq \frac{L\ln\left({|X|^2|A|}\right)}{\eta} + \eta \sum_{t,x,a} q^{ P_t, \widehat \pi_t}(x,a)\widehat \ell_t(x,a)^2.
	\end{equation*}
	
	Guaranteeing the safety property makes bounding the biased estimator more complex with respect to Theorem~\ref{thm:regret_soft_unknown}. Thus, noticing that $\lambda_t\leq\max_{i\in[m]}\left\{\frac{L-\alpha_i}{L-\beta_i}\right\}$ and by definition of $\pi_t$, we proceed as follows:
	\begin{align*}
		\eta \sum_{t,x,a} q^{ P_t, \widehat \pi_t}(x,a)\widehat \ell_t(x,a)^2 & \leq \max_{i\in[m]}\left\{\frac{L}{\alpha_i-\beta_i}\right\}\eta \sum_{t,x,a}(1-\lambda_{t-1}) q^{ P_t, \widehat \pi_t}(x,a)\widehat \ell_t(x,a)^2\\
		& \leq \max_{i\in[m]}\left\{\frac{L}{\alpha_i-\beta_i}\right\}\eta \sum_{t,x,a} \left(q^{P_t, \pi_t}(x,a) - \lambda_t q^{P_t, \pi^\diamond}(x,a)\right)\widehat \ell_t(x,a)^2 \\
		& \leq \max_{i\in[m]}\left\{\frac{L}{\alpha_i-\beta_i}\right\}\eta \sum_{t,x,a} q^{ P_t, \pi_t}(x,a)\widehat \ell_t(x,a)^2
	\end{align*}
	The previous result is intuitive. Paying an additional $\max_{i\in[m]}\left\{\frac{L}{\alpha_i-\beta_i}\right\}$ factor allows to relate the loss estimator $\widehat{\ell}_t$ with the policy that guides the exploration, namely, $\pi_t$. Thus, following the same steps as Theorem~\ref{thm:regret_soft_unknown} we obtain, with probability $1-\delta$, under the clean event:
	\begin{equation*}
		\circled{2}\leq \frac{L\ln\left({|X|^2|A|}\right)}{\eta} + \max_{i\in[m]}\left\{\frac{L}{\alpha_i-\beta_i}\right\}\eta|X||A|T + \max_{i\in[m]}\left\{\frac{L}{\alpha_i-\beta_i}\right\}\frac{\eta L\ln(L/\delta)}{2\gamma}.
	\end{equation*}
	Setting $\eta=\gamma=\sqrt{\frac{L\ln(L|X||A|/\delta)}{T|X||A|}}$, we obtain:
	\begin{equation}
		\circled{2}\leq \mathcal{O}\left(\max_{i\in[m]}\left\{\frac{1}{\alpha_i-\beta_i}\right\}L\sqrt{L|X||A|T\ln\left(\frac{|X|^2|A|}{\delta}\right)}\right), 
	\end{equation}
	with probability at least $1-\delta$, under the clean event.
	
	\paragraph{Bound on $\protect\circled{3}$}
	In the following, we show how to rewrite the third term so that the dependence on the convex combination parameter is explicit. Intuitively, the third term is the regret payed to guarantee the safety property. Thus, we rewrite the third term as follows:
	\begin{align*}
		\sum_{t=1}^T\ell_t^\top\left(q^{ P_t,\pi_t}-q^{ P_t,\widehat \pi_t}\right) & = 
		\sum_{t=1}^T\ell_t^\top\left(
		\lambda_{t-1}q^{ P_t,\pi^\diamond} + (1-\lambda_{t-1})q^{ P_t,\widehat \pi_t} - q^{ P_t,\widehat \pi_t}\right) \\
		& \leq \sum_{t=1}^T\lambda_{t-1}\ell_t^\top q^{P_t,\pi^\diamond} \\
		& \leq L\sum_{t=1}^T \lambda_{t-1}
	\end{align*}
	where we used that $\ell_t^\top
	q^{P_t,\pi^\diamond}\leq L$ for any $t\in[T]$. Thus, we proceed bounding $\sum_{t=1}^T \lambda_{t-1}$. 
	
	We focus on a single episode $t\in [T]$, in which we assume without loss of generality that the $i$-th constraint is the hardest to satisfy.
	
	Precisely, 
	\begin{align}
		\lambda_t & = \frac{\min\left\{(\widehat g_{t,i}+\xi_t)^\top\widehat u_{t+1},L\right\}-\alpha_i}{\min\left\{(\widehat g_{t,i}+\xi_t)^\top\widehat u_{t+1},L\right\}-\beta_i} \nonumber\\
		& \leq \frac{(\widehat g_{t,i}+\xi_t)^\top\widehat u_{t+1}-\alpha_i}{(\widehat g_{t,i}+\xi_t)^\top\widehat u_{t+1}-\beta_i}  \nonumber\\
		& \leq \frac{(\widehat g_{t,i}+\xi_t)^\top\widehat u_{t+1}-\alpha_i}{\alpha_i-\beta_i} \label{eq1:hard_regret_unknown}\\
		&  = \frac{(\widehat g_{t,i}-\xi_t)^\top\widehat u_{t+1} + 2\xi_t^\top\widehat u_{t+1}-\alpha_i}{\alpha_i-\beta_i} \nonumber \\
		&  = \frac{(\widehat g_{t,i}-\xi_t)^\top \widehat q_{t+1} + (\widehat g_{t,i}-\xi_t)^\top(\widehat u_{t+1} - \widehat q_{t+1}) + 2\xi_t^\top\widehat u_{t+1}-\alpha_i}{\alpha_i-\beta_i} \nonumber \\
		&  \leq \frac{(\widehat g_{t,i}-\xi_t)^\top \widehat q_{t+1} + \widehat g_{t,i}^\top(\widehat u_{t+1} - \widehat q_{t+1}) + 2\xi_t^\top\widehat u_{t+1}-\alpha_i}{\alpha_i-\beta_i} \nonumber \\
		& \leq \frac{ \widehat g_{t,i}^\top(\widehat u_{t+1} - \widehat q_{t+1}) + 2\xi_t^\top\widehat u_{t+1}}{\alpha_i-\beta_i} \label{eq2:hard_regret_unknown} \\
		& = \frac{\widehat g_{t,i}^\top(\widehat u_{t+1} - q^{ P, \widehat \pi_{t+1}}) + \widehat g_{t,i}^\top(q^{ P, \widehat \pi_{t+1}} - q^{ P_{t+1}, \widehat \pi_{t+1}})+2\xi_t^\top\widehat u_{t+1}}{\alpha_i-\beta_i} \nonumber\\
		& \leq \frac{\|\widehat g_{t,i}\|_\infty||\widehat u_{t+1} - q^{ P, \widehat \pi_{t+1}}||_1 + \|\widehat g_{t,i}\|_\infty \|q^{ P, \widehat \pi_{t+1}} - q^{ P_{t+1}, \widehat \pi_{t+1}}\|_1+2\xi_t^\top\widehat u_{t+1}}{\alpha_i-\beta_i} \nonumber\\
		& \leq \frac{||\widehat u_{t+1} - q^{ P, \widehat \pi_{t+1}}||_1 +  \|q^{ P, \widehat \pi_{t+1}} - q^{ P_{t+1}, \widehat \pi_{t+1}}\|_1+2\xi_t^\top\widehat u_{t+1}}{\alpha_i-\beta_i} \nonumber\\
		& \leq \frac{ L(1-\lambda_t)\|\widehat u_{t+1} - q^{ P, \widehat \pi_{t+1}}\|_1 + L(1-\lambda_t)\|q^{ P, \widehat \pi_{t+1}} - q^{ P_{t+1}, \widehat \pi_{t+1}}\|_1+2L(1-\lambda_t)\xi_t^\top\widehat u_{t+1}}{\min\left\{(\alpha_i-\beta_i),(\alpha_i-\beta_i)^2\right\}}  \label{eq3:hard_regret_unknown}
	\end{align}
	where Inequality~\eqref{eq1:hard_regret_unknown} holds since, for the hardest constraint, when $\lambda_t\neq 0$, $(\widehat g_{t,i}+\xi_t)^\top\widehat u_{t+1}>\alpha_i$, Inequality~\eqref{eq2:hard_regret_unknown} holds
	since, under the clean event, $(\widehat g_{t,i}-\xi_t)^\top\widehat q_{t+1}\leq\alpha_i$ and Inequality~\eqref{eq3:hard_regret_unknown} holds since $\lambda_t\leq \frac{L-\alpha_i}{L-\beta_i}$. Intuitively, Inequality~\eqref{eq3:hard_regret_unknown} shows that, to guarantee the safety property, Algorithm~\ref{alg: Strictly Safe Policy Search with Unknown Transitions} has to pay a factor proportional to the pessimism introduced on the transition and cost functions, plus the constraints satisfaction gap of the strictly feasible solution given as input to the algorithm.
	
	We need to generalize the result summing over $t$, taking into account that the hardest constraints may vary. Thus, we bound the summation as follows,
	\begin{align*}
		\sum_{t=1}^T \lambda_{t-1} & \leq \max_{i\in[m]}\left\{\frac{2L}{\min\left\{(\alpha_i-\beta_i),(\alpha_i-\beta_i)^2\right\}}\right\} \cdot \\ & \cdot\sum_{t=1}^T \left((1-\lambda_{t-1})\left(\|\widehat u_{t} - q^{ P, \widehat \pi_{t}}\|_1 + \|q^{ P, \widehat \pi_{t}} - q^{ P_{t}, \widehat \pi_{t}}\|_1+\xi_{t-1}^\top\widehat u_{t} \right)\right)\nonumber
	\end{align*}
	The first two terms of the equation are bounded applying Lemma~\ref{lem:transition_gamma}, which holds with probability at least $1-2\delta$, under the clean event, while, to bound $\sum_{t=1}^T(1-\lambda_{t-1})\xi_{t-1}^\top\widehat u_{t}$, we proceed as follows:
	\begin{equation*}
		\sum_{t=1}^T(1-\lambda_{t-1})\xi_{t-1}^\top\widehat u_{t}  =\sum_{t=1}^T(1-\lambda_{t-1})\xi_{t-1}^\top q^{P,\widehat\pi_t} + \sum_{t=1}^T(1-\lambda_{t-1})\xi_{t-1}^\top(\widehat u_{t} - q^{P,\widehat\pi_t}),
	\end{equation*}
	where the second term is bounded employing Hölder inequality and Lemma~\ref{lem:transition_gamma}. Next, we focus on the first term, proceeding as follows,
	\begin{align}
		\sum_{t=1}^T(1-&\lambda_{t-1})\xi_{t-1}^\top q^{P,\widehat\pi_t} \nonumber \\ &\leq \sum_{t=1}^T\xi_{t-1}^\top q_{t} \label{eq4_0:hard_regret_unknown}\\
		& \leq \sum_{t=1}^T\sum_{x,a}\xi_{t-1}(x,a) \mathbbm{1}_t(x,a) + L\sqrt{2T\ln\frac{1}{\delta}}\label{eq4:hard_regret_unknown}\\
		& =\sqrt{4\ln\left(\frac{T|X||A|m}{\delta}\right)}\sum_{t=1}^T \sum_{x,a}\sqrt{\frac{1}{\max\{1,N_{t-1}(x,a)\}}}\mathbbm{1}_t(x,a) + L\sqrt{2T\ln\frac{1}{\delta}}\nonumber\\
		& \leq 6\sqrt{\ln\left(\frac{T|X||A|m}{\delta}\right)} \sqrt{|X||A|\sum_{x,a}N_{T}(x,a)}  + L\sqrt{2T\ln\frac{1}{\delta}}\label{eq5:hard_regret_unknown}\\
		& \leq 6\sqrt{L|X||A|T\ln\left(\frac{T|X||A|m}{\delta}\right)} + L\sqrt{2T\ln\frac{1}{\delta}} \nonumber,
	\end{align}
	where Inequality~\eqref{eq4_0:hard_regret_unknown} follows from the definition of $\pi_t$, Inequality~\eqref{eq4:hard_regret_unknown} follows from Azuma-Hoeffding inequality and Inequality~\eqref{eq5:hard_regret_unknown} holds since $1+\sum_{t=1}^T\frac{1}{\sqrt t}\leq 2\sqrt{T}+1\leq3\sqrt{T}$ and Cauchy-Schwarz inequality.
	
	Thus, we obtain,
	\begin{equation}
		\circled{3} \leq \mathcal{O}\left(\max_{i\in[m]}\left\{\frac{1}{\min\left\{(\alpha_i-\beta_i),(\alpha_i-\beta_i)^2\right\}}\right\}
		L^3|X|\sqrt{|A|T\ln\left(\frac{T|X||A|m}{\delta}\right)}\right),\label{eq:sum_gamma_unknown}
	\end{equation}
	with probability at least $1-3\delta$, under the clean event.
	
	\paragraph{Bound on $\protect\circled{4}$}
	
	We first notice that $\circled{4}$ presents an additional challenge with respect to 
	the bounded violation case. Indeed, since $\widehat{\pi}_t$ is not the policy that drives the exploration, $\widehat\ell_t$ cannot be directly bounded employing results from the unconstrained adversarial MDPs literature. First, we rewrite the fourth term as follows,
	\begin{align*}
		\sum_{t=1}^T\left(\ell_t-\widehat\ell_t\right)^\top q^{ P_t,\widehat\pi_t} & \leq \sum_{t=1}^T\left(\mathbb{E}_t[\widehat\ell_t]-\widehat\ell_t\right)^\top q^{ P_t,\widehat\pi_t} + \sum_{t=1}^T\left(\ell_t-\mathbb{E}_t[\widehat\ell_t]\right)^\top q^{ P_t,\widehat\pi_t},
	\end{align*}
	where $\mathbb{E}_t[\cdot]$ is the expectation given the filtration up to time $t$. To bound the first term we employ the Azuma-Hoeffding inequality noticing that, the martingale difference sequence is bounded by:
	\begin{align*}
		\widehat \ell_t^{\ \top} q^{P_t, \widehat \pi_t} &\leq \max_{i\in[m]}\left\{\frac{L}{\alpha_i-\beta_i}\right\} \widehat \ell_t^{\ \top} (1-\lambda_t) q^{ P_t,\widehat \pi_t} \\
		& = \max_{i\in[m]}\left\{\frac{L}{\alpha_i-\beta_i}\right\} \widehat \ell_t^{\ \top}\left(q^{ P_t,\pi_t} -\lambda_t q^{P_t,\pi^\diamond}\right)\\
		& \leq \max_{i\in[m]}\left\{\frac{L}{\alpha_i-\beta_i}\right\} \widehat \ell_t^{\ \top} q^{ P_t,\pi_t}\\
		& \leq \max_{i\in[m]}\left\{\frac{L}{\alpha_i-\beta_i}\right\}L,
	\end{align*}
	where the first inequality holds since $\lambda_t\leq\lambda_0$. Thus, the first term is bounded by $\max_{i\in[m]}\left\{\frac{L}{\alpha_i-\beta_i}\right\}L\sqrt{2T\ln\frac{1}{\delta}}$.
	To bound the second term, we employ the definition of $\pi_t$ and the upper-bound to $\lambda_t$, proceeding as follows:
	\begin{align*}
		\sum_{t=1}^T&\left(\ell_t-\mathbb{E}_t[\widehat\ell_t]\right)^\top q^{ P_t,\widehat\pi_t} \\& = \sum_{t,x,a} q^{ P_t,\widehat\pi_t}(x,a)\ell_t(x,a)\left(1-\frac{\mathbb{E}_t\left[\mathbbm{1}_t(x,a)\right]}{u_t(x,a)+\gamma}\right)\\
		& = \sum_{t,x,a} q^{ P_t,\widehat\pi_t}(x,a)\ell_t(x,a)\left(1-\frac{q_t(x,a)}{u_t(x,a)+\gamma}\right)\\
		& \leq \max_{i\in[m]}\left\{\frac{L}{\alpha_i-\beta_i}\right\} \sum_{t,x,a}(1-\lambda_t)q^{ P_t,\widehat\pi_t}(x,a)\ell_t(x,a)\left(1-\frac{q_t(x,a)}{u_t(x,a)+\gamma}\right)\\
		& \leq \max_{i\in[m]}\left\{\frac{L}{\alpha_i-\beta_i}\right\} \sum_{t,x,a}q^{ P_t,\pi_t}(x,a)\ell_t(x,a)\left(1-\frac{q_t(x,a)}{u_t(x,a)+\gamma}\right) \\ 
		& = \max_{i\in[m]}\left\{\frac{L}{\alpha_i-\beta_i}\right\} \sum_{t,x,a}\frac{q^{ P_t,\pi_t}(x,a)}{u_t(x,a)+\gamma} \left(u_t(x,a)-q_t(x,a)+\gamma\right)\\
		& \leq \mathcal{O}\left(\max_{i\in[m]}\left\{\frac{L}{\alpha_i-\beta_i}\right\}L|X|\sqrt{|A|T\ln\left(\frac{T|X||A|}{\delta}\right)}\right)+ \max_{i\in[m]}\left\{\frac{L}{\alpha_i-\beta_i}\right\}\gamma |X||A|T,
	\end{align*}
	where the last steps holds by Lemma~\ref{lem:transition_jin}. Thus, combining the previous equations, we have, with probability at least $1-3\delta$, under the clean event:
	\begin{equation}
		\circled{4}\leq \mathcal{O}\left(\max_{i\in[m]}\left\{\frac{1}{\alpha_i-\beta_i}\right\}L^2|X|\sqrt{|A|T\ln\left(\frac{T|X||A|}{\delta}\right)}\right)
	\end{equation}
	
	\paragraph{Bound on $\protect\circled{5}$}
	The last term is bounded as in Theorem~\ref{thm:regret_soft_unknown}. Thus, setting $\gamma=\sqrt{\frac{L\ln(L|X||A|/\delta)}{T|X||A|}}$, we obtain, with probability at least $1-\delta$, under the clean event:
	\begin{equation}
		\circled{5}\leq \mathcal{O}\left(L\sqrt{|X||A|T\ln\left(\frac{T|X||A|}{\delta}\right)}\right).
	\end{equation}
	\paragraph{Final result}
	Finally, we combine the bounds on $\circled{1}$, $\circled{2}$, $\circled{3}$, $\circled{4}$ and $\circled{5}$. Applying a union bound, we obtain, with probability at least $1-11\delta$,
	\begin{align*}
		R_T \leq \mathcal{O} \left(\max_{i\in[m]}\left\{\frac{1}{\min\left\{(\alpha_i-\beta_i),(\alpha_i-\beta_i)^2\right\}}\right\}
		L^3|X|\sqrt{|A|T\ln\left(\frac{T|X||A|m}{\delta}\right)}\right),
	\end{align*}
	which concludes the proof.
\end{proof}

\section{Omitted proofs for constant violation}
\label{app:constant}
In this section, we show how it is possible to relax Condition~\ref{cond:feas_knowledge} and still achieving constant cumulative positive violation.

\subsection{Slater's parameter estimation}
We start by showing that the number of episodes necessary to Algorithm~\ref{alg: strictly} to estimate the Slater's parameter are upper bounded by a constant term. This is done by means of the following lemma.

\episodeboundstrictly*
\begin{proof}
	By the no-regret property of Algorithm $\mathcal{A}^P$, it holds:
	\begin{equation*}
		\sum_{t=1}^{\bar t}\sum_{i\in[m]}\phi_{t,i}\left(g_{t,i}^\top (q_t-q^\diamond)-\alpha_i\right)\leq C_{\mathcal{A}}^P\sqrt{\bar t \ln(\bar t)},
	\end{equation*}
	with probability at least $1-C_P^\delta\delta$.
	
Thus, applying the Azuma inequality, we get:
\begin{align*}
	\sum_{t=1}^{\bar t}\sum_{i\in[m]}\phi_{t,i}(g_{t,i}-\nicefrac{\alpha_i}{L})^\top q_t&\leq C_{\mathcal{A}}^P\sqrt{\bar t \ln(\bar t)} + L\sqrt{2\bar t\ln\frac{1}{\delta}} + \sum_{t=1}^{\bar t}\sum_{i\in[m]}\phi_{t,i}(\overline g_{i}-\nicefrac{\alpha_i}{L})^\top q^\diamond\\
	&=  C_{\mathcal{A}}^P\sqrt{\bar t \ln(\bar t)}  + L\sqrt{2\bar t\ln\frac{1}{\delta}} - \sum_{t=1}^{\bar t}\sum_{i\in[m]}\phi_{t,i}(\alpha_i-\beta_i)\\
	& \leq C_{\mathcal{A}}^P\sqrt{\bar t \ln(\bar t)}  + L\sqrt{2\bar t\ln\frac{1}{\delta}} - \bar t \min_{i\in[m]}(\alpha_i-\beta_i)\\
	& = C_{\mathcal{A}}^P\sqrt{\bar t \ln(\bar t)}  + L\sqrt{2\bar t\ln\frac{1}{\delta}} - \bar t \rho,
\end{align*}
with probability $1-(C^\delta_P+1)\delta$, by Union Bound.
Similarly, applying the Azuma inequality, it holds:
\begin{equation*}
	\sum_{t=1}^{\bar t_i}\sum_{i\in[m]}\phi_{t,i}\sum_{k=1}^{L-1}(g_{t,i}(x_k,a_k)-\nicefrac{\alpha_i}{L})\leq  C_{\mathcal{A}}^P\sqrt{\bar t \ln(\bar t)}  + 2L\sqrt{2\bar t\ln\frac{1}{\delta}} - \bar t \rho,
\end{equation*}
with probability at least $1-(C^\delta_P+2)\delta$.

By the no-regret property of Algorithm $\mathcal{A}^D$, it holds:
\begin{equation*}
	-\sum_{t=1}^{\bar t}\sum_{i\in[m]} \phi_{t,i}\left(\sum_{k=1}^{L-1}g_{t,i}(x_k,a_k)-\alpha_i\right) -   \sum_{t=1}^{\bar t}-\left( \sum_{k=1}^{L-1}g_{t,i^*}(x_k,a_k)-\alpha_{i^{*}}\right) \leq C_\mathcal{A}^D\sqrt{\bar t},
\end{equation*}
where $i^*=\arg\max_{i\in[m]}\sum_{t=1}^{\bar t}\sum_{k=1}^{L-1}(g_{t,i}(x_k,a_k)-\nicefrac{\alpha_i}{L})$, from which we obtain, with probability at least $1-(C^\delta_P+2)\delta$:
\begin{equation*}
	\max_{i\in[m]} \sum_{t=1}^{\bar t}\left( \sum_{k=1}^{L-1}g_{t,i}(x_k,a_k)-\alpha_i\right)\leq  C_{\mathcal{A}}^P\sqrt{\bar t \ln(\bar t)}  + 2L\sqrt{2\bar t\ln\frac{1}{\delta}} + C_\mathcal{A}^D\sqrt{\bar t} -  \bar t \rho.
\end{equation*}
By the stopping condition of Algorithm~\ref{alg: strictly}, it holds $-\max_{i\in[m]}\sum_{t=1}^{\bar t}\sum_{k=1}^{L-1}(g_{t,i}(x_k,a_k)-\nicefrac{\alpha_i}{L})\geq 2C_\mathcal{A}^P\sqrt{\bar t\ln( \bar t)} +  8L\sqrt{2 \bar t\ln\frac{1}{\delta}} + 2C_\mathcal{A}^D\sqrt{\bar  t}$, which implies:
\begin{align*}
	-\max_{i\in[m]}\sum_{t=1}^{\bar t-1}\sum_{k=1}^{L-1}(g_{t,i}(x_k,a_k)-\nicefrac{\alpha_i}{L})&\leq 2C_\mathcal{A}^P\sqrt{\bar t -1\ln( \bar t-1)} +  8L\sqrt{2 \bar t-1\ln\frac{1}{\delta}} + 2C_\mathcal{A}^D\sqrt{\bar  t-1}\\
	&\leq 2C_\mathcal{A}^P\sqrt{\bar t\ln( \bar t)} +  8L\sqrt{2 \bar t\ln\frac{1}{\delta}} + 2C_\mathcal{A}^D\sqrt{\bar  t},
\end{align*}
and
\begin{equation*}
	\max_{i\in[m]}\sum_{t=1}^{\bar t-1}\sum_{k=1}^{L-1}(g_{t,i}(x_k,a_k)-\nicefrac{\alpha_i}{L})\geq - 2C_\mathcal{A}^P\sqrt{\bar t\ln( \bar t)} -  8L\sqrt{2 \bar t\ln\frac{1}{\delta}} - 2C_\mathcal{A}^D\sqrt{\bar  t}.
\end{equation*}
 Thus, we get the following inequality:
\begin{equation*}
- 2C_\mathcal{A}^P\sqrt{\bar t\ln( \bar t)} -  8L\sqrt{2 \bar t\ln\frac{1}{\delta}} - 2C_\mathcal{A}^D\sqrt{\bar  t} - L \leq C_\mathcal{A}^P\sqrt{\bar t \ln(\bar t)} + 2L\sqrt{2\bar t\ln\frac{1}{\delta}} + C_\mathcal{A}^D\sqrt{\bar t} -  \bar t \rho.
\end{equation*}
Hence,
\begin{align*}
	 \bar t \rho & \leq 3C_\mathcal{A}^P\sqrt{\bar t \ln(\bar t)} + 10L\sqrt{2\bar t\ln\frac{1}{\delta}} + 3C_\mathcal{A}^D\sqrt{\bar t}+L\\
	& \leq 3C_\mathcal{A}^P\bar t^{3/4} + 10L\sqrt{2\ln\frac{1}{\delta}}\bar t^{3/4} + 3C_\mathcal{A}^D\bar t^{3/4} + L\bar t^{3/4},
\end{align*}
which implies:
\begin{align*}
	\bar{t}&\leq \frac{\left(3C_\mathcal{A}^P + 10L\sqrt{2\ln\frac{1}{\delta}} + 3C_\mathcal{A}^D+L\right)^4 }{ \rho^4}.
\end{align*}
This concludes the proof.
\end{proof}

Thus, we show that  the estimation of the Slater's parameter $\widehat\rho$ is upper bounded by the true one. A similar reasoning holds for the estimated strictly feasible solution. The result is provided in the following lemma.

\widehatrholeq*

\begin{proof}
	It holds:
	\begin{align*}
		\widehat{\rho} & = -\frac{1}{\bar t}\left(\max_{i\in[m]}\sum_{t=1}^{\bar t}\left(\sum_{k=1}^{L-1}g_{t,i}(x_k,a_k)-\nicefrac{\alpha_i}{L}\right)+ 2L\sqrt{2\bar t\ln\frac{1}{\delta}}\right)\\
		& = \min_{i\in[m]} \left(\alpha_i-\frac{1}{\bar{t}}\left(\sum_{t=1}^{\bar t}\sum_{k=1}^{L-1} g_{t,i}(x_k,a_k)+ 2L\sqrt{2\bar t\ln\frac{1}{\delta}}\right)\right)\\
		& \leq \min_{i\in[m]} \left(\alpha_i-\frac{1}{\bar{t}}\left(\sum_{t=1}^{\bar t}\sum_{k=1}^{L-1}\overline g_{i}(x_k,a_k)- L\sqrt{2\bar{t}\ln\frac{1}{\delta}}+ 2L\sqrt{2\bar t\ln\frac{1}{\delta}}\right)\right)\\
		& \leq \min_{i\in[m]} \left(\alpha_i-\frac{1}{\bar{t}}\left(\sum_{t=1}^{\bar t}\overline g_{i}^\top q_t- 2L\sqrt{2\bar{t}\ln\frac{1}{\delta}}+ 2L\sqrt{2\bar t\ln\frac{1}{\delta}}\right)\right) \\
		& =\min_{i\in[m]} \left(\alpha_i+\frac{1}{\bar{t}}\left(\sum_{t=1}^{\bar t}-\overline g_{i}^\top q_t + 2L\sqrt{2\bar{t}\ln\frac{1}{\delta}} - 2L\sqrt{2\bar t\ln\frac{1}{\delta}}\right)\right)\\
		& \leq \min_{i\in[m]} \left(\alpha_i-\beta_{i}\right)\\
		&=\rho,
		\end{align*}
	where the first steps hold with probability at least $1-2\delta$ by Azuma inequality and a Union Bound and the last inequality holds by definition of $q^\diamond$.	
	
	To prove the second result we first notice that, by definition of $q^\diamond$:
	\begin{align*}
		\min_{i\in[m]}(\alpha_i-\overline{g}_i^\top q^{P,\widehat \pi^\diamond})
		 \leq \min_{i\in[m]}(\alpha_i-\overline{g}_i^\top q^\diamond)
		 = \min_{i\in[m]}(\alpha_i-\beta_i)=\rho.
	\end{align*}
	Furthermore, by definition of $\widehat{\pi}^\diamond$, it holds:
	\begin{equation*}
		q^{P,\widehat \pi^\diamond} = \frac{1}{\bar t}\sum_{t=1}^{\bar t} q^{P,\pi_t}.
	\end{equation*}
	Hence,
	\begin{align*}
		\min_{i\in[m]}(\alpha_i-\overline{g}_i^\top q^{P,\widehat \pi^\diamond})&=\min_{i\in[m]}\left(\alpha_i-\frac{1}{\bar t}\left(\sum_{t=1}^{\bar t}\overline{g}_i^\top q^{P, \pi_t}\right)\right)\\
		& \geq \min_{i\in[m]}\left(\alpha_i-\frac{1}{\bar t}\left(\sum_{t=1}^{\bar t}\sum_{k=1}^{L-1}{g}_{t,i}(x_k,a_k)+ 2L\sqrt{2\bar t\ln\frac{1}{\delta}}\right)\right) \\
		& = \widehat{\rho},
	\end{align*}
	where the first inequality holds with probability at least $1-2\delta$ employing the Azuma inequality and a union bound.
	This concludes the proof.
\end{proof}

Finally, we show that $\widehat{\rho}$ is not to small, namely, it is lower bounded by $\rho/2$. The result is provided in the following lemma.

\widehatrhogeq*

\begin{proof}
	Similarly to Lemma~\ref{lem: episode_bounds_strictly}, we get, with probability at least $1-(C^\delta_P+2)\delta$
\begin{equation*}
	- \max_{i\in[m]} \sum_{t=1}^{\bar t}\left( \sum_{k=1}^{L-1}g_{t,i}(x_k,a_k)-\alpha_i\right)\geq  -C_\mathcal{A}^P\sqrt{\bar t\ln(\bar t)} - 2L\sqrt{2\bar t\ln\frac{1}{\delta}} - C_\mathcal{A}^D\sqrt{\bar t} +  \bar t \rho.
\end{equation*}
Thus, notice that, by the stopping condition of Algorithm~\ref{alg: strictly}, it holds:
\begin{align*}
	- &\max_{i\in[m]} \sum_{t=1}^{\bar t}\left( \sum_{k=1}^{L-1}g_{t,i}(x_k,a_k)-\alpha_i\right)-C_\mathcal{A}^P\sqrt{\bar t\ln(\bar t)} -  4 L\sqrt{2\bar t \ln \frac{1}{\delta}} - C_\mathcal{A}^D\sqrt{\bar t}  \\&=  	- \frac{1}{2}\max_{i\in[m]} \sum_{t=1}^{\bar t}\left( \sum_{k=1}^{L-1}g_{t,i}(x_k,a_k)-\alpha_i\right)
	- \frac{1}{2}\max_{i\in[m]} \sum_{t=1}^{\bar t}\left( \sum_{k=1}^{L-1}g_{t,i}(x_k,a_k)-\alpha_i\right)-C_\mathcal{A}^P\sqrt{\bar t\ln(\bar t)} -  4 L\sqrt{2\bar t \ln \frac{1}{\delta}} - C_\mathcal{A}^D\sqrt{\bar t} \\ &
	\geq 	- \frac{1}{2}\max_{i\in[m]} \sum_{t=1}^{\bar t}\left( \sum_{k=1}^{L-1}g_{t,i}(x_k,a_k)-\alpha_i\right)
	+C_\mathcal{A}^P\sqrt{\bar t\ln(\bar t)} -C_\mathcal{A}^P\sqrt{\bar t\ln(\bar t)} +  4L\sqrt{2\bar t\ln\frac{1}{\delta}} -4L\sqrt{2\bar t\ln\frac{1}{\delta}} + C_\mathcal{A}^D\sqrt{\bar t} - C_\mathcal{A}^D\sqrt{\bar t} \\
	&\geq  - \frac{1}{2}\max_{i\in[m]} \sum_{t=1}^{\bar t}\left( \sum_{k=1}^{L-1}g_{t,i}(x_k,a_k)-\alpha_i\right).
\end{align*}

	Hence, 
	\begin{align*}
		\widehat{\rho} & = -\frac{1}{\bar 	t}\left(\max_{i\in[m]}\sum_{t=1}^{\bar t}\left(\sum_{k=1}^{L-1}g_{t,i}(x_k,a_k)-\nicefrac{\alpha_i}{L}\right)+2L\sqrt{2\bar t\ln\frac{1}{\delta}}\right)\\
		& = -\frac{1}{\bar 	t}\left(\max_{i\in[m]}\sum_{t=1}^{\bar t}\left(\sum_{k=1}^{L-1}g_{t,i}(x_k,a_k)-\nicefrac{\alpha_i}{L}\right) \pm C_\mathcal{A}^P\sqrt{\bar t\ln(\bar t)} \pm 4 L\sqrt{2\bar t \ln \frac{1}{\delta}} \pm C_\mathcal{A}^D\sqrt{\bar t} +2 L\sqrt{2\bar t \ln \frac{1}{\delta}}\right)\\
		& \geq  \frac{1}{\bar 	t}\left(- \frac{1}{2}\max_{i\in[m]} \sum_{t=1}^{\bar t}\left( \sum_{k=1}^{L-1}g_{t,i}(x_k,a_k)-\alpha_i\right)
		+ C_\mathcal{A}^P\sqrt{\bar t\ln(\bar t)} + 4 L\sqrt{2\bar t \ln \frac{1}{\delta}} + C_\mathcal{A}^D\sqrt{\bar t} - 2L\sqrt{2\bar t \ln \frac{1}{\delta}}\right)\\\
		& \geq \frac{ \rho}{2}.
	\end{align*}
	This concludes the proof.
\end{proof}

\subsection{Violations}

In this section, we provide the theoretical guarantees attained by Algorithm~\ref{alg: strictly} in terms of cumulative positive violation. This is done by means of the following theorem.

\constviol*
\begin{proof}
	We split the violations between the two phases of Algorithm~\ref{alg: strictly} as:
		\begin{align*}
			V_T &\leq \max_{i\in[m]} \sum_{t=1}^{\bar t}\left[ \overline{g}_i^{\top} q_t-\alpha_i\right]^+ +\max_{i\in[m]} \sum_{t=\bar t+1}^{T}\left[ \overline{g}_i^{\top} q_t-\alpha_i\right]^+ \\
			&\leq L\bar t + \max_{i\in[m]} \sum_{t=\bar t}^{T}\left[ \overline{g}_i^{\top} q_t-\alpha_i\right]^+\\
			&\leq \mathcal{O}\left( \frac{L\left(C_\mathcal{A}^P + L\sqrt{\ln\frac{1}{\delta}} + C_\mathcal{A}^D+L\right)^4}{\rho^4}\right)+\max_{i\in[m]} \sum_{t=\bar t+1}^{T}\left[ \overline{g}_i^{\top} q_t-\alpha_i\right]^+,
		\end{align*}
		where the last step holds with probability at least $1-(C^\delta_P+2)\delta$ by Lemma~\ref{lem: episode_bounds_strictly}.
		
		In the following we show that, after $\bar t$ episodes, Algorithm~\ref{alg: strictly} is safe with high probability. Similarly to Theorem~\ref{thm:violation_hard} there are two possible scenarios defined by $\lambda_t$, namely, $\lambda_t=0$ and $\lambda_t\in(0,1)$. When $\lambda_t=0$, applying the same reasoning of Theorem~\ref{thm:violation_hard} gives the result.
		
		In the following analysis we consider a generic constraints $i\in[m]$
		. Thus, we notice that the constraints cost attained by the \emph{non-Markovian} policy $\pi_t$, is equal to $\lambda_{t-1}\overline{g}_i^\top q^{P,\widehat\pi^\diamond}+(1-\lambda_{t-1})\overline{g}_i^\top q^{P, \widehat \pi_t}$.
		
		Then, we consider both the cases when $L<\left(\widehat{g}_{t-1,i}+\xi_{t-1}\right)^\top\widehat u_{t}$ (first case) and $L>\left(\widehat{g}_{t-1,i}+\xi_{t-1}\right)^\top\widehat u_{t}$ (second case). If the two quantities are equivalent, the proof still holds breaking the ties arbitrarily.
		
		\emph{First case.} It holds that:
		\begin{align}
			\lambda_{t-1}\overline{g}_i^\top q^{P,\widehat\pi^\diamond}+ (1-\lambda_{t-1}) \overline{g}_i^\top  q^{\widehat\pi_t, P} 
			& = \frac{L-\alpha_i}{L-\alpha_i+\widehat\rho}( \overline{g}_i^\top q^{P,\widehat\pi^\diamond}- L) + L \nonumber \\
			&   \leq \frac{\alpha_i-L}{\alpha_i-\widehat{\rho}- L}( \alpha_i-\widehat{\rho}- L) + L  \label{c-safe:eq2_unknown} \\
			&   = \alpha_i, \nonumber 
		\end{align}
		where Inequality~\eqref{c-safe:eq2_unknown} holds with probability at least $1-2\delta$ thanks to Lemma~\ref{lem:widehat_rho_leq}.
		
		\emph{Second case.} Similarly to the first case, it holds that, under the clean event:
		\begin{align}
			\lambda_{t-1}\overline{g}_i^\top q^{P,\widehat\pi^\diamond} + &(1-\lambda_{t-1}) \overline{g}_i^\top  q^{P, \widehat \pi_t}   \nonumber\\ &\leq \lambda_{t-1} \overline{g}_i^\top q^{P,\widehat\pi^\diamond} + (1-\lambda_{t-1}) \left(\widehat{g}_{t-1,i}+\xi_{t-1}\right)^\top q^{P, \widehat \pi_t} \label{c-safe:eq3_unknown} \\
			& \leq \lambda_{t-1} \overline{g}_i^\top q^{P,\widehat\pi^\diamond} + (1-\lambda_{t-1}) \left(\widehat{g}_{t-1,i}+\xi_{t-1}\right)^\top\widehat u_{t} \label{c-safe:eq4_unknown} \\
			&  = \lambda_{t-1} \overline{g}_i^\top q^{P,\widehat\pi^\diamond}-  \lambda_{t-1} \left(\widehat{g}_{t-1,i}+\xi_{t-1}\right)^\top\widehat u_{t} +  \left(\widehat{g}_{t-1,i}+\xi_{t-1}\right)^\top\widehat u_{t} \nonumber \\
			&  = \lambda_{t-1}( \overline{g}_i^\top q^{P,\widehat\pi^\diamond}- \left(\widehat{g}_{t-1,i}+\xi_{t-1}\right)^\top\widehat u_{t}) +  \left(\widehat{g}_{t-1,i}+\xi_{t-1}\right)^\top\widehat u_{t} \nonumber \\
			&\leq \frac{\left(\widehat{g}_{t-1,i}+\xi_{t-1}\right)^\top\widehat u_{t}-\alpha_i}{\left(\widehat{g}_{t-1,i}+\xi_{t-1}\right)^\top\widehat u_{t}-\alpha_i+\widehat\rho}( \alpha_i -\widehat{\rho}- \left(\widehat{g}_{t-1,i}+\xi_{t-1}\right)^\top\widehat u_{t}) +  \left(\widehat{g}_{t-1,i}+\xi_{t-1}\right)^\top\widehat u_{t} \nonumber \\
			& =  \frac{\alpha_i-\left(\widehat{g}_{t-1,i}+\xi_{t-1}\right)^\top\widehat u_{t}}{\alpha_i-\widehat{\rho}-\left(\widehat{g}_{t-1,i}+\xi_{t-1}\right)^\top\widehat u_{t}}( \alpha_i-\widehat{\rho} - \left(\widehat{g}_{t-1,i}+\xi_{t-1}\right)^\top\widehat u_{t}) +  \left(\widehat{g}_{t-1,i}+\xi_{t-1}\right)^\top\widehat u_{t} \nonumber\\
			&  =  \alpha_i-\left(\widehat{g}_{t-1,i}+\xi_{t-1}\right)^\top\widehat u_{t} +  \left(\widehat{g}_{t-1,i}+\xi_{t-1}\right)^\top\widehat u_{t} \nonumber\\
			& = \alpha_i, \nonumber
		\end{align}
		where Inequality~\eqref{c-safe:eq3_unknown} holds by the definition of the event and Inequality~\eqref{c-safe:eq4_unknown} holds by the definition of $\widehat u_t$.
		
		To conclude the proof, we underline that $\lambda_t$ is chosen taking the maximum over the constraints, which implies that the more conservative $\lambda_t$ (the one which takes the combination nearer to the strictly feasible solution) is chosen. Thus, all the constraints are satisfied and a final union bound concludes the proof.
\end{proof}

\subsection{Regret}

In this section, we provide the theoretical guarantees attained by Algorithm~\ref{alg: strictly} in terms of cumulative regret. This is done by means of the following theorem.

\constreg*
\begin{proof}
	We split the regret between the two phases of Algorithm~\ref{alg: strictly} as:
	\begin{align*}
		R_T &\leq \sum_{t=1}^{\bar t} \ell_t^\top (q^{P,\pi_t}-q^*) +\sum_{t=\bar t+1}^T \ell_t^\top (q^{P,\pi_t}-q^*)\\
		&\leq L\bar t + \sum_{t=\bar t +1}^T \ell_t^\top (q^{P,\pi_t}-q^*)\\
		&\leq \mathcal{O}\left( \frac{L\left(C_\mathcal{A}^P + L\sqrt{\ln\frac{1}{\delta}} + C_\mathcal{A}^D+L\right)^4}{\rho^4}\right)+\sum_{t=\bar t + 1}^T \ell_t^\top (q^{P,\pi_t}-q^*),
	\end{align*}
	where the last step holds with probability at least $1-(C^\delta_P+2)\delta$ by Lemma~\ref{lem: episode_bounds_strictly}.
	
	To bound the second terms we follow the steps of Theorem~\ref{thm:reg_hard} after noticing that, for all $t\in[T]$:
	\begin{align*}
		\lambda_t\leq \max_{i\in[m]}\left\{\frac{L-\alpha_i}{L-\alpha_i+\widehat\rho}\right\}<1,
	\end{align*}
	and that:
		\begin{align}
		\lambda_t & = \frac{\min\left\{(\widehat g_{t,i}+\xi_t)^\top\widehat u_{t+1},L\right\}-\alpha_i}{\min\left\{(\widehat g_{t,i}+\xi_t)^\top\widehat u_{t+1},L\right\}-\alpha_i+\widehat{\rho}} \nonumber\\
		& \leq \frac{(\widehat g_{t,i}+\xi_t)^\top\widehat u_{t+1}-\alpha_i}{(\widehat g_{t,i}+\xi_t)^\top\widehat u_{t+1}-\alpha_i+\widehat{\rho}}  \nonumber\\
		& \leq \frac{(\widehat g_{t,i}+\xi_t)^\top\widehat u_{t+1}-\alpha_i}{\widehat\rho} \label{eq1:c_regret_unknown}\\
		&  = \frac{(\widehat g_{t,i}-\xi_t)^\top\widehat u_{t+1} + 2\xi_t^\top\widehat u_{t+1}-\alpha_i}{\widehat\rho} \nonumber \\
		&  = \frac{(\widehat g_{t,i}-\xi_t)^\top \widehat q_{t+1} + (\widehat g_{t,i}-\xi_t)^\top(\widehat u_{t+1} - \widehat q_{t+1}) + 2\xi_t^\top\widehat u_{t+1}-\alpha_i}{\widehat\rho} \nonumber \\
		&  \leq \frac{(\widehat g_{t,i}-\xi_t)^\top \widehat q_{t+1} + \widehat g_{t,i}^\top(\widehat u_{t+1} - \widehat q_{t+1}) + 2\xi_t^\top\widehat u_{t+1}-\alpha_i}{\widehat\rho} \nonumber \\
		& \leq \frac{ \widehat g_{t,i}^\top(\widehat u_{t+1} - \widehat q_{t+1}) + 2\xi_t^\top\widehat u_{t+1}}{\widehat\rho} \label{eq2:c_regret_unknown} \\
		& = \frac{\widehat g_{t,i}^\top(\widehat u_{t+1} - q^{ P, \widehat \pi_{t+1}}) + \widehat g_{t,i}^\top(q^{ P, \widehat \pi_{t+1}} - q^{ P_{t+1}, \widehat \pi_{t+1}})+2\xi_t^\top\widehat u_{t+1}}{\widehat\rho} \nonumber\\
		& \leq \frac{\|\widehat g_{t,i}\|_\infty||\widehat u_{t+1} - q^{ P, \widehat \pi_{t+1}}||_1 + \|\widehat g_{t,i}\|_\infty \|q^{ P, \widehat \pi_{t+1}} - q^{ P_{t+1}, \widehat \pi_{t+1}}\|_1+2\xi_t^\top\widehat u_{t+1}}{\widehat\rho} \nonumber\\
		& \leq \frac{||\widehat u_{t+1} - q^{ P, \widehat \pi_{t+1}}||_1 +  \|q^{ P, \widehat \pi_{t+1}} - q^{ P_{t+1}, \widehat \pi_{t+1}}\|_1+2\xi_t^\top\widehat u_{t+1}}{\widehat\rho} \nonumber\\
		& \leq \frac{ L(1-\lambda_t)\|\widehat u_{t+1} - q^{ P, \widehat \pi_{t+1}}\|_1 + L(1-\lambda_t)\|q^{ P, \widehat \pi_{t+1}} - q^{ P_{t+1}, \widehat \pi_{t+1}}\|_1+2L(1-\lambda_t)\xi_t^\top\widehat u_{t+1}}{\min\left\{\widehat \rho,\widehat\rho^{\ 2}\right\}}  \label{eq3:c_regret_unknown}\\
		& \leq \frac{ 4L(1-\lambda_t)\|\widehat u_{t+1} - q^{ P, \widehat \pi_{t+1}}\|_1 + 4L(1-\lambda_t)\|q^{ P, \widehat \pi_{t+1}} - q^{ P_{t+1}, \widehat \pi_{t+1}}\|_1+8L(1-\lambda_t)\xi_t^\top\widehat u_{t+1}}{\min\left\{\rho, \rho^2\right\}}  \label{eq4:c_regret_unknown}
	\end{align}
	where Inequality~\eqref{eq1:c_regret_unknown} holds since, for the hardest constraint, when $\lambda_t\neq 0$, $(\widehat g_{t,i}+\xi_t)^\top\widehat u_{t+1}>\alpha_i$, Inequality~\eqref{eq2:c_regret_unknown} holds
	since, under the clean event, $(\widehat g_{t,i}-\xi_t)^\top\widehat q_{t+1}\leq\alpha_i$, Inequality~\eqref{eq3:c_regret_unknown} holds since $\lambda_t\leq \frac{L-\alpha_i}{L-\alpha_i+\widehat{\rho}}$ and Inequality~\eqref{eq4:c_regret_unknown} holds with probability at least $1-(C_P^\delta+2)\delta$ by Lemma~\ref{lem:widehat_rho_geq}. 
	A final union bound concludes the proof.
\end{proof}

\section{Omitted proof of the lower bound}
In this section, we provide the lower bound which holds both in the second setting, that is, when the objective is to guarantee the safety property for each episode and when the objective is to attain constant violation.
\label{app:lower}
\lowerbound*
\begin{proof}
	We consider two instances defined as follows. Both of them are characterized by a CMDP with one state (which is omitted for simplicity), two actions $a_1,a_2$, one constraint and $\alpha=\nicefrac{1}{2}$. For the sake of simplicity we consider CMDP with rewards in place of losses. Notice that this is without loss of generality since any losses can be converted to an associated reward. We assume that the rewards are deterministic while the constraints are Bernoulli distributions with means defined in the following. Specifically, instance $i^1$ and instance $i^2$ are defined as:
	\[
	i^{1} \coloneqq \begin{cases}
		\overline r(a_1) = \frac{1}{2}, & \overline g(a_1)  = \frac{1}{2}+\epsilon\\
		\overline r(a_2) = 0, & \overline g(a_2)  = \frac{1}{2}-\rho\\
	\end{cases},
	\]
	\[
	i^{2} \coloneqq \begin{cases}
		\overline r(a_1) = \frac{1}{2}, & \overline g(a_1)  = \frac{1}{2}\\
		\overline r(a_2) = 0, & \overline g(a_2)  = \frac{1}{2}-\rho\\
	\end{cases},
	\]
	where $\epsilon$ is a parameter to be defined later. Thus, since the algorithm must suffer $o(\sqrt{T})$ violation, for any constant $c>0$, it holds:
	\begin{equation*}
		\mathbb{P}^1\left\{q_t(a_2)\geq \frac{\epsilon}{\epsilon+\rho}-c\frac{1}{\sqrt T}, \quad \forall t\in[T]\right\} \geq 1-n\cdot\delta,
	\end{equation*}
	where $q(a_2)$ is the occupancy measure associated to action $a_2$ and $\mathbb{P}^1$ is the probability measure of instance $i_1$ which encompasses the randomness of both environment and algorithm.
	Thus we can rewrite the inequality above as:
	\begin{equation*}
		\mathbb{P}^1\left\{\sum_{t=1}^T q_t(a_2)\geq T\frac{\epsilon}{\epsilon+\rho}-c\sqrt{T}\right\} \geq 1-n\cdot\delta.
	\end{equation*}
	By means of the Pinsker's inequality we can relate the probability measures $\mathbb{P}^1$ and $\mathbb{P}^2$ as follows:
	\begin{equation*}
		\mathbb{P}^2\left\{\sum_{t=1}^T q_t(a_2)\geq T\frac{\epsilon}{\epsilon+\rho}-c\sqrt{T}\right\} \geq 	\mathbb{P}^1\left\{\sum_{t=1}^T q_t(a_2)\geq T\frac{\epsilon}{\epsilon+\rho}-c\sqrt{T}\right\} - \sqrt{\frac{1}{2}KL(i^1,i^2)},
	\end{equation*}
	where $KL(i^1,i^2)$ is the the KL-divergence between the probability measures of instance $i_1$ and $i_2$. 
	
	Noticing that by standard KL-decomposition argument, $KL(i^1,i^2)\leq \epsilon^2 T$, we have:
	\begin{equation*}
		\mathbb{P}^2\left\{\sum_{t=1}^T q_t(a_2)\geq T\frac{\epsilon}{\epsilon+\rho}-c\sqrt{T}\right\} \geq 	1-n\cdot \delta -\epsilon\sqrt{\frac{T}{2}}.
	\end{equation*}
	We then notice that, since the rewards are deterministic, the regret of the second instance $R_T^2$ is bounded as:
	\begin{align*}
		R_T^2&=\frac{1}{2}\sum_{t=1}^Tq_t(a_2)\\
		&\geq\frac{1}{2}T\frac{\epsilon}{\epsilon+\rho}-\frac{c}{2}\sqrt{T}\\
		& \geq \frac{1}{4\rho}T\epsilon-c\sqrt{T}\\
		&=\frac{1}{16\rho}\sqrt{2T}-c\sqrt{T}\\
		& \geq \frac{1}{32\rho}\sqrt{2T},
	\end{align*}
	with probability $\frac{3}{4}-n\cdot\delta$, taking $\epsilon=\frac{1}{4}\sqrt{\frac{2}{T}}$, $\rho\geq\epsilon$ and $c\leq \frac{\sqrt{2}}{32}$. This concludes the proof.
\end{proof}

\section{Auxiliary lemmas from existing works}
\label{app:auxiliary}

\subsection{Auxiliary lemmas for the transitions estimation}

First, we provide the formal result on the transitions confidence set.

\begin{lemma}[\citet{JinLearningAdversarial2019}]
	\label{lem:confidenceset}
	Given a confidence $\delta \in (0,1)$, with probability at least $1-4\delta$, it holds that the transition function $P$ belongs to $\mathcal{P}_t$ for all $t \in [T]$.
	%
\end{lemma}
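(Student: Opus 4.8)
The plan is to obtain the statement from a standard empirical Bernstein concentration bound applied to each estimated transition probability, followed by a union bound; this is precisely the argument of \citet{JinLearningAdversarial2019}, which I now outline. Fix a triple $(x,a,x') \in X \times A \times X$. The key observation is that, by the Markov property and stationarity of $P$, conditioning on the (random) set of episodes at which $(x,a)$ is visited makes the sequence $Y_1, Y_2, \dots$ of successor states observed at those visits an i.i.d.\ sequence with $\mathbb{P}[Y_i = x'] = \overline{P}(x'\mid x,a)$, independently of the adversarial losses and of the learner's internal randomness. Hence, for any fixed integer $j \ge 1$, on the event $\{N_t(x,a) = j\}$ the estimate $\widehat{P}_t(x'\mid x,a)$ is the empirical mean of $j$ i.i.d.\ Bernoulli$(\overline{P}(x'\mid x,a))$ variables, and $N_t(x,a)$ is a stopping time for the natural filtration.

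For such a fixed $j$ I would invoke the empirical Bernstein inequality (Maurer--Pontil): with probability at least $1 - \delta'$,
\[
\left| \widehat{P}_t(x'\mid x,a) - \overline{P}(x'\mid x,a) \right| \le \sqrt{\frac{2\, \widehat{V}_j \ln(2/\delta')}{j-1}} + \frac{7 \ln(2/\delta')}{3(j-1)},
\]
where $\widehat{V}_j$ is the empirical variance of the $j$ samples. Since the samples are Bernoulli, $\widehat{V}_j \le \tfrac{j}{j-1}\,\widehat{P}_t(x'\mid x,a)$, so after absorbing universal constants the right-hand side is bounded by the quantity $\epsilon_t(x,a,x')$ appearing in the definition of $\mathcal{P}_t^{x,a,x'}$. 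Taking a union bound over $j \in [T]$, over the triples $(x,a,x')$, and over the constantly many bad events entering the two-sided version and the empirical-to-true-variance conversion of the Bernstein argument produces the multiplicative constant $4$ in the failure probability and replaces $\delta$ by $\delta/(T|X||A|)$ inside the logarithm, matching the stated $\epsilon_t$.

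The quantifier ``for all $t \in [T]$'' then requires no additional work: at any episode $t$ the count $N_t(x,a)$ equals some value $j \le t \le T$, so the bound derived for that $j$ applies verbatim; intersecting over all $(x,a,x')$ gives $P \in \mathcal{P}_t = \bigcap_{(x,a,x')} \mathcal{P}_t^{x,a,x'}$ simultaneously for all $t$, with probability at least $1 - 4\delta$. The only genuinely delicate point is the adaptivity of the data --- the episodes at which $(x,a)$ is visited, and how many there are, are determined by the learner's history-dependent policies --- but this is exactly what the conditioning-on-the-visit-pattern step neutralizes, reducing everything to the i.i.d.\ case; all remaining details are routine and identical to those in \citet{JinLearningAdversarial2019}.
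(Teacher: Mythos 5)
The paper does not prove Lemma~\ref{lem:confidenceset} itself; it imports it verbatim from \citet{JinLearningAdversarial2019}, and your sketch is a correct reconstruction of exactly that argument: empirical Bernstein (Maurer--Pontil) applied per triple $(x,a,x')$ at a fixed visit count $j$, the Bernoulli bound $\widehat V_j \le \tfrac{j}{j-1}\widehat P_t(x'\mid x,a)$ to recover the stated form of $\epsilon_t$, and a union bound over $j$, the triples, and the two-sided events to get the $1-4\delta$ guarantee uniformly in $t$. The only cosmetic quibbles are that you write $\overline P$ for the true kernel where the paper reserves $P$ for it and $\overline P$ for a generic member of $\mathcal P_t$, and that ``$N_t(x,a)$ is a stopping time'' is loose phrasing for what your union bound over $j$ already handles correctly; neither affects the validity of the argument.
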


Similarly to~\cite{JinLearningAdversarial2019}, the estimated occupancy measure space $\Delta(\mathcal{P}_t)$ is characterized as follows:
\begin{equation*}
	\Delta(\mathcal{P}_t):=\begin{cases}
		\forall k, &  \underset{x \in X_k, a \in A, x^{\prime} \in X_{k+1}}{\sum} q\left(x, a, x^{\prime}\right)=1\\
		\forall k, \forall x, & \underset{a \in A, x^{\prime} \in X_{k+1}}{\sum} q\left(x, a, x^{\prime}\right)=\underset{x^{\prime} \in X_{k-1}, a \in A}{\sum} q\left(x^{\prime}, a, x\right) \\
		\forall k, \forall\left(x, a, x^{\prime}\right), &  q\left(x, a, x^{\prime}\right) \leq\left[\widehat{P}_t\left(x^{\prime} \mid x, a\right)+\epsilon_t\left(x^{\prime} \mid x, a\right)\right] \underset{y \in X_{k+1}}{\sum} q(x, a, y)\\
		&  q\left(x, a, x^{\prime}\right) \geq\left[\widehat{P}_t\left(x^{\prime} \mid x, a\right)-\epsilon_t\left(x^{\prime} \mid x, a\right)\right] \underset{y \in X_{k+1}}{\sum} q(x, a, y)\\
		& q\left(x, a, x^{\prime}\right) \geq 0 \\
	\end{cases}.
\end{equation*}

Given the estimation of the occupancy measure space, it is possible to derive the following lemma.

\begin{lemma}\cite{JinLearningAdversarial2019}
	\label{lem:transition_jin}
	With probability at least $1-6\delta$, for any collection of transition functions $\{P_t^x\}_{x\in X}$ such that $P_t^x \in \mathcal{P}_{t}$, we have, for all $x$,
	$$
	\sum_{t=1}^T \sum_{x \in X, a \in A}\left|q^{P_t^x,\pi_t}(x, a)- q_t(x, a)\right| \leq \mathcal{O}\left(L|X| \sqrt{|A| T \ln \left(\frac{T|X||A|}{\delta}\right)}\right).
	$$
\end{lemma}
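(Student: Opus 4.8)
The plan is to follow the occupancy-difference analysis of~\cite{JinLearningAdversarial2019}; indeed, the statement is exactly the $\lambda_t \equiv 0$ specialization of Lemma~\ref{lem:transition_gamma} proved above, so the same machinery applies with every factor $(1-\lambda_{t-1})$ set to one. First I would condition on the event of Lemma~\ref{lem:confidenceset}, under which $P \in \mathcal{P}_t$ for all $t$; combined with the hypothesis $P_t^x \in \mathcal{P}_t$, this guarantees that both transition functions lie in the confidence set, so that each per-transition discrepancy $|P_t^x(x'|x,a) - P(x'|x,a)|$ is controlled by $2\epsilon_t(x,a,x')$ via the triangle inequality through the empirical $\widehat P_t$. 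The probability budget $1-6\delta$ absorbs this event together with the Azuma-type concentration invoked below.

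The key step is a layer-wise telescoping identity: since $P_t^x$ and $P$ induce the same policy $\pi_t$, introducing hybrid kernels that use $P_t^x$ on the first $m$ layers and $P$ afterwards and telescoping over $m$ yields
\[
q^{P_t^x,\pi_t}(x,a) - q_t(x,a) = \sum_{m < k(x)} \sum_{w_m} \big(P_t^x - P\big)(x_{m+1}|x_m,a_m)\, q_t(x_m,a_m)\, q^{P_t^x,\pi_t}(x,a \mid x_{m+1}),
\]
where $w_m = (x_m,a_m,x_{m+1})$ and $q^{P_t^x,\pi_t}(x,a\mid x_{m+1})$ is the occupancy conditioned on reaching $x_{m+1}$. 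Taking absolute values, summing over $(x,a)$, and replacing the transition gap by $2\epsilon_t$ splits the bound into a \emph{first-order} term (a single factor $\epsilon_t$) and a \emph{second-order} term (a product of two confidence widths arising because the residual factor $q^{P_t^x,\pi_t}(\cdot\mid\cdot)$ must itself be re-expanded around $q^{P,\pi_t}$).

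For the first-order term, $\epsilon_t(x,a,x')$ is dominated by $\sqrt{\widehat P_t(x'|x,a)\ln(T|X||A|/\delta)/\max\{1,N_t(x,a)\}}$; summing the transition probabilities over $x'$ and using $\widehat P_t \approx P$ leaves $\sum_t \sum_{x,a} q_t(x,a)\sqrt{\ln(\cdot)/\max\{1,N_t(x,a)\}}$. I would then pass from the expected occupancy $q_t(x,a)$ to the realized visit indicator $\mathbbm{1}_t\{x,a\}$ via Azuma--Hoeffding (paying an $\mathcal{O}(L\sqrt{T\ln(1/\delta)})$ lower-order term, exactly as in Inequality~\eqref{cum_vio:eq2_unknown}), use $\sum_{t} \mathbbm{1}_t\{x,a\}/\sqrt{\max\{1,N_{t-1}(x,a)\}} \le 3\sqrt{N_T(x,a)}$, and finish with Cauchy--Schwarz over $(x,a)$ together with $\sum_{x,a} N_T(x,a) \le LT$, producing the claimed $\mathcal{O}(L|X|\sqrt{|A|T\ln(T|X||A|/\delta)})$ rate.

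The main obstacle is the second-order term, namely the double-layer sum $\sum_{m<h}$ in which two confidence widths multiply. Here I would follow the same bookkeeping as in the proof of Lemma~\ref{lem:transition_gamma}: bound the product of square-root widths via $\sqrt{ab}\le \tfrac12(a+b)$ and Cauchy--Schwarz across the two layers, and show that each resulting factor $\sum_t q_t(x,a)/\max\{1,N_t(x,a)\}$ is only logarithmic in $T$. Consequently all second-order contributions are lower order and are absorbed into the stated bound, which completes the argument.
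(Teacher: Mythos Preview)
The paper does not give its own proof of this statement: it is listed in Appendix~\ref{app:auxiliary} as an auxiliary result imported verbatim from~\cite{JinLearningAdversarial2019}, and elsewhere (e.g., in the proof of Lemma~\ref{lem:transition_gamma}) the paper simply invokes ``following Lemma~\ref{lem:transition_jin} from~\cite{JinLearningAdversarial2019}'' at each of the steps you outline. Your sketch faithfully reconstructs that argument---the layer-wise telescoping, the first-order Azuma-plus-Cauchy--Schwarz bound, and the second-order cross-term shown to be logarithmic---and you correctly observe that it is the $\lambda_t\equiv 0$ specialization of Lemma~\ref{lem:transition_gamma}; there is nothing further to compare.
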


We underline that the constrained space defined by Program~\eqref{lp:proj_opt_unknown} is a subset of $\Delta(\mathcal{P}_t)$. This implies that, in Algorithm~\ref{alg: Cumulatively Safe Policy Search with Unknown Transitions}, it holds $\widehat{q}_t\in\Delta(\mathcal{P}_t)$ and Lemma~\ref{lem:transition_jin} is valid.

\subsection{Auxiliary lemmas for the optimistic loss estimator}

We will make use of the optimistic biased estimator with implicit exploration factor (see, \cite{neu2015explore}). Precisely, we define the loss estimator as follows, for all $t\in[T]$:
$$ \widehat \ell_t(x,a):=\frac{\ell_t(x,a)}{u_t(x,a)+\gamma}\mathbbm{1}_t\{x,a\}, \quad \forall (x,a)\in X\times A,
$$ 

where $u_t(x,a):=\max_{\overline P\in\mathcal{P}_{t}}q^{\overline P,\pi_t}(x,a)$. Thus, the following lemmas hold.

\begin{lemma} \cite{JinLearningAdversarial2019} \label{lem:alpha_jin} For any sequence of functions $\alpha_1, \ldots, \alpha_T$ such that $\alpha_t \in[0,2 \gamma]^{X \times A}$ is $\mathcal{F}_t$-measurable for all $t$, we have with probability at least $1-\delta$,
	$$
	\sum_{t=1}^T \sum_{x, a} \alpha_t(x, a)\left(\widehat{\ell}_t(x, a)-\frac{q_t(x, a)}{u_t(x, a)} \ell_t(x, a)\right) \leq L \ln \frac{L}{\delta}.
	$$
	
\end{lemma}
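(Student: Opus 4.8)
The plan is to recognize this as the layered-MDP instantiation of Neu's implicit-exploration (IX) estimator lemma from~\cite{neu2015explore}, and prove it by an exponential-supermartingale argument, one per layer. The first thing I would observe is that the comparison term is exactly a conditional mean: since $\mathbb{E}_t[\mathbbm{1}_t\{x,a\}]=q_t(x,a)$ under $\pi_t$ and the true transition $P$, the quantity $\tfrac{q_t(x,a)}{u_t(x,a)}\ell_t(x,a)$ is precisely $\mathbb{E}_t$ of the \emph{non-IX} importance weight $\tfrac{\ell_t(x,a)\mathbbm{1}_t\{x,a\}}{u_t(x,a)}$, which sits in the role that the unbiased estimator plays in Neu's argument (with the denominator $u_t$ replacing the selection probability). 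The loss estimator $\widehat\ell_t$ actually used has the extra $+\gamma$ in the denominator, so it underestimates this mean; the IX trick is exactly designed to convert that controlled bias into a high-probability upper tail bound. I would reduce the whole statement to $L$ separate one-dimensional bounds by fixing a layer $k\in[0\dots L-1]$ and exploiting the loop-free structure: in each episode exactly one state-action pair in $X_k$ is visited, so the layer behaves like a single bandit pull.

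The technical heart, and the main obstacle, is the per-layer MGF bound. The key algebraic inequality I would establish is the Neu-type bound $2\gamma\,\widehat\ell_t(x,a)\le \log\!\big(1+2\gamma\,\tfrac{\ell_t(x,a)\mathbbm{1}_t\{x,a\}}{u_t(x,a)}\big)$, which follows from $\log(1+z)\ge \tfrac{2z}{2+z}$ together with $\ell_t(x,a)\le 1$ so that $u_t+\gamma\ell_t\le u_t+\gamma$. I would then distribute the factor $\tfrac{\alpha_t(x,a)}{2\gamma}\in[0,1]$ using concavity of $z\mapsto\log(1+z)$ (Jensen with the weights $c$ and $1-c$), which yields $c\log(1+z)\le\log(1+cz)$ and hence the clean per-pair inequality $\alpha_t(x,a)\widehat\ell_t(x,a)\le \log\!\big(1+\alpha_t(x,a)\tfrac{\ell_t(x,a)\mathbbm{1}_t\{x,a\}}{u_t(x,a)}\big)$. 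Summing this over all $(x,a)$ with $x\in X_k$, the single-visit-per-layer property collapses the sum of logarithms into a single logarithm, $\sum_{x\in X_k,a}\alpha_t(x,a)\widehat\ell_t(x,a)\le \log\!\big(1+\sum_{x\in X_k,a}\alpha_t(x,a)\tfrac{\ell_t(x,a)\mathbbm{1}_t\{x,a\}}{u_t(x,a)}\big)$, because at most one indicator is nonzero.

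To finish, I would exponentiate and take the conditional expectation $\mathbb{E}_t$; using $\mathbb{E}_t[\mathbbm{1}_t\{x,a\}]=q_t(x,a)$ and $1+z\le e^z$ gives the supermartingale step $\mathbb{E}_t\big[\exp(\sum_{x\in X_k,a}\alpha_t(x,a)\widehat\ell_t(x,a))\big]\le \exp\big(\sum_{x\in X_k,a}\alpha_t(x,a)\tfrac{q_t(x,a)}{u_t(x,a)}\ell_t(x,a)\big)$. Consequently the process $Z_{t,k}:=\exp\big(\sum_{\tau\le t}\sum_{x\in X_k,a}\alpha_\tau(x,a)[\widehat\ell_\tau(x,a)-\tfrac{q_\tau(x,a)}{u_\tau(x,a)}\ell_\tau(x,a)]\big)$ is a nonnegative supermartingale with $\mathbb{E}[Z_{0,k}]=1$, so Markov's inequality at level $L/\delta$ yields, with probability at least $1-\delta/L$, the bound $\sum_{t}\sum_{x\in X_k,a}\alpha_t(x,a)[\widehat\ell_t(x,a)-\tfrac{q_t(x,a)}{u_t(x,a)}\ell_t(x,a)]\le \ln(L/\delta)$. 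A union bound over the $L$ layers, followed by summing the $L$ per-layer inequalities, gives the claimed $L\ln\tfrac{L}{\delta}$ with probability at least $1-\delta$. I expect the only delicate points to be verifying the two scalar inequalities ($\log(1+z)\ge\tfrac{2z}{2+z}$ and the concavity distribution of $\alpha_t/2\gamma$) and being careful that the $\mathcal{F}_t$-measurability of $\alpha_t$ and $u_t$ is what legitimizes pulling them outside $\mathbb{E}_t$.
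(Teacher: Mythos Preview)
The paper does not actually prove this lemma: it is stated in Appendix~\ref{app:auxiliary} as an auxiliary result imported verbatim from \cite{JinLearningAdversarial2019}, with no accompanying argument. Your proposal correctly reconstructs the standard proof, which is exactly the Neu implicit-exploration supermartingale argument \cite{neu2015explore} specialized to the layered MDP structure (one bandit per layer, union bound over the $L$ layers), and all the steps you outline---the scalar inequality $\log(1+z)\ge \tfrac{2z}{2+z}$, the concavity trick to absorb $\alpha_t/(2\gamma)\in[0,1]$, the single-visit collapse of the sum of logs, and the Markov/union-bound finish---are sound and match the original derivation in \cite{JinLearningAdversarial2019}.
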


Following the analysis of Lemma~\ref{lem:alpha_jin}, with $\alpha_t(x,a)=2\gamma \mathbbm{1}_t(x,a)$ and union bound, the following corollary holds.
\begin{corollary}\cite{JinLearningAdversarial2019}
	\label{lem:alpha_jin_cor}
	With probability at least $1-\delta$:
	\begin{equation*}
		\sum_{t=1}^T\left(\widehat{\ell}_t(x, a)-\frac{q_t(x, a)}{u_t(x, a)} \ell_t(x, a)\right) \leq \frac{1}{2 \gamma} \ln \left(\frac{|X||A|}{\delta}\right).
	\end{equation*}
\end{corollary}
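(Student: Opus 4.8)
The plan is to prove the corollary as a single-state-action-pair specialization of the exponential-moment (implicit-exploration) argument that underlies Lemma~\ref{lem:alpha_jin}, followed by a union bound over $X\times A$. Throughout I would fix a pair $(x,a)$ and condition on the filtration $\mathcal{F}_{t-1}$ generated by the first $t-1$ episodes; under this conditioning the quantities $u_t(x,a)$, $q_t(x,a)$ and $\ell_t(x,a)$ are all determined (the policy $\pi_t$, the loss $\ell_t$, and the confidence set are chosen before episode $t$), whereas $\mathbbm{1}_t\{x,a\}$ is a Bernoulli random variable with mean exactly $q_t(x,a)=q^{P,\pi_t}(x,a)$.

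The first step is the one-step supermartingale inequality
\[
\mathbb{E}_{t-1}\!\left[\exp\!\left(2\gamma\,\widehat\ell_t(x,a)-2\gamma\,\tfrac{q_t(x,a)}{u_t(x,a)}\,\ell_t(x,a)\right)\right]\le 1 .
\]
Since $\widehat\ell_t(x,a)=\tfrac{\ell_t(x,a)}{u_t(x,a)+\gamma}\mathbbm{1}_t\{x,a\}$, setting $c:=\tfrac{2\gamma\,\ell_t(x,a)}{u_t(x,a)+\gamma}$ (which is $\mathcal{F}_{t-1}$-measurable and lies in $[0,2)$ because $\ell_t\le 1$) and using $e^{c\,\mathbbm{1}_t\{x,a\}}=1+(e^{c}-1)\mathbbm{1}_t\{x,a\}$, the conditional expectation becomes $e^{-2\gamma q_t\ell_t/u_t}\big(1+(e^{c}-1)q_t(x,a)\big)$. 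Bounding $e^{2\gamma q_t\ell_t/u_t}\ge 1+2\gamma\tfrac{q_t\ell_t}{u_t}$ reduces the claim to the elementary implicit-exploration inequality $e^{c}-1\le c\,(1+\gamma/u_t(x,a))=\tfrac{2\gamma\ell_t}{u_t}$, which is exactly where the additive shift $+\gamma$ in the denominator of the estimator is used. This is the same per-coordinate estimate proved (in aggregated, layer-summed form) inside Lemma~\ref{lem:alpha_jin}, instantiated here with the maximal exploration budget $\alpha_t(x,a)=2\gamma$ and \emph{without} summing over layers, which is precisely what avoids the extra $L$ factor and yields the sharper $\tfrac{1}{2\gamma}$ scaling.

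Given the one-step bound, the process $W_\tau:=\exp\big(\sum_{t\le\tau}2\gamma(\widehat\ell_t(x,a)-\tfrac{q_t(x,a)}{u_t(x,a)}\ell_t(x,a))\big)$ is a nonnegative supermartingale with $\mathbb{E}[W_\tau]\le 1$, so Markov's inequality gives, for each fixed $(x,a)$ and any $\delta'\in(0,1)$,
\[
\sum_{t=1}^T\left(\widehat\ell_t(x,a)-\tfrac{q_t(x,a)}{u_t(x,a)}\ell_t(x,a)\right)\le\frac{1}{2\gamma}\ln\frac{1}{\delta'}
\]
with probability at least $1-\delta'$. Taking a union bound over all $|X||A|$ pairs with $\delta'=\delta/(|X||A|)$ then produces the simultaneous bound $\tfrac{1}{2\gamma}\ln\tfrac{|X||A|}{\delta}$ asserted in the corollary.

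I expect the main obstacle to be the one-step inequality $e^{c}-1\le c\,(1+\gamma/u_t(x,a))$ across the full range $c\in[0,2)$: the crude estimate $\log(1+y)\le y$ only settles the regime where $c$ is small, and the sharp version requires the standard elementary inequality $\tfrac{z}{1+z}\le\log(1+z)$ from Neu's implicit-exploration analysis, together with the optimism property $q_t(x,a)\le u_t(x,a)$ (valid under the clean event), which is what makes $\tfrac{q_t}{u_t}\ell_t\le\ell_t$ an admissible dominating term.
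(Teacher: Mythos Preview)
Your approach is correct and is exactly what the paper (following Jin et al.) intends: specialize the exponential-moment supermartingale argument behind Lemma~\ref{lem:alpha_jin} to a single pair $(x,a)$---thereby avoiding the layer sum and the extra factor $L$---apply Markov's inequality, and take a union bound over $X\times A$.

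Two small corrections to your closing ``obstacle'' paragraph. First, the elementary inequality that cleanly closes the one-step bound is $\ln(1+z)\ge z/(1+z/2)$, not the weaker $z/(1+z)\le\ln(1+z)$: with $z=2\gamma\ell_t/u_t$, and using $\ell_t\le 1$ to pass from $u_t+\gamma$ to $u_t+\gamma\ell_t$, one gets directly $2\gamma\widehat\ell_t(x,a)\le\ln\!\big(1+\tfrac{2\gamma\ell_t\mathbbm{1}_t}{u_t}\big)$, hence $\mathbb{E}_{t-1}[e^{2\gamma\widehat\ell_t}]\le 1+2\gamma q_t\ell_t/u_t\le e^{2\gamma q_t\ell_t/u_t}$. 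Second, the optimism property $q_t(x,a)\le u_t(x,a)$ is \emph{not} used anywhere in the proof of this corollary; it enters only later, when the corollary is applied.
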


Furthermore, when $\pi_t\leftarrow \widehat{q}_t$, the following lemma holds.

\begin{lemma} \cite{JinLearningAdversarial2019} \label{lem:bias1_jin} With probability at least $1-7 \delta$,
	$$
	\sum_{t=1}^T\left(\ell_t-\widehat \ell_t\right)^\top \widehat q_t\leq\mathcal{O}\left(L|X| \sqrt{|A| T \ln \left(\frac{T|X||A|}{\delta}\right)}+\gamma|X||A| T\right).
	$$
\end{lemma}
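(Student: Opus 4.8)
The plan is to reproduce the implicit-exploration bias analysis of~\citet{JinLearningAdversarial2019}, crucially exploiting that in this regime $\pi_t=\pi^{\widehat q_t}$, so both the played occupancy $q_t=q^{P,\pi_t}$ and the projected one $\widehat q_t$ are dominated by the upper occupancy bound $u_t$. First I would split the target sum around the conditional expectation $\mathbb{E}_t[\widehat\ell_t(x,a)]=\tfrac{q_t(x,a)}{u_t(x,a)+\gamma}\ell_t(x,a)$ (the only randomness in $\widehat\ell_t$ given $\mathcal{F}_t$ is $\mathbbm{1}_t\{x,a\}$, with $\mathbb{E}_t[\mathbbm{1}_t\{x,a\}]=q_t(x,a)$), obtaining
\begin{align*}
\sum_{t=1}^T(\ell_t-\widehat\ell_t)^\top\widehat q_t = \underbrace{\sum_{t=1}^T\sum_{x,a}\widehat q_t(x,a)\ell_t(x,a)\Big(1-\tfrac{q_t(x,a)}{u_t(x,a)+\gamma}\Big)}_{(\mathrm{A})} + \underbrace{\sum_{t=1}^T\sum_{x,a}\widehat q_t(x,a)\big(\mathbb{E}_t[\widehat\ell_t(x,a)]-\widehat\ell_t(x,a)\big)}_{(\mathrm{B})}.
\end{align*}

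For term $(\mathrm{A})$ I would invoke two monotonicity facts valid under the clean event $\mathcal{E}^{\Delta}(\delta)$: since the transition induced by $\widehat q_t$ belongs to the confidence set defining $u_t$, one has $\widehat q_t(x,a)\le u_t(x,a)$; and since $P\in\mathcal{P}_t$ by Lemma~\ref{lem:confidenceset}, one has $q_t(x,a)\le u_t(x,a)$. The first gives $\widehat q_t/(u_t+\gamma)\le 1$, the second makes the bracket nonnegative, and with $\ell_t\le 1$ these collapse $(\mathrm{A})$ to $\sum_{t,x,a}\big(u_t(x,a)-q_t(x,a)\big)+\gamma|X||A|T$. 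The first sum is exactly the aggregate gap between the upper occupancy bound and the true occupancy, which I would bound by $\mathcal{O}(L|X|\sqrt{|A|T\ln(T|X||A|/\delta)})$ via Lemma~\ref{lem:transition_jin}, while the second term produces the $\gamma|X||A|T$ contribution.

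For term $(\mathrm{B})$ I would observe that $Z_t:=\sum_{x,a}\widehat q_t(x,a)\big(\mathbb{E}_t[\widehat\ell_t(x,a)]-\widehat\ell_t(x,a)\big)$ is a martingale difference with respect to $\{\mathcal{F}_t\}$. Using $\widehat q_t(x,a)\widehat\ell_t(x,a)=\tfrac{\widehat q_t(x,a)}{u_t(x,a)+\gamma}\ell_t(x,a)\mathbbm{1}_t\{x,a\}\le \mathbbm{1}_t\{x,a\}$ together with $\sum_{x,a}\mathbbm{1}_t\{x,a\}=L$ (and the analogous bound for its expectation), I get $|Z_t|\le 2L$, so Azuma--Hoeffding yields $(\mathrm{B})\le \mathcal{O}(L\sqrt{T\ln(1/\delta)})$ with probability at least $1-\delta$, which is dominated by the $(\mathrm{A})$ estimate. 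A union bound over the event of Lemma~\ref{lem:transition_jin} (probability $1-6\delta$, which already subsumes the clean event $\mathcal{E}^{\Delta}(\delta)$ needed in $(\mathrm{A})$) and the Azuma bound then gives the claim with probability at least $1-7\delta$.

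The step I expect to be the main obstacle is making the reduction of $(\mathrm{A})$ rigorous: the upper occupancy bound $u_t(x,a)$ maximizes $q^{\overline P,\pi_t}(x,a)$ over $\overline P\in\mathcal{P}_t$ \emph{separately} for each target pair $(x,a)$, whereas Lemma~\ref{lem:transition_jin} is phrased for a fixed collection $\{P^x_t\}_{x\in X}$ of transition functions. I would resolve this exactly as in~\citet{JinLearningAdversarial2019}, taking $P^x_t$ to be the occupancy-maximizing transition associated with state $x$ and bounding $\sum_{x,a}\big(u_t(x,a)-q_t(x,a)\big)\le \sum_{x,a}\big|q^{P^x_t,\pi_t}(x,a)-q^{P,\pi_t}(x,a)\big|$, after which the lemma applies verbatim; the remaining manipulations are routine. (One could alternatively route $(\mathrm{B})$ through Corollary~\ref{lem:alpha_jin_cor} instead of Azuma, but the martingale argument above is self-contained.)
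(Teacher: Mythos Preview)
The paper does not provide its own proof of this lemma; it is simply cited from~\citet{JinLearningAdversarial2019} as an auxiliary result (Appendix~\ref{app:auxiliary}), under the hypothesis $\pi_t\leftarrow\widehat q_t$ that you correctly identify. Your proposal reconstructs the standard implicit-exploration bias analysis from that reference: the decomposition into the deterministic bias term $(\mathrm{A})$ and the martingale term $(\mathrm{B})$, the reduction of $(\mathrm{A})$ via $\widehat q_t\le u_t$ and $q_t\le u_t$ to $\sum_{t,x,a}(u_t-q_t)+\gamma|X||A|T$, and the appeal to Lemma~\ref{lem:transition_jin} with the per-state maximizing transitions $P_t^x$ are exactly how Jin et~al.\ proceed, and the probability bookkeeping ($6\delta$ from Lemma~\ref{lem:transition_jin} plus one Azuma event) matches the stated $1-7\delta$. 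The argument is correct as written.
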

We notice that $\pi_t\leftarrow \widehat{q}_t$ holds only for Algorithm~\ref{alg: Cumulatively Safe Policy Search with Unknown Transitions}, since in Algorithm~\ref{alg: Strictly Safe Policy Search with Unknown Transitions}, $\pi_t\leftarrow\widehat{q}_t$ with probability $1-\lambda_{t-1}$.

\subsection{Auxiliary lemmas for online mirror descent}

We will employ the following results for OMD (see, \citet{Orabona}) with uniform initialization over the estimated occupancy measure space. 

\begin{lemma}\cite{JinLearningAdversarial2019} \label{lem:OMD_jin} The OMD update with $\widehat{q}_1\left(x, a, x^{\prime}\right)=\frac{1}{\left|X_k\right||A|\left|X_{k+1}\right|}$ for all $k<L$ and $\left(x, a, x^{\prime}\right) \in X_k \times A \times X_{k+1}$, and
	$$
	\widehat{q}_{t+1}=\arg \min_{q \in \Delta\left(\mathcal{P}_{t}\right)} \widehat{\ell}_t^{\ \top} q +\frac{1}{\eta}D\left(q \| \widehat{q}_t\right),
	$$
	where $D\left(q \| q^{\prime}\right)=\sum_{x, a, x^{\prime}} q\left(x, a, x^{\prime}\right) \ln \frac{q\left(x, a, x^{\prime}\right)}{q^{\prime}\left(x, a, x^{\prime}\right)}-\sum_{x, a, x^{\prime}}\left(q\left(x, a, x^{\prime}\right)-q^{\prime}\left(x, a, x^{\prime}\right)\right)$ ensures
	$$
	\sum_{t=1}^T \widehat{\ell}_t^{\ \top} (\widehat{q}_t-q)\leq \frac{L \ln \left(|X|^2|A|\right)}{\eta}+\eta \sum_{t, x, a} \widehat{q}_t(x, a) \widehat{\ell}_t(x, a)^2,
	$$
	for any $q \in \cap_t \Delta\left(\mathcal{P}_t\right)$, as long as $\widehat{\ell}_t(x, a) \geq 0$ for all $t, x, a$.
	
\end{lemma}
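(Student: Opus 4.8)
The plan is to recognize this as the textbook regret bound for \emph{online mirror descent} with the (unnormalized) negative-entropy mirror map, and to carry out the standard Bregman-divergence analysis adapted to the layered occupancy polytope. First I would observe that $D(\cdot\|\cdot)$ is exactly the Bregman divergence generated by $\psi(q)=\sum_{x,a,x'}\big(q(x,a,x')\ln q(x,a,x')-q(x,a,x')\big)$, so that $\nabla\psi(q)=\ln q$ componentwise, and that the update splits into an \emph{unconstrained} exponential step followed by a Bregman projection: setting $\tilde q_{t+1}(x,a,x')=\widehat q_t(x,a,x')e^{-\eta\widehat\ell_t(x,a)}$ solves the unconstrained minimization (its gradient condition is $\ln\tilde q_{t+1}=\ln\widehat q_t-\eta\widehat\ell_t$), and then $\widehat q_{t+1}=\arg\min_{q\in\Delta(\mathcal P_t)}D(q\|\tilde q_{t+1})$ is the Bregman projection onto the convex set $\Delta(\mathcal P_t)$. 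This is precisely the decomposition realized by the algorithm (Line~\ref{alg3:line10}).

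The core per-round step would use the three-point identity for Bregman divergences with $a=q$, $b=\widehat q_t$, $c=\tilde q_{t+1}$, namely
\[
D(q\|\tilde q_{t+1})=D(q\|\widehat q_t)+D(\widehat q_t\|\tilde q_{t+1})+\big\langle\nabla\psi(\widehat q_t)-\nabla\psi(\tilde q_{t+1}),\,q-\widehat q_t\big\rangle.
\]
Since $\nabla\psi(\widehat q_t)-\nabla\psi(\tilde q_{t+1})=\eta\widehat\ell_t$, this rearranges to $\eta\,\widehat\ell_t^{\ \top}(\widehat q_t-q)=D(q\|\widehat q_t)-D(q\|\tilde q_{t+1})+D(\widehat q_t\|\tilde q_{t+1})$. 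Applying the generalized Pythagorean inequality $D(q\|\tilde q_{t+1})\ge D(q\|\widehat q_{t+1})$ (valid because $q\in\Delta(\mathcal P_t)$ lies in the convex projection set — this is exactly where the hypothesis $q\in\cap_t\Delta(\mathcal P_t)$ is used), then summing over $t$, telescoping the $D(q\|\widehat q_t)-D(q\|\widehat q_{t+1})$ terms, and dropping the nonnegative terminal divergence, gives
\[
\eta\sum_{t=1}^T\widehat\ell_t^{\ \top}(\widehat q_t-q)\le D(q\|\widehat q_1)+\sum_{t=1}^T D(\widehat q_t\|\tilde q_{t+1}).
\]

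It then remains to bound the two surviving terms. For the initialization term, since $\widehat q_1$ is uniform on each layer and any valid $q$ satisfies $\sum_{x\in X_k,a,x'\in X_{k+1}}q(x,a,x')=1$ for every $k$, the linear part of $D(q\|\widehat q_1)$ cancels (both masses equal $L$), and the entropy part reduces to $\sum_k\big(\sum_{\text{layer }k}q\ln q+\ln(|X_k||A||X_{k+1}|)\big)\le\sum_k\ln(|X_k||A||X_{k+1}|)\le L\ln(|X|^2|A|)$, using nonnegativity of per-layer entropy and $|X_k|,|X_{k+1}|\le|X|$. For the stability term, I would use that $\widehat\ell_t$ depends only on $(x,a)$, so summing over $x'$ collapses the occupancy to $\widehat q_t(x,a)$, giving $D(\widehat q_t\|\tilde q_{t+1})=\eta\sum_{x,a}\widehat q_t(x,a)\widehat\ell_t(x,a)-\sum_{x,a}\widehat q_t(x,a)\big(1-e^{-\eta\widehat\ell_t(x,a)}\big)$; the elementary bound $1-e^{-z}\ge z-\tfrac{z^2}{2}$ for $z\ge0$ then yields $D(\widehat q_t\|\tilde q_{t+1})\le\tfrac{\eta^2}{2}\sum_{x,a}\widehat q_t(x,a)\widehat\ell_t(x,a)^2$. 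Dividing the telescoped bound by $\eta$ and combining the two estimates delivers the claim, in fact with the sharper constant $\eta/2$. The main obstacle is precisely this stability term: it is the one place where the hypothesis $\widehat\ell_t(x,a)\ge0$ is genuinely needed, since the second-order expansion $1-e^{-z}\ge z-z^2/2$ holds only for $z\ge0$, and one must also correctly handle the mismatch that the loss indexes $(x,a)$ while the divergence lives on triples $(x,a,x')$.
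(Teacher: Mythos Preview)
Your proof is correct and follows the standard OMD analysis with the unnormalized negative-entropy mirror map. Note that the paper does not actually prove this lemma: it is listed in Appendix~\ref{app:auxiliary} as an auxiliary result quoted verbatim from \citet{JinLearningAdversarial2019}, so there is no ``paper's own proof'' to compare against. Your derivation (three-point identity, generalized Pythagorean inequality using $q\in\cap_t\Delta(\mathcal P_t)$, the layerwise entropy bound on $D(q\|\widehat q_1)$, and the second-order bound $1-e^{-z}\ge z-z^2/2$ for the local term) is exactly the argument used in the source reference, and you correctly identify where nonnegativity of $\widehat\ell_t$ enters.
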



\end{document}